\documentclass[final,12pt]{colt2026} 

\usepackage{tikz}
\usepackage{amsmath}
\usetikzlibrary{arrows.meta, positioning}
\usetikzlibrary{calc}

\usepackage{etoc}
\usepackage{graphicx}
\usepackage{hyperref}
\usepackage{booktabs}
\usepackage[tikz]{bclogo}
\usepackage{xcolor}
\usepackage{fancybox}
\usepackage[utf8]{inputenc}
\usepackage{natbib}
\usepackage[space]{grffile}
\usepackage[T1]{fontenc}
\usepackage{newtxtext}
\usepackage[english]{babel}
\let\coltvec\vec
\let\vec\hat
\usepackage{accents}
\let\vec\coltvec\usepackage{caption}
\usepackage{mathtools}
\usepackage[capitalize]{cleveref}
\usepackage{bm}
\usepackage{algorithm}
\usepackage{algorithmic}
\usepackage{bbm}
\usepackage{lipsum}
\usepackage{makecell}

\usepackage{mathabx}
\usepackage{algorithm2e}
\RestyleAlgo{ruled}
\SetAlgoVlined

\DeclareMathOperator*{\argmax}{arg\,max}

\title[Entropic Best Policy Identification]{Tight Sample Complexity Bounds for \\
Entropic Best Policy Identification}
\usepackage{times}

\coltauthor{%
 \Name{Amer Essakine} \Email{amer.essakine@ens-paris-saclay.fr}\\
 \addr ENS Paris Saclay
 \AND
 \Name{Claire Vernade} \Email{claire.vernade@utn.de}\\
 \addr University of Technology Nuremberg %
}

\begin{document}

\maketitle

\begin{abstract}%
 We study best-policy identification for finite-horizon risk-sensitive reinforcement learning under the entropic risk measure. Recent work established a constant gap in the exponential horizon dependence between lower and upper bounds on the number of samples required to identify an approximately optimal policy. Precisely, known lower bounds scale in $\Omega(e^{|\beta| H})$ where $H$ is the horizon of the MDP, while the state-of-the-art upper bound achieves at best $O(e^{2|\beta| H})$ \citep{MortensenTalebi2025} using a generative model. We show that this extra exponential factor can be traced to overly loose concentration control for exponential utilities. To close this open gap, we revisit the analysis of this problem through a forward-model based algorithm building on KL-based exploration bonuses that we adapt to the entropic criterion. The improvement we get is due to two main novel technical innovations. We leverage the smoothness properties of the exponential utility to derive sharper concentration bounds, and we propose a new stopping rule that exploits further this tightness to obtain a sample complexity that matches the lower bound.
\end{abstract}

\begin{keywords}%
  Entropic risk measure, Risk-sensitive reinforcement learning, Best policy identification
\end{keywords}

\section{Introduction}

Risk-sensitive Reinforcement Learning (RL) studies the problem of learning a policy whose return distribution, rather than only its expectation, satisfies desirable properties, typically with respect to downside risk or tail events. In many applications ranging from finance to robotics \citep{charpentier2021reinforcement,polydoros2017survey}, it may be more important for a decision maker to certify with high confidence that the return does not fall below a critical threshold than to maximize the expected reward.

A prominent example of a dynamically consistent risk criterion is the Entropic Risk Measure \citep{marthe2023beyond,HowardMatheson1972}, which generalizes the expectation through an exponential utility function parameterized by a scalar $\beta \in \mathbb{R}$. The sign and magnitude of $\beta$ control the risk sensitivity of the decision maker, interpolating between risk-seeking and risk-averse behaviors. Crucially, the entropic risk measure satisfies a risk-sensitive dynamic programming principle, yielding a Bellman-type recursion for finite-horizon Markov Decision Processes (MDPs) \citep{sutton2018reinforcement}. As shown by \citet{marthe2023beyond}, this property in fact characterizes the entropic family among utility-based risk measures satisfying dynamic consistency.

In RL, the MDP is unknown and the learner must rely on sampled trajectories to estimate the model and plan under uncertainty. \citet{MortensenTalebi2025} study this problem in a discounted infinite horizon setting and show that the sample complexity of identifying an $\epsilon$-optimal policy must depend exponentially on the horizon. More precisely, for a horizon $H$ and entropic parameter $\beta$, they prove a lower bound showing that any algorithm requires at least
\[
\Omega\!\left(\exp\!\left(|\beta|H\right)\right)
\]
samples (up to polynomial factors in the number of states $S$, actions $A$, and $1/\epsilon$) to identify an $\epsilon$-optimal policy. In the same work, the authors derive an upper bound in $
\exp\!\left(2|\beta|H\right)$ 
using access to a generative model and a careful application of the simulation lemma (see Section~\ref{sec:setting_background} for precise statements). While this gap may look benign at first, we show that this constant term in the exponential is due to a fundamentally loose handling of exponential utilities in exploration bonuses. 

Beyond simply closing this open gap, we provide a new set of tools to design algorithms for risk-sensitive RL. Our algorithm builds on the forward-model framework \citep{Menard2021Fast} and tailors the exploration bonuses and stopping-time to the entropic risk measure. 
We also revisit previous analyses and extract a problem-dependent quantity, the \emph{maximal achievable reward} $G_{\max}$, that gives a sharper dependence of the bounds on the MDP than directly using its upper bound $H$. 


The main contributions of the paper are the following:
\begin{itemize}
    \item First, we derive a lower bound for the best policy identification problem for the forward model. To the best of our knowledge, this is the first lower bound for BPI in this setting. The lower bound is expressed in terms of the maximal achievable reward $G_{\max}$  that we argue is better suited for the entropic risk measure.
    \item We present \textsc{Entropic-BPI}, an algorithm that outputs a policy $\pi$ that achieves $(\varepsilon,\delta)$-PAC with optimal sample complexity 
    up to logarithmic factors and up to $e^{2\max\{0,\beta\}}$ which is a constant for $\varepsilon$ small enough. The exponential dependence is $e^{|\beta| G_{\text{max}}(\mathcal{M})} \leq e^{|\beta| H}$ which removes the extra exponential factor with respect to previous work. 
\end{itemize}






\section{Risk-Sensitive Episodic Reinforcement Learning}
\label{sec:setting_background}
We consider a finite-horizon MDP $\mathcal{M} = \bigl(\mathcal S, \mathcal A, H, \{r_h\}_{h=1}^H, \{p_h\}_{h=1}^H\bigr)$ where $\mathcal S$ and $\mathcal A$ denote the finite state and action spaces respectively, $H \in \mathbb N$ is the fixed horizon length, $p_h(\cdot \mid s,a)$ is the transition kernel at step $h$ for each $h \in \{1,\dots,H\}$, and $r_h : \mathcal S \times \mathcal A \to [0,1]$ is the deterministic reward function at step $h$, which we assume to be bounded in $[0,1]$.

In the learning problem, the transition kernels $\{p_h\}_{h=1}^H$ are unknown to the learner. Through interactions, the learner must devise a policy $\pi$ that is a (possibly non-stationary) mapping prescribing which action to take in each state and step, with the goal of maximizing a suitable performance criterion. In the literature, this problem is addressed under two distinct interaction models.
\begin{itemize}
    \item \textbf{Generative model} \citep{ICML2012Gheshlaghi_638,KearnsMansourNg2002}: In the generative model (also called a sample oracle), the learner has access to an oracle such that for any query $(s,a,h) \in \mathcal S \times \mathcal A \times \{1,\dots,H\}$, the oracle returns a sample $(r,s')$ where $r = r_h(s,a)$ and $s' \sim p_h(\cdot \mid s,a)$. Each call to the oracle is independent of the past, and the learner may choose its queries adaptively based on all previously observed samples.
    \item \textbf{Online (forward) model} \citep{Dann2015,StrehlLiLittman2009}: In the forward interaction model, the learner interacts with the MDP over episodes. At the beginning of each episode $t$, an initial state $s_1^t$ is drawn from a fixed initial distribution $\mu_1$ on $\mathcal S$. At each step $h \in \{1,\dots,H\}$ of episode $t$, the learner observes the current state $s_h^t$, selects an action $a_h^t \in \mathcal A$ according to some policy $\pi_t$, then receives a reward $r_h(s_h^t,a_h^t)$ and the environment goes to the next state $ s_{h+1}^t \sim p_h(\cdot \mid s_h^t, a_h^t)$.
\end{itemize}
In this work, we focus exclusively on the online (forward) interaction model. Moreover, rather than the standard risk-neutral objective where we optimize the total expected reward along the trajectory, we adopt a risk-sensitive performance criterion based on the entropic risk measure which reflects the agent’s attitude toward uncertainty, allowing for both risk-seeking and risk-averse behavior.

\paragraph{Policy \& value functions} A deterministic policy $\pi$ is a collection of functions $\pi_h : \mathcal S \to \mathcal A$
for all $h \in \{1,...,H\}$, where every $\pi_h$ maps each state to a single action.
Let $(S_h,A_h)_{h=1}^H$ be the random trajectory induced by the policy $(\pi_h)_{h \in \{1,...,H\}}$ and the MDP dynamics,
i.e., $A_h = \pi_h(S_h)$ and $S_{h+1}\sim p_h(\cdot\mid S_h,A_h)$ starting from a fixed initial state $s_1$\footnote{As explained in \citep{Fiechter1994}, if the first state is sampled randomly as $s_1 \sim p_0$, we simply add an artificial first state $s_0$ such that for any action $a$, the transition probability is defined as the distribution $p_0(s_0, a) \triangleq p_0$. This augments the state space by one and the horizon by one, and the bounds carry over with only this constant-size modification.}. Define the (random) cumulative return from step $h$ by:
$$
R_h^\pi = \sum_{i=h}^H r_i(S_i,A_i)
$$
The entropic value function of $\pi$, denoted by $V_h^\pi$ is defined as:
$$
V_h^\pi(s)
\triangleq
\frac{1}{\beta}
\log \mathbb E\left[
\exp\left(
\beta \sum_{i = h}^H r_{i}(s_{i}, a_{i})
\right)
\mid S_h = s
\right]
$$
where $a_{i} \triangleq \pi_{i}(s_{i})$ and
$s_{i+1} \sim p_{i}(\cdot \mid s_{i}, a_{i})$

The optimal entropic value functions are defined as
$
V_h^\star(s) \triangleq \sup_{\pi} V_h^\pi(s)$ for $h \in [H]$. 
As for the expected return, both $V_h^\pi$ and $V_h^\star$ satisfy the Bellman equation \citep{borkar2001,sutton2018reinforcement,marthe2023beyond} and can thus be computed efficiently.

To simplify notation, we introduce a couple of operators that allow us to write concisely the effect of transition kernels or policies on our functionals of interest. 
For any bounded function $f : \mathcal S \to \mathbb R$, we denote by
$
(p_h f)(s,a) \triangleq \mathbb E_{S' \sim p_h(\cdot \mid s,a)}[f(S')]
$ the action of the Markov kernel $p_h$ on $f$. For any function $g : \mathcal S \times \mathcal A \to \mathbb R$ and (possibly stochastic) policy $\pi_h$, we write
$(\pi_h g)(s) \triangleq \mathbb E_{A \sim \pi_h(\cdot \mid s)}[g(s,A)]$ for the composition with the policy at step $h$.
Finally, we introduce a useful quantity that appears naturally in the analysis of the sample complexity.

\begin{definition}[Maximum achievable reward in a trajectory]\label{Gmax}
For an episodic MDP $\mathcal{M}$, 
define the maximal achievable return (a deterministic quantity depending only on $\mathcal{M}$) as
$$
G_{\max}(\mathcal{M}) = \sup_{\pi}\ \sup_{\omega}\ R_1^\pi(\omega)
$$
where $\omega$ covers all sources of randomness (initial state, transitions, policy’s own randomization).
\end{definition}
Intuitively, $G_{\max}(\mathcal{M})$ is the largest total reward that can occur in a single episode under the most favorable sequence of outcomes. We will express both the lower bound and the upper bound in the quantity $G_{\max}(\mathcal{M})$ rather than the horizon in previous works. It is the natural scale for the entropic risk measure objective where the hardness comes from the exponential amplification of rewards. Remark that we have the inequality $G_{\max}(\mathcal{M}) \leq H$ so our results can also be expressed (more loosely) as a function of the horizon.

 \paragraph{\textbf{Empirical MDP.}}
Let $(s_h^t, a_h^t, s_{h+1}^t)\in \mathcal{S}\times\mathcal{A}\times \mathcal{S}$ be a tuple observed by the algorithm at step $h$ of episode $t$.
For any $h \in \{1,...,H\}$ and $(s,a)\in \mathcal{S}\times\mathcal{A}$, define the visitation counts
$$
n_h^t(s,a) \triangleq \sum_{i=1}^{t} \mathbbm{1}\{(s_h^i,a_h^i)=(s,a)\}
\qquad
n_h^t(s,a,s') \triangleq \sum_{i=1}^{t} \mathbbm{1}\{(s_h^i,a_h^i,s_{h+1}^i)=(s,a,s')\}
$$
These quantities induce the empirical transition probabilities
$$
\widehat{p}_h^{\,t}(s' | s,a) \triangleq
\begin{cases}
\dfrac{n_h^t(s,a,s')}{n_h^t(s,a)} & \text{if } n_h^t(s,a)>0,\\[6pt]
\dfrac{1}{|\mathcal{S}|} & \text{otherwise}
\end{cases}
$$
We denote $\overline{n}_h^t(s,a) = \mathbb{E}[n_h^t(s,a)]$ the expected number of visits, which is called the pseudo-counts.

\paragraph{Best Policy Identification (BPI) under entropic risk}
Our objective is to identify a (near) optimal policy with high probability. In each episode $t$, the agent follows a policy $\pi^{t}$ (the \emph{sample rule}) based only on the information collected up to and including episode $t-1$. 
At the end of each episode, the agent can decide to stop collecting data according to a \emph{stopping rule} (we denote by $\tau$ its random stopping time) and outputs a guess $\hat{\pi}$ for the optimal policy.

A BPI algorithm is therefore made of a triple $\big((\pi^{t})_{t\in\mathbb{N}},\tau,\hat{\pi}\big)$. 
The goal is to build an $(\varepsilon,\delta)$-PAC algorithm according to the following definition, for which the sample complexity, that is the number of exploration episodes $\tau$, is as small as possible.

\begin{definition}[PAC algorithm for BPI]\label{BPI}
An algorithm is $(\varepsilon,\delta)$-PAC for best policy identification if it returns a policy $\hat{\pi}$ after some number of episodes $\tau$ that satisfies
$$
\mathbb{P} \left( V^{\star}_{1}(s_{1})-V^{\hat{\pi}}_{1}(s_{1}) \leq \varepsilon \right) \ge 1-\delta 
$$
\end{definition}
\paragraph{Prior bounds. }
For the entropic risk measure, the best policy identification problem is harder\footnote{In the sense that it admits higher lower bounds, see Sec.~\ref{sec:lower_bound}} because it magnifies tail outcomes—inflating high rewards when $\beta>0$ and penalizing low rewards when $\beta<0$. Moreover, as this line of work is relatively recent and most of the results aim to establish regret bounds, the results on PAC bounds remain scarce; to our knowledge, there are no reward-free algorithms \citep{Jin2020RewardFree,Kaufmann2021ALT} with theoretical guarantees specifically tailored to this criterion.\\

\paragraph{Sample complexity with the generative model. } In the discounted infinite-horizon, \citet{MortensenTalebi2025} proved a lower bound on the number of oracle calls needed for an $\varepsilon-$optimal policy. In particular, they showed that an exponential dependency on the risk parameter $\beta$ and on the horizon $H$ is unavoidable: Any algorithm that outputs an $\varepsilon$-optimal policy with probability at least $1-\delta$ must make at least $ \Omega\Bigg(\frac{SA\gamma^{2}}{c_{1}\varepsilon^{2}} \frac{e^{|\beta|^{\frac{1}{1-\gamma}}}-3}{|\beta|^{2}} \log\left(\frac{S}{c_{2}\delta}\right)\Bigg) $ calls to the oracle. They also provided two model-based algorithms with respective sample complexity $\tilde{\mathcal{O}}\Bigg(\frac{SA\gamma^{2}}{c_{1}(1-\gamma)^4\varepsilon^{2}} \frac{\Big(e^{2|\beta|{\frac{1}{1-\gamma}}}-1\Big)^2}{|\beta|^{2}} \log\left(\frac{SA}{c_{2}\delta}\right)\Bigg) \quad$ 
This gives the first explicit sample complexity characterization for the entropic risk measure objective with a generative model. It can be mapped back to our finite-horizon setting using $H = \frac{1}{1-\gamma}$, the effective horizon.

\paragraph{Forward model:} There are no known BPI sample-complexity bounds for the entropic risk measure in this model. However, there exist bounds on the regret, a more forgiving metric that sums $V^{\pi_t} - V^*$ over episodes that can be connected back to BPI as we explain next.  The first non-asymptotic results are for model-free optimistic algorithms RSVI/RSQ \citep{Fei2020RS}, establishing $\tilde{\mathcal{O}}\big(\lambda(|\beta|H^{2})\sqrt{H^{3}S^{2}AT}\big)$ (RSVI) and $\tilde{\mathcal{O}}\big(\lambda(|\beta|H^{2})\sqrt{H^{4}SAT}\big)$ (RSQ) regret with $\lambda(u)=(e^{3u}-1)/u$. \citet{FeiYCW21} introduce the exponential Bellman equation and a doubly-decaying bonus, removing an extra $e^{|\beta|H^{2}}$ factor and yielding a regret of 
$\tilde{\mathcal{O}}\Big(\frac{e^{|\beta|H}-1}{|\beta|H}
\sqrt{H^{4} S^{2} A K}
\Big)$. More recently, \citet{hu23b} adapt UCB-ADVANTAGE \citep{zhang2020almost} to the exponential-utility setting and derive a worst-case problem-independent regret bound
$\tilde{\mathcal O}\Big(\frac{e^{|\beta|H}-1}{|\beta|}\sqrt{H^{2}SAK}\Big)$
along with a tighter problem-dependent bound that matches the information-theoretic lower bound up to logarithmic factors on a class of MDPs.

These results are not directly related to our problem, though in general there is a connection between controlling regret and BPI sample complexity. As noticed by \citet{Jin2020RewardFree}, a straightforward approach to convert the regret bound to a sample complexity bound is to output a random policy among the sequence of policies returned by the regret algorithm. This idea was later shown to be outperformed (for the expected return) by \citet{Kaufmann2021ALT}, and the same applies in the case of entropic risk measures. It can be easily checked that a naive conversion of the best regret upper bounds above would yield a sample complexity in $e^{2|\beta| H}$ and a dependence in $\frac{1}{\delta^2}$. This upper bound can thus be considered the best achievable sample complexity so far for this problem.

\section{Lower bound}\label{sec:lower_bound}
Under the entropic risk criterion, the exponential transform $e^{\beta r}$ amplifies tail behavior, placing disproportionate weight on rare, high-return trajectories when $\beta>0$. Therefore, if achieving near-optimal performance requires hitting a ``hard'' state--action--time triple $(s^\star,a^\star,h^\star)$ that is reached with tiny probability, the learner must repeatedly experience these rare transitions to accurately evaluate their contribution since a small number of tail episodes can dominate the entropic objective. As a result, learning becomes exponentially hard in the horizon $H$ and the risk parameter $\beta$.

\begin{theorem}\label{th:lower_bound_theorem}
Fix $S \geq 6, A \geq 2, H \in \mathbb{N}, \beta \in \mathbb{R}^\star$, and $ \delta \in (0, 1/16)$ and $\varepsilon$ small enough (See condition ~\eqref{condition A}). There exists an MDP $\mathcal{M}_0$ with $S$ states, $A$ actions, horizon $H$, and rewards in $[0, 1]$ and nonstationary transitions such that for every algorithm $\mathcal{A}$ outputs a policy $\hat{\pi}$ that is $(\varepsilon, \delta)$-PAC for the entropic risk measure after sampling $\tau$ trajectories we have: 
$$\mathbb{E}_{\mathcal{M}_0}[\tau] \geq \frac{1}{1650} \frac{(e^{|\beta| G_{\max}(\mathcal{M}_0)} - 1)^2}{e^{|\beta| G_{\max}(\mathcal{M}_0)}} \frac{e^{2\min \{\beta,0\}\varepsilon}SAH}{(e^{|\beta| \varepsilon} - 1)^2} \log \left( \frac{1}{\delta} \right)$$
\end{theorem}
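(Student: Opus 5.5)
We will follow the standard change-of-measure template for BPI lower bounds in the forward model, in the style of \citet{Domingues2021Episodic}, adapted to the entropic criterion. The structure is a family of hard MDPs sharing a common skeleton. A binary-tree (or chain) "navigation" part uses roughly $\log_A S$ steps to route the agent deterministically to one of $\Theta(S)$ leaf states. A waiting state then lets the agent choose the time $h\in\{1,\dots,\Theta(H)\}$ at which it leaves. From each leaf $s$ at each time $h$, every action $a$ moves to a "good" absorbing state with probability $p$ and to a "bad" absorbing state with probability $1-p$. The reference MDP $\mathcal M_0$ uses the same $p$ for every triple $(s,a,h)$. The alternative $\mathcal M_{(s^\star,a^\star,h^\star)}$ raises it to $p+\Delta$ for that single triple. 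The good state gives reward $1$ per remaining step and the bad state gives $0$. The reward-$1$ segment has length $\ell$, so along the best trajectory the total reward is $G:=G_{\max}(\mathcal M_0)$. This lets us state everything in terms of $G$ instead of $H$, and we can tune $\ell\le H$ freely. The number of alternatives is $\Theta(SAH)$, which produces the $SAH$ factor.

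The first step is to compute entropic values. A policy that plays $(s,a,h)$ gets value $\frac1\beta\log\big(1+q(e^{\beta G}-1)\big)$, where $q$ is the success probability. An $\varepsilon$-optimal policy in $\mathcal M_{(s^\star,a^\star,h^\star)}$ must therefore pick the perturbed triple with non-negligible probability, provided
\[
\tfrac1\beta\log\frac{1+(p+\Delta)(e^{\beta G}-1)}{1+p(e^{\beta G}-1)}>2\varepsilon .
\]
We solve for the smallest $\Delta$ that achieves this: $\Delta(e^{\beta G}-1)\approx(e^{2\beta\varepsilon}-1)\big(1+p(e^{\beta G}-1)\big)$. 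The second step chooses $p$ to maximize $\mathrm{KL}(p,p+\Delta)^{-1}\approx p(1-p)/\Delta^2$. For $\beta>0$ we take $p\approx e^{-\beta G}$ (the rare-event regime), so that $1+p(e^{\beta G}-1)\approx 2$. For $\beta<0$ we take $p$ close to $1$, with a symmetric computation. In both cases
\[
\frac{p(1-p)}{\Delta^2}\gtrsim \frac{(e^{|\beta|G}-1)^2}{e^{|\beta|G}}\cdot\frac{e^{2\min\{\beta,0\}\varepsilon}}{(e^{|\beta|\varepsilon}-1)^2},
\]
where the $e^{2\min\{\beta,0\}\varepsilon}$ correction comes from converting $e^{2\beta\varepsilon}-1$ into $e^{|\beta|\varepsilon}-1$ when $\beta<0$. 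Condition \eqref{condition A} (``$\varepsilon$ small enough'') is exactly what guarantees $p+\Delta\le 1$ and that the KL divergence is within a constant factor of $\Delta^2/(p(1-p))$, by the standard bound $\mathrm{KL}(p,q)\le (p-q)^2/(q(1-q))$.

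The third step is the change of measure. We use the transportation lemma $\sum_{(s,a,h)}\mathbb E_{\mathcal M_0}[n^\tau_h(s,a)]\,\mathrm{KL}\ge \mathrm{kl}\big(\mathbb P_0(E),\mathbb P_{(s^\star,a^\star,h^\star)}(E)\big)$, applied to the event $E=\{\hat\pi \text{ selects }(s^\star,a^\star,h^\star)\}$. Since only one triple differs, the left side reduces to $\mathbb E_0[n_{h^\star}^\tau(s^\star,a^\star)]\,\mathrm{KL}(p,p+\Delta)$. Under $\mathcal M_0$ all triples are equivalent, so $\hat\pi$ can select at most a few triples with probability above $1/2$. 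For the remaining $\Theta(SAH)$ triples, PAC-ness gives $\mathbb P_0(E)\le 1/2$ and $\mathbb P_{(s^\star,a^\star,h^\star)}(E)\ge 1-\delta$, hence $\mathrm{kl}\ge\log\frac{1}{4\delta}\gtrsim\log(1/\delta)$ because $\delta<1/16$. Summing over these triples, and using that each episode visits the leaf layer exactly once, so that $\sum n^\tau_h(s,a)\le\tau$, gives $\mathbb E_0[\tau]\gtrsim SAH\,\log(1/\delta)\,p(1-p)/\Delta^2$. Plugging in the second step yields the claim, and bookkeeping of the constants (leaf count $\ge S/4$, time window $\ge H/4$, and similar) gives $1/1650$.

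The main obstacle is the calibration in the second step, done uniformly for both signs of $\beta$. We need a single construction in which the entropic gap turns a KL of order $\Delta^2/(p(1-p))$ into exactly $(e^{|\beta|G}-1)^2 e^{-|\beta|G}/(e^{|\beta|\varepsilon}-1)^2$. We also need to check that reserving steps for navigation does not shrink $G_{\max}$ relative to the reward segment, which is why we define the bound through $G_{\max}(\mathcal M_0)$ rather than $H$. We would first work out the $\beta>0$ case with $p=1/(e^{\beta G}+1)$, then obtain $\beta<0$ by swapping the roles of the good and bad outcomes.
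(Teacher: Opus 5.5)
Your proposal is correct and follows the paper's proof essentially step by step. The shared ingredients are the waiting-state/tree instance of Domingues et al.\ with $\Theta(SAH)$ exit--leaf--action triples, a rare-success baseline $p_-\approx e^{-\beta G}$ for $\beta>0$ (rare failure for $\beta<0$), a gap $\Delta$ calibrated so that $\varepsilon$-optimality forces $\hat\pi$ to put mass about $1/2$ on the special triple, the bandit change-of-measure lemma summed over triples, and a $\chi^2$-type bound $\mathrm{kl}(p_-,p_+)\lesssim \Delta^2/(p_-(1-p_-))$. Your $2\varepsilon$ calibration and $p=1/(e^{\beta G}+1)$ differ from the paper's midpoint calibration and $p_-=1/(2e^{\beta G})$ only in constants.

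Two details should be made explicit. First, the reward must live on a fixed absolute time window, as in the paper's $r_h(s,a)=\mathbf 1\{s=s_g\}\mathbf 1\{h\ge \overline H+d+1\}$, so that every triple yields the same return $G$; a literal ``reward $1$ per remaining step'' would favour early exits and break the calibration. Second, the bound on the number of triples with $\mathbb P_0(E_u)>1/2$ comes from $\sum_u \nu_{\hat\pi}(u)=1$, not from the symmetry of $\mathcal M_0$: at most two triples can carry mass above a threshold close to $1/2$, so $\sum_u \mathbb P_0(E_u)\le 2$.
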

\begin{proof}[Sketch of proof]
We follow the hard episodic, stage-dependent MDP class introduced by \citet{domingues2021episodic}. 
At a high level, these instances behave like a single multi-armed bandit with 
$\Theta(HSA)$ arms, where an ``arm'' corresponds to a triple (time, leaf, action). 
The agent starts in a waiting state $s_w$ and can play a special action $a_w$ to remain in $s_w$ up to some stage $\bar H$; once it stops waiting (or after $\bar H$), it is forced to leave $s_w$ and then traverses a full $A$-ary tree deterministically (each action selects the corresponding child), reaching a leaf after $d$ steps. 
From a leaf state $s_i\in\mathcal L$, at stage $h$ and action $a$, the process transitions to an absorbing good state $s_g$ with probability $p_h(s_g|s_i,a)$
and to an absorbing bad state $s_b$ otherwise. Rewards are obtained only in $s_g$ for the rest of the horizon. 

Consequently, achieving the optimal value requires (i) leaving $s_w$ so as to hit the correct stage $h^\star$ at the leaves, (ii) reaching the correct leaf $s_{\ell^\star}$ (via the deterministic actions along the tree), and (iii) playing the correct action $a^\star$ at that leaf; only then does the probability of reaching $s_g$ increase from $p_-$ to $p_+$. We denote $u = (h^\star,l^\star,a^\star)$ for the rest of the proof.

The lower bound proof then compares a reference instance $\mathcal{M}_0$ (where no unique action is optimal) to instances $\mathcal{M}_u$ (where only one triplet has the favorable probability $p_+$), and applies standard change-of-measure arguments to show that distinguishing these close Bernoulli transition models forces a large number of episodes visiting the special triplet. 

The construction of $p_-$ for the entropic risk measure differs from the risk-neutral setting (where $p_- = \frac{1}{2})$ to reflect the hardness induced by the entropic risk measure:
\begin{itemize}
    \item For $\beta >0$, the entropic criterion overweights rare high-return trajectories, so we make success (reaching the good state) rare by choosing $p_- \sim  e^{-\beta H}$
    \item For $\beta<0$, the entropic criterion is especially sensitive to adverse tail events, so we instead make failure rare by choosing $p_- \sim 1 - e^{-|\beta|H}$
\end{itemize}
Then, we chose the gap $\Delta = p_+ - p_-$ so that, in instance $M_u$, any policy that does not identify the special triplet is $\varepsilon$-suboptimal for the entropic objective. A change of measure argument \citep{kauffman2016} then gives the lower bound. The full proof together with details on the MDP construction can be found in Appendix~\ref{lower_bound}
\end{proof}
To the best of our knowledge, this establishes the first lower bound for Best Policy Identification (BPI) in the forward model setting with non-stationary transitions.

When $|\beta|$ goes to $0$, the lower bound approaches:$$\Omega\left( \frac{G_{\max}(\mathcal{M}_0)^2}{\varepsilon^2} SAH \log \left( \frac{1}{\delta} \right)\right) = \Omega\left( \frac{SAH^3}{\varepsilon^2} \log \left( \frac{1}{\delta} \right)\right)$$
The second equality holds because, in the construction of $\mathcal{M}_0$, the maximum return $G_{\max}(\mathcal{M}_0)$ scales linearly with $H$. Thus, we recover the standard lower bound for the risk-neutral case. 

Moreover, for sufficiently small $\varepsilon$, the bound simplifies to:
$$\Omega \left( \frac{SAH}{\varepsilon^2} e^{|\beta|G_{\max}(\mathcal{M}_0)}\right)$$
This matches the lower bound derived by \citep{MortensenTalebi2025} in the generative model setting up to an additional factor of $H$. This extra factor is inherent to the non-stationary transition dynamics of the finite-horizon setting. Another difference is that the exponential dependence in our bound scales with $G_{\max}(\mathcal{M}_0)$ rather than the horizon $H$ (or the effective horizon $\frac{1}{1-\gamma}$). This distinction is intuitive: since the hardness of the entropic risk measure comes from exponentially reweighting trajectory returns, the exponential dependence is naturally governed by the maximum cumulative reward ($G_{\max}$) rather than the length of the episode.\label{explainGmaxlower}


\section{Algorithm and Matching Upper Bound}
\label{sec:algorithm}





We now present an algorithm whose sample complexity matches the lower bound (Theorem~\ref{th:lower_bound_theorem}) up to logarithmic factors. A central difficulty with the entropic risk measure is that, unlike the risk-neutral objective, it is neither additive nor sub-additive, which complicates both algorithm design and analysis. In particular, standard UCB-style approaches rely on concentration inequalities for additive returns, whereas the entropic criterion involves a log-moment generating function and does not directly fit into standard risk-neutral concentration frameworks.

A common workaround is to use the Lipschitz continuity of the logarithm to reduce the analysis to risk-neutral quantities; however, this can yield coarse bounds and, more importantly, may break the dynamic-programming structure. As observed by \citet{FeiYCW21}, losing a Bellman-type recursion can cause error terms to compound over the horizon, leading to substantially worse dependence on $H$ in the order of $e^{2\beta H^2}$. Similarly to \citet{FeiYCW21}, we instead work directly in the exponential space induced by the criterion. This restores a Bellman recursion for the exponentiated value functions and enables sharper control of uncertainty. In particular, Lemma~\ref{entropicvariance} derives Bellman-type variance identities in this space, which are unavailable in the original entropic-value
space due to the lack of sub-linearity\footnote{As shown by \citet{rowland2019statistics}, the variance is a Bellman-closed statistic and can be computed by dynamic programming. We derive this result for the variance of exponential utilities}. This identity highlights that the exponential parameterization is the natural domain for controlling uncertainty under the entropic criterion.

For a policy $\pi$, the exponential transforms of the value and the state-value function are:
$$
Z_h^\pi(s) \triangleq \exp(\beta V_h^\pi(s)),\qquad
U_h^\pi(s,a) \triangleq \exp(\beta Q_h^\pi(s,a)).
$$
We introduce two novel techniques. We adapt the KL-based exploration bonuses introduced in \citep{Menard2021Fast} to the entropic criterion to get bonuses that admit variance-sensitive control in the exponential space defined by these exponential transforms. We also propose an entropic stopping rule that yields sharper horizon dependence, improving over bounds that incur an additional factor of order $H^2e^{|\beta|H}$ \citep{MortensenTalebi2025}.

Similarly to \citet{azar17a}, \citet{Zanette2019} and \citet{Menard2021Fast} we define optimistic (and pessimistic, see \eqref{optimism_equation_positive}) state-value functions for $\beta >0$ on the exponential transform $U_h^\star$ of $Q_h^\star$ (see \eqref{optimism_equation_negative} for $\beta < 0$): fix $\widetilde{U}_{H+1}^t(s,a) = 1$ and recursively,
\begin{align}\label{optimism_positive}
    \widetilde{U}_h^t(s,a) &= \min \Bigg \{e^{\beta(H - h -  1)},e^{\beta r_h(s,a)} \Bigg[ \widehat{p}_h^t \widetilde Z_h^t (s,a) + b_h^t(s,a) + \frac{1}{H} \widehat{p}_h^t\Big( \widetilde{Z}_{h+1}^t - \underaccent{\widetilde}
    {Z}_{h+1}^t\Big)(s,a)\Bigg]\Bigg\}, 
\end{align}
where $\widetilde Z_h^t$ and $\underaccent{\widetilde}
    {Z}_{h+1}^t(s,a)$ denote respectively the optimistic and pessimistic exponential value functions\footnote{We use upper and lower tilde accents to indicate optimism and pessimism.}: (see again \eqref{optimism_equation_positive} for pessimistic versions)
\begin{align}
   \widetilde{Z}_{H+1}^t(s) &= 1, \quad \text{and recursively }\;\; \widetilde{Z}_h^t(s) = \max_{a \in \mathcal{A}} \widetilde{U}(s,a), 
\end{align}
and the bonus term is defined as:
\begin{equation}\label{bonus_positive}
b_h^t(s,a) = 2\sqrt{2}\sqrt{\operatorname{Var}_{\widehat{p}_h^t}(\widetilde Z_{h+1}^t)\frac{\alpha^\star(n_h^t(s,a))}{n_h^t(s,a)}} + 5 e^{\beta  (H - h)} \frac{\alpha(n_h^t(s,a))}{n_h^t(s,a)} +  4 He^{\beta  (H - h)}\frac{\alpha^\star(n_h^t(s,a))}{n_h^t(s,a)}\end{equation}
with the exploration rates $\alpha,\alpha^*$ (see \cref{eq:bonuses} for exact definitions) being of the form  $n,\delta \mapsto O(\log(SAH/\delta) + \log(n))$
and coming directly from the Bernstein inequality. 
As stated before, our algorithm acts greedily on this optimistic value function. Specifically, at step $h$ in episode $t$, given state $s_{h,t}$, \textsc{Entropic-BPI} executes
\begin{align}
    \label{eq:policy_def}
    a_{h,t} = \pi_t(s_{h,t}) = \argmax_{a} \widetilde{U}_h^t(s_{h,t},a)
\end{align}

\paragraph{Entropic certificate}
Following related work, we call certificate an upper bound on the width of the confidence interval for a policy $\pi$. We define it by  backward recursion:
\begin{align}
\pi_{h+1}^{t+1} G_h^t(s)\! =\! \min\left\{e^{\beta(H-h-1)}\!, e^{\beta r_h(s, \pi_h^{t+1}(s))} \!\left( 3b_h^t(s, \pi_h^{t+1}(s)) \!+\! \left(\!1\! + \!\frac{3}{H}\!\right)\! \widehat{p}_h^t (\pi_{h+1}^{t+1} G_{h+1}^t)(s, \pi_h^{t+1}(s)) \right)\right\}
\end{align}
with terminal $G_{H+1}^t \equiv 0$. In particular, on the high-probability good event, the performance gap of the greedy policy $\pi_{t+1}$ is controlled by the start-state certificate.
Concretely, lemma ~\ref{certificate_positive} shows that with high probability $1-\delta$:
\begin{equation}\label{certificate_equation}
Z_1^\star(s_1)-Z_1^{\pi_{t+1}}(s_1)\leq \widetilde{Z}_1(s_1) - Z_1^{\pi_{t+1}}(s_1) \leq  \pi_{t+1,1}G_1^t(s_1)
\end{equation}
\paragraph{Stopping rule}\label{stopping_rule_paragrph}
We derive a stopping criterion based on the certificate $G_h^t$ by relating the value function gap to the ratio of partition functions. The difference in value functions can be expressed in the exponential space as:
\begin{align*}
    (V^\star_1 - V^{\pi^{t+1}}_1)(s_1) &= \frac{1}{\beta} \log\left( \frac{Z^\star_1}{Z_1^{\pi^{t+1}}}\right)(s_1) \\
    &= \frac{1}{\beta} \log \left(1 + \frac{Z_1^\star - Z_1^{\pi^{t+1}}}{Z_1^{\pi^{t+1}}} \right)(s_1).
\end{align*}
To ensure the policy is $\varepsilon$-optimal, it suffices to satisfy at stopping time $\tau$:
\begin{equation}\label{true_optimality_condition}
    \pi_1^t G_1(s_1) \leq \left(e^{\beta \varepsilon} - 1\right) Z_1^{\pi^{t}}(s_1)
\end{equation}
However, $Z_1^{\pi^t}$ is unknown as it relies on the true dynamics. We therefore substitute it with a computable lower bound. Using \eqref{certificate_equation}, we have:
\begin{equation*}
    Z_1^{\pi_{t+1}}(s_1) \geq \widetilde{Z}_1^t(s_1) - \pi_{t+1,1}G_1^t(s_1).
\end{equation*}
Substituting this lower bound into the optimality condition \eqref{true_optimality_condition} yields a stronger, computable stopping rule:
\begin{equation*}
    \pi_1^t G_1(s_1) \leq \left(e^{\beta \varepsilon} - 1\right) \left(\widetilde{Z}_1^t(s_1) - \pi_{1}^t G_1(s_1)\right).
\end{equation*}
Solving for $\pi_1^t G_1(s_1)$, we obtain the final stopping criterion:
\begin{equation}\label{computabe_stopping_condition}
    \pi_1^t G_1(s_1) \leq \frac{e^{\beta \varepsilon} - 1}{e^{\beta \varepsilon}} \widetilde{Z}_1^t(s_1) 
\end{equation}
where both $\pi_1^t G_1(s_1)$ and $\widetilde{Z}_1^t(s_1)$ can be computed efficiently by dynamic programming. 

\paragraph{Insight on the stopping rule}\label{insight} To guarantee $\varepsilon$-optimality, we need $\pi_1 G_1(s_1)/Z_1^{\pi_1^t}(s_1)$ to be smaller than a threshold $e^{\beta \varepsilon} - 1$. Our analysis reveals that (see proof below)
\begin{equation}\label{insight_equation}
\pi_1 G_1(s_1)/Z_1^{\pi_1^t}(s_1) \lesssim \mathcal{O}\left( \sqrt{ \frac{\operatorname{Var}\Big(e^{\beta R_1^\pi}\big|S_1 = s_1\Big)}{\mathbb{E}\Big[e^{\beta R_1^\pi}\big|S_1 = s_1\Big]^2} \mathbb{E}^\pi\left[\sum_{h=1}^H \frac{1}{n_h^t(s,a)}\Bigg|s_1\right]}\right)\end{equation}
The bound consists of a decreasing visitation term and the constant
$
\tfrac{\operatorname{Var}\Big(e^{\beta R_1^\pi}\big|S_1 = s_1\Big)}{\mathbb{E}\Big[e^{\beta R_1^\pi}\big|S_1 = s_1\Big]^2}$,
which governs the asymptotic rate in contrast to $\operatorname{Var(R_1^\pi\big|S_1 = s_1})$ in the risk-neutral setting. 

It is insightful  to make a short comment on the connection of this quantity with the probability space we are working with. Let us denote $\mathbb{P}^\pi$  the probability distribution of a random trajectory $(s_1,a_1,...,s_H,a_H)$ in the MDP, and consider the tilted law $\mathbb{P}_\beta^\pi$ defined by the Radon-Nykodim derivative 
$\dfrac{d\mathbb{P}_\beta^\pi}{\mathbb{P}^\pi}(\omega) = \frac{e^{\beta R_1^\pi(\omega)}}{Z_1^\pi(s_1)}$. 
We can see the tilted measure as the law of a trajectory on a twisted version of the original MDP that biases transitions towards states with high future exponential return. It can be easily computed that:
$$\frac{\operatorname{Var}\Big(e^{\beta R_1^\pi}\big|S_1 = s_1\Big)}{\mathbb{E}\Big[e^{\beta R_1^\pi}\big|S_1 = s_1\Big]^2} = \chi^2(\mathbb{P}_\beta^\pi,\mathbb{P}^\pi)$$
In other words, the constant leading the convergence speed in the upper bound ~\eqref{insight_equation} is the $\chi_2$ divergence mismatch between the tilted trajectory distribution and the nominal trajectory distribution: It measures the extent to which optimizing the entropic objective amounts to learning under an implicit twisted dynamics that over-samples trajectories with high exponentiated reward. This mismatch is precisely what gets introduced by maximizing the entropic risk measure and what drives the extra constant $\frac{e^{\beta H}}{\beta^2}$ introduced in the sample complexity in contrast to the risk-neutral setting. 
Finally, notice that for $\beta \approx 0$, the $\chi^2$-divergence admits the expansion
$
\chi^2(\mathbb{P}_\beta^\pi , \mathbb{P}^\pi)
= \beta^2\operatorname{Var}_{\mathbb{P}^\pi}\big(R_1^\pi\big|S_1 = s_1\big) + \mathcal{O}(\beta^3)
$, which reduces the term in Eq.\eqref{insight_equation} to the variance term that governs the risk-neutral case.

\begin{algorithm}[htb]
\caption{Entropic-BPI}\label{algorithm}
\label{alg:entropic-bpi-both-stopping}
\begin{algorithmic}[1]
\STATE \textbf{Input:} $\beta\neq 0$, $\delta\in(0,1)$, $\varepsilon>0$.
\STATE Initialize counts $n_h^0(\cdot)=0$ and $\widehat p_h^0(\cdot|s,a)=1/S$.

\FOR{$t=0,1,2,\dots$}

  \STATE \textbf{Terminal init:} set $\widetilde Z_{H+1}^t(s)=1$, $\underaccent{\widetilde }{Z}_{H+1}^t(s)=1$, and $G_{H+1}^t(s)=0$ for all $s$.

  \FOR{$h=H,H-1,\dots,1$}

    \STATE Compute the bonus $b_h^t(\cdot)$:
    use~\eqref{bonus positive} if $\beta>0$, and~\eqref{bonus_negative} if $\beta<0$
    
    \STATE \textbf{Backup:} for all $(s,a)$ compute the optimistic and pessimistic quantities:
    use~\eqref{optimism_equation_positive} if $\beta>0$, and~\eqref{optimism_equation_negative} if $\beta<0$.

    \STATE \textbf{Greedy action:} for all $s$ set
    \vspace{-0.5cm}
    $$
      \pi_h^{t+1}(s)\in
      \begin{cases}
      \arg\max_{a\in\mathcal A}\widetilde U_h^t(s,a), & \beta>0,\\
      \arg\min_{a\in\mathcal A}\widetilde U_h^t(s,a), & \beta<0,
      \end{cases}
    $$
    \vspace{-0.3cm}
    \STATE \textbf{Certificate:} Compute $\pi_{h}^{t+1}G_h^t$:
    use~\eqref{width_certificate_positive} if $\beta>0$, and~\eqref{width_certificate_negative} if $\beta<0$.

  \ENDFOR

  \STATE \textbf{Stopping test:}
  \IF{$\beta>0$ \AND $(\pi_1^{t+1}G_1^t)(s_1)\le \frac{(e^{\beta\varepsilon}-1)}{e^{\beta \varepsilon}}\,\widetilde Z_1^t(s_1)$}
    \STATE \textbf{stop} and output $\pi^{t+1}$.
  \ENDIF
  \IF{$\beta<0$ \AND $(\pi_1^{t+1}G_1^t)(s_1)\le (1-e^{\beta\varepsilon})\,\underaccent{\widetilde}{Z}_1^t(s_1)$}
    \STATE \textbf{stop} and output $\pi^{t+1}$.
  \ENDIF

  \STATE Execute episode $t+1$ with $\pi^{t+1}$, update counts and $\widehat p_h^{t+1}$.

\ENDFOR
\end{algorithmic}
\end{algorithm}
\paragraph{Algorithm and sample complexity upper bounds. } We summarize the elements described above into an algorithm we call \textsc{Entropic-BPI} (\cref{alg:entropic-bpi-both-stopping}). Using the optimistic proxies in Eq.~\eqref{optimism_equation_positive} for $\beta>0$ and in Eq.~\eqref{optimism_equation_negative} for $\beta <0$, it builds exploratory trajectories until our stopping criterion (Eq.\ref{computabe_stopping_condition} for $\beta >0$ and Eq.\eqref{computabe_stopping_condition} for $\beta<0$) is reached. We prove a sample complexity bound for \textsc{Entropic-BPI} in the following theorem. This complexity bound is valid for both $\beta > 0$ (proof in Appendix~\ref{beta_positive}) and for $\beta < 0$ (proof in Appendix~\ref{beta_negative})
\begin{theorem}[sample complexity]\label{th:upper_bound}
For any $\delta \in [0,1]$ and $\varepsilon > 0$ small enough and for any finite MDP $\mathcal{M}$, \textsc{Entropic-BPI} (\cref{alg:entropic-bpi-both-stopping}) outputs a policy that is $(\varepsilon,\delta)$-PAC for best policy identification problem for the entropic risk measure after $\tau$ episodes. Moreover, with probability $1 - \delta$:
\begin{align*}
\tau = \mathcal{O}\left( \frac{e^{2|\beta|\varepsilon}}{\big(e^{|\beta| \varepsilon} - 1\big)^2}\frac{(e^{|\beta| G_{\text{max}}(\mathcal{M})} - 1)^2}{e^{|\beta| G_{\text{max}}(\mathcal{M})}} SAH  \log(\frac{3SAH}{\delta})\right)
\end{align*}
\end{theorem}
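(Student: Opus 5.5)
I will treat $\beta>0$ in detail; the case $\beta<0$ follows the same steps with optimistic and pessimistic roles swapped and the pessimistic lower bound $\underaccent{\widetilde}{Z}_1^t$ in the stopping rule. The argument splits into three parts.

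\textbf{Correctness.} First I define a good event $\mathcal{E}$ of probability at least $1-\delta$. On $\mathcal{E}$ the empirical kernels concentrate in KL around $p_h$ at rate $\alpha^\star(n)/n$, uniformly over $(s,a,h,t)$, and a Bernstein bound with rate $\alpha(n)/n$ holds for the fixed functions $Z_{h+1}^\star$. On $\mathcal{E}$, backward induction gives $\underaccent{\widetilde}{Z}_h^t\le Z_h^{\pi^{t+1}}\le Z_h^\star\le\widetilde Z_h^t$; this is the certificate property \eqref{certificate_equation} from Lemma~\ref{certificate_positive}. It then suffices that at $\tau$ the rule \eqref{computabe_stopping_condition} implies \eqref{true_optimality_condition}, which is the algebra already displayed. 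Taking $\frac1\beta\log$ gives $V_1^\star-V_1^{\hat\pi}\le\varepsilon$.

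\textbf{Sample complexity.} Fix $T<\tau$. For every $t\le T$ the rule fails, so
\[
\frac{e^{\beta\varepsilon}-1}{e^{\beta\varepsilon}}\,Z_1^\star(s_1) \;\le\; \frac{e^{\beta\varepsilon}-1}{e^{\beta\varepsilon}}\,\widetilde Z_1^t(s_1) \;<\; \pi^{t+1}_1G_1^t(s_1).
\]
Summing over $t\le T$, the left side is $T\,c_\varepsilon Z_1^\star$ with $c_\varepsilon=(e^{\beta\varepsilon}-1)/e^{\beta\varepsilon}$.

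Next I unroll the certificate along the trajectory of $\pi^{t+1}$ under the true kernel. Replacing $\widehat p_h^t$ by $p_h$ costs a further Bernstein or KL term, which is absorbed into a $(1+3/H)$-type factor. The multiplicative factors $(1+c/H)^H$ stay bounded by a constant $e^{c}$. The exponential rewards $e^{\beta r_h}$ are already accounted for by the fact that $G$ lives in the exponential space. The result is
\[
G_1^t(s_1)\lesssim \mathbb{E}^{\pi^{t+1}}\!\Big[\sum_h \prod_{i<h}e^{\beta r_i}\Big(\sqrt{\operatorname{Var}_{p_h}(Z^{\pi^{t+1}}_{h+1})\tfrac{\alpha^\star}{n_h^t}}+He^{\beta(H-h)}\tfrac{\alpha^\star}{n_h^t}\Big)\Big].
\]
To reach this form, the empirical variance of $\widetilde Z_{h+1}^t$ is first swapped for the true variance of $Z^{\pi^{t+1}}_{h+1}$, paying $p_h(\widetilde Z-\underaccent{\widetilde}{Z})$. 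That gap is itself bounded by $G$ and is handled through the $\frac1H$ correction term.

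Summing over $t$, counts are replaced by pseudo-counts $\overline n_h^t$ via the standard lemma $n\ge \overline n/2-\log$ on $\mathcal{E}$. Cauchy--Schwarz then splits the leading term into
\[
\sqrt{\sum_t\mathbb{E}^{\pi^{t+1}}\Big[\textstyle\sum_h(\prod e^{\beta r})^2\operatorname{Var}_{p_h}(Z_{h+1})\Big]}\;\times\;\sqrt{\sum_{t,h,s,a}\frac{d_h^{t}(s,a)}{\overline n_h^t(s,a)}}.
\]
The second factor is $O(\sqrt{SAH\log T})$ by the usual pigeonhole argument.

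\textbf{Main obstacle.} The key step is the first factor. By the law of total variance in exponential space (Lemma~\ref{entropicvariance}), the per-episode sum telescopes to $\operatorname{Var}(e^{\beta R_1^{\pi}})$. This quantity is at most
\[
\mathbb{E}[e^{\beta R}]\bigl(e^{\beta G_{\max}}-\mathbb{E}[e^{\beta R}]\bigr)\le Z_1^{\pi}\,(e^{\beta G_{\max}}-1),
\]
using that $e^{\beta R}\in[1,e^{\beta G_{\max}}]$, which is a Bhatia--Davis type bound. Two difficulties arise here. First, this must be done for $\pi^{t+1}$ rather than $\pi^\star$. Second, the cap $e^{\beta(H-h)}$ in the bonuses must be replaced by $e^{\beta G_{\max}}$-scale quantities, which I would do by clipping every $Z$ at $e^{\beta G_{\max}}$.

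With this in hand, the whole estimate becomes
\[
T c_\varepsilon Z_1^\star\lesssim\sqrt{T\,Z_1^\star(e^{\beta G_{\max}}-1)\,SAH\log}+\text{lower order}.
\]
This uses $Z_1^{\pi^{t+1}}\le Z_1^\star$. Dividing by $Z_1^\star$ and using $Z_1^\star\ge1$ in the denominator, solving the quadratic gives
\[
T\lesssim \frac{(e^{\beta G_{\max}}-1)\,SAH\log}{c_\varepsilon^2 Z_1^\star}.
\]
To obtain the stated form $(e^{\beta G_{\max}}-1)^2/e^{\beta G_{\max}}$, I would bound the variance more carefully by $(e^{\beta G_{\max}}-1)^2 Z^2/e^{\beta G_{\max}}$-type worst cases. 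Matching exactly that shape, and controlling the lower-order $H e^{\beta(H-h)}/n$ terms so they contribute only logarithmically in $1/\varepsilon$, is where I expect the most work. Finally, $1/c_\varepsilon^2=e^{2\beta\varepsilon}/(e^{\beta\varepsilon}-1)^2$ yields the claimed bound.
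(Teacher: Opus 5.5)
The overall architecture is the paper's: exponential-space optimistic backups, the certificate unrolled under $p_h$ with $(1+c/H)^H\le e^{c}$, Cauchy--Schwarz, the exponential law of total variance, pseudo-counts, and Lemma~\ref{inequality}. Lower-bounding $\widetilde Z_1^t\ge Z_1^\star$ and summing unnormalized certificates, instead of dividing each episode by $Z_1^{\pi^{t+1}}$ as the paper does, is a harmless variant. However, two steps do not go through as written.

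\textbf{First gap: the chain $\underaccent{\widetilde}{Z}_h^t\le Z_h^{\pi^{t+1}}$ for $\beta>0$.} Backward induction on the good event does not give this, and it is false in general. $\underaccent{\widetilde}{Z}_h^t(s)=\max_a\underaccent{\widetilde}{U}_h^t(s,a)$ is a lower bound on $Z_h^\star$, not on $Z_h^{\pi^{t+1}}$. The maximum can sit at a well-explored good action while $\pi_h^{t+1}(s)$ picks a worse, poorly explored action with a large bonus. Moreover, the Bernstein event controls only the fixed function $Z_{h+1}^\star$, not the data-dependent $Z_{h+1}^{\pi^{t+1}}$.

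You use this inequality twice:
\begin{itemize}
\item for $\widetilde Z_1^t-Z_1^{\pi^{t+1}}\le\pi_1^{t+1}G_1^t$, on which correctness rests;
\item in the variance swap, where the price actually paid is $p_h(\widetilde Z_{h+1}^t-Z_{h+1}^{\pi^{t+1}})$.
\end{itemize}
The missing idea is the paper's bridge $\mathring{Z}^t$. Along $\pi^{t+1}$, $\mathring{U}_h^t$ is the minimum of the true-kernel backup $e^{\beta r_h}p_h\mathring{Z}_{h+1}^t$ and a pessimistic empirical backup of $\mathring{Z}_{h+1}^t$. By monotonicity it therefore lies below both $\underaccent{\widetilde}{Z}^t$ and $Z^{\pi^{t+1}}$. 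The bound $\widetilde Z^t-\mathring{Z}^t\le\pi^{t+1}G^t$ then follows inductively from the KL--Bernstein inequality, which is uniform over functions and so handles the data-dependent $Z_{h+1}^\star-\mathring{Z}_{h+1}^t$.

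You have also swapped the two rates. The KL event needs $\alpha$, which carries the $S\log(8e(n+1))$ term, while the Bernstein event for $Z^\star$ uses $\alpha^\star$. Keeping $\alpha^\star$ in the leading term is what avoids an extra factor $S$.

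\textbf{Second gap: the endgame does not reach the stated rate.} With $M=e^{\beta G_{\max}}$, bounding $\operatorname{Var}(e^{\beta R_1^{\pi}})\le Z_1^{\pi}(M-1)$ gives $T\lesssim (M-1)SAH/(c_\varepsilon^2Z_1^\star)$ up to logarithms. This exceeds the target by the factor $M/((M-1)Z_1^\star)$, which is of order $1/(\beta G_{\max})$ when $\beta G_{\max}$ is small. In particular it does not recover the risk-neutral rate $G_{\max}^2SAH/\varepsilon^2$ as $\beta\to0$.

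The fix is the sharp Bhatia--Davis bound of Lemma~\ref{bound}, not clipping. Write $\operatorname{Var}_t$ for $\operatorname{Var}(e^{\beta R_1^{\pi^{t+1}}}\mid S_1=s_1)$ and $\mu_t=Z_1^{\pi^{t+1}}$, and note $\mu_t\le Z_1^\star$. Then $\operatorname{Var}_t/(Z_1^\star)^2\le(M-\mu_t)(\mu_t-1)/\mu_t^2\le(M-1)^2/(4M)$, and your quadratic yields exactly the stated form.

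Clipping every $Z$ at $e^{\beta G_{\max}}$ is neither needed nor available: it changes the algorithm and presupposes knowing $G_{\max}$, which the paper's algorithm does not. $G_{\max}$ enters only through the range $[1,M]$ of the true return $e^{\beta R_1^{\pi}}$. The lower-order terms $He^{\beta(H-h)}\alpha/n$ keep $e^{\beta H}$ and scale like $1/(e^{\beta\varepsilon}-1)$, not logarithmically. They are dominated only under the ``$\varepsilon$ small enough'' hypothesis.

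Finally, working with $\pi^{t+1}$ rather than $\pi^\star$ is no obstacle, since Lemma~\ref{entropicvariance} holds for any deterministic policy.
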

The algorithm upper bound matches the lower bound derived in \cref{th:lower_bound_theorem} up to a factor $e^{2 |\beta| \varepsilon}$ which is a constant when $\varepsilon$ is small enough and comes from having a stopping rule using comptable proxies instead of $Z_h^\pi$.
When $|\beta|$ goes to $0$, the upper bound approaches :
$$\tilde{\mathcal{O}}\left(\frac{G_{\text{max}}(\mathcal{M})^2 SAH}{\varepsilon^2}\right)= \tilde{\mathcal{O}}\left(\frac{SAH^3}{\varepsilon^2}\right)$$
and we recover the optimal sample complexity for the risk-neutral setting \citep{Menard2021Fast}.

Also remark that using the elementary inequality $\log(1 + x) \geq \frac{x}{2}$ for $|\beta| \varepsilon \in [0,1]$ and using\footnote{since the rewards are in $[0,1]$} that $\frac{(e^{|\beta| G_{\text{max}}(\mathcal{M})} - 1)^2}{e^{|\beta| G_{\text{max}}(\mathcal{M})}} \leq e^{|\beta| G_{\text{max}}(\mathcal{M})} -1 \leq e^{|\beta| H} - 1$  we have an upper bound of the order of :
$$\tau = \tilde{\mathcal{O}}\left( \frac{e^{|\beta| H} - 1}{\beta^2}\frac{SAH}{\varepsilon^2}  \right)$$

This matches the lower bound of \citet{MortensenTalebi2025} when mapped to the finite-horizon setting, up to an additional factor $H$ which is unavoidable as it is inherent to the non-stationary finite-horizon setting (with $H$ separate kernels). 

Note that the upper bound is stated in terms of $G_{\max}(\mathcal{M})$ rather than $H$. Since rewards lie in $[0,1]$, $G_{\max}(\mathcal{M})$ can be interpreted as an effective reward horizon, i.e., the maximal cumulative reward that can be accrued along a trajectory. This choice is natural for the entropic risk criterion, whose difficulty comes from the exponential amplification of accumulated rewards; using $H$ may overestimate this effect in problems where rewards are sparse or concentrated near the end of the episode. Finally, our algorithm does not require prior knowledge of $G_{\max}(\mathcal{M})$. 

\paragraph{Experiments. }  To illustrate the gains in sample complexity, we propose a simple 8-state MDP and compare \textsc{Entropic-BPI} with regret algorithms in the literature. The results are discussed in Appendix~\ref{app:experiments}

\section{Proof of Theorem~\ref{th:upper_bound}}

We present the main ideas of the proof for $\beta>0$. The full proof is given in \cref{proofsupperbound}.
We first control the concentration events (Lemma~\ref{good event}) and work on the good event $\mathcal{E}^+$ for $\beta>0$, which holds with probability at least $1-\delta$. As explained in paragraph~\nameref{stopping_rule_paragrph}, when the algorithm stops, it outputs by design a policy $\pi^\tau$ that is $\varepsilon$-optimal with high probability $1-\delta$, this proves the first statement of Theorem ~\ref{th:upper_bound}.

To upper bound the stopping time, we first show that $\widetilde{U}_h^t$ and $\underaccent{\widetilde}{U}_h^t$ are indeed optimistic and pessimistic, respectively, for the exponential transform of the value functions $U_h^\star$ (Lemma~\ref{optimism positive}). Then, following a similar approach to \citep{Menard2021Fast,DannLB17}, we bound the width certificate by a computable recursive upper bound, which serves as the stopping rule for our algorithm~\eqref{certificate_equation}. Lemma~\ref{certificate_positive} shows that, with probability at least $1-\delta$, this width certificate upper bounds the optimality gap. Since the stopping condition is not met for episodes $t=\{1,\ldots,\tau\}$, we have:
\[
\tau \frac{e^{\varepsilon \beta}-1}{e^{\varepsilon \beta}} \leq \sum_{t=1}^{\tau} \frac{\pi_1 G_1^t(s_1)}{\widetilde{Z}^t_1(s_1)}
\]

We bound the right-hand side for episode $t$ by replacing the empirical transition probabilities with the true ones. We then unroll the resulting recursive inequality for $\pi_{h+1}^t G_h^t$ under the true dynamics (see \ref{proof:upper_bound} for details):
\begin{align*}
\frac{(\pi_1 G_1^{t})(s_1)}{\widetilde{Z}_1^t(s_1)}&
\leq e^{13}
\mathbb{E}^{\pi^{t+1}}\Bigg[
\sum_{h=1}^{H} 
\exp\Big(\beta\sum_{i=1}^{h} r_i(s_i,a_i)\Big)
\Bigg(
36\sqrt{\frac{\operatorname{Var}_{p_h}\big(Z_{h+1}^{\pi^{t+1}}\big)}{(Z_1^{\pi^{t+1}})^2}\alpha^\star\big(n_h^t(s_h,a_h) \wedge 1\big)}
\\&+
81He^{\beta(H-h)}\alpha\big(n_h^t(s_h,a_h) \wedge 1\big)
\Bigg)
\Bigg| s_1
\Bigg]
\end{align*}
We bound the first term using Cauchy-Schwartz inequality:
\begin{align*}
    \mathbb{E}^{\pi^{t+1}} & \left[\sum_{h=1}^{H}  
\exp\Big(\beta \sum_{i=1}^{h} r_i(s_i,a_i)\Big)
\Bigg(
\sqrt{\frac{\operatorname{Var}_{p_h}\big(Z_{h+1}^{\pi^{t+1}}\big)}{(Z_1^{\pi^{t+1}})^2}\big(\frac{\alpha^\star\big(n_h^t(s,a)}{n_h^t(s,a)} \wedge 1 \big)}\Big) \Bigg| s_1\right]\\ &\leq \sqrt{\mathbb{E}^{\pi^{t+1}}\Big[\exp\Big(2\beta \sum_{i=1}^{h} r_i(s_i,a_i)\Big) \frac{\operatorname{Var}_{p_h}\big(Z_{h+1}^{\pi^{t+1}}\big)}{(Z_1^{\pi^{t+1}})^2}\Big]} \sqrt{\mathbb{E}^{\pi^{t+1}}\Big[\frac{\alpha^\star(n_h^t(s,a))}{n_h^t(s,a)}}\Big]
\end{align*}
Using lemma~\ref{entropicvariance} and lemma~\ref{bound} we bound the first term on the right-hand side as:
\begin{align*}
    \sum_{h=1}^H \mathbb{E}^{\pi^{t+1}} \!\!\left[ \exp\Big(2 \beta \sum_{i=1}^{h} r_i(s_i,a_i)\Big)\frac{\operatorname{Var}_{p_h}\big(Z_{h+1}^{\pi^{t+1}}\big)}{(Z_1^{\pi^{t+1}})^2} \right] \!=\! \frac{\operatorname{Var}\big(e^{\beta R_1^{\pi^{t+1}}}(S_1)\big|S_1=s_1\Big)}{\Big(\mathbb{E}\Big[e^{\beta R_1^{\pi^{t+1}}(S_1)}\big|S_1 = s_1 \Big]\Big)^2} \!\leq \!\frac{\left(e^{\beta G_{\text{max}}(\mathcal{M})} - 1\right)^2}{e^{\beta G_{\text{max}}(\mathcal{M})}}
\end{align*}
For the second term, we bound it loosely by directly upper bounding the per-step reward by $1$:
\begin{align*}
   \mathbb{E}^{\pi^{t+1}} & \left[
\sum_{h=1}^{H}
\exp\Big(\beta\sum_{i=1}^{h} r_i(s_i,a_i)\Big)
He^{\beta(H-h)}\big(\frac{\alpha\big(n_h^t(s,a)}{n_h^t(s,a)} \wedge 1 \big)
\Bigg)
\middle| s_1
\right] \leq  H e^{\beta H} \mathbb{E} \Big[\frac{\alpha\big(n_h^t(s,a)}{n_h^t(s,a)} \wedge 1 \Big]
\end{align*}
We sum over all episodes $t =1 , ...,\tau$. Then using a standard counting argument we have: 
\begin{align*}
    \sum_{t=1}^{\tau -1}\sum_{h=1}^H \mathbb{E}^{\pi^{t+1}}\Big[\frac{\alpha^\star\big(n_h^t(s,a)}{n_h^t(s,a)} \wedge 1 \Big] \leq 3 SAH\alpha^\star(\tau-1,\delta) \log(\tau + 1)
\end{align*}
And we have a similar bound for $\alpha$, Hence:
\begin{align*}
    \tau \frac{e^{\beta \varepsilon} - 1}{e^{\beta \varepsilon}} &\leq  36 e^{13}  \sqrt{\frac{(e^{\beta G_{\text{max}}(\mathcal{M})} - 1)^2}{e^{\beta G_{\text{max}}(\mathcal{M})}}\tau SAH\alpha^\star(\tau-1,\delta) \log(\tau + 1)} \\
    &\quad \quad + 84 e^{13} e^{\beta H}SAH\alpha(t-1,\delta) \log(t + 1)
\end{align*}
Solving this inequality using lemma~\ref{inequality}, we find the exact upper bound on $\tau$ with probability $1-\delta$.


\section{Discussions and Conclusions}

We provide a new approach to entropic best arm identification that resolves a known suboptimality gap. Our approach builds a successful optimism-driven framework for the forward model, relying on a tight control of the variance of the estimators of the entropic value function, and on a specifically tailored stopping rule.  

Indeed, the dependence of the sample complexity on the horizon remains exponential, indicating one more time that learning exponential utilities in MDPs is a significantly harder problem than the standard expected return due to the focus on tail (rare) events. However, we show that the real MDP-dependent term of interest in the exponential is the maximal return, which could be constant in some sparse reward problems, making the problem more amenable. Such problem-specific investigations could be an avenue for future work. 

Recently, \citet{Marthe2025Efficient} showed that there is a fundamental connection between the entropic risk measure and more practically-used metrics such as the (conditional) Value at Risk. 
Our forward-model approach could be combined with such optimization improvements to propose new algorithms for (C)VaR optimization in RL. 

\acks{C. Vernade is funded by the Deutsche Forschungsgemeinschaft (DFG) under both the project 468806714 of the Emmy Noether Programme and under Germany’s Excellence Strategy – EXC number 2064/1 – Project number 390727645. CV also gratefully acknowledges funding from the European Union (ERC, ConSequentIAL, 101165883). Views and opinions expressed are however those of the author(s) only and do not necessarily reflect those of the European Union or the European Research Council. Neither the European Union nor the granting authority can be held responsible for them). CV also thanks the international Max Planck Research School for Intelligent Systems (IMPRS-IS).}

\bibliography{refs2}
\clearpage
\appendix

\part*{Appendix}
\addcontentsline{toc}{part}{Appendix} 

\etocsettocstyle{\section*{Contents of Appendix}}{}
\etocsetnexttocdepth{subsubsection}
\localtableofcontents
\clearpage


\section{Concentration events}\label{Concentration_events}
Following the ideas of \citep{Menard2021Fast} we define the following quantities:
\begin{align}
\alpha(n, \delta) &= \log(3SAH/\delta) + S \log(8e(n + 1)) \label{eq:bonuses}\notag\\
\alpha^{\text{cnt}}(\delta) &= \log(3SAH/\delta) \quad \text{and} \notag\\
\alpha^\star(n, \delta) &= \log(3SAH/\delta) + \log(8e(n + 1))
\end{align}
We also define the KL-divergence concentration event as: 
\begin{align*}
    \mathcal{E}_{KL} = \left\{ \forall t \in \mathbb{N}, \forall h \in \{1,...,H\}, \forall (s, a) \in \mathcal{S} \times \mathcal{A} : D_{KL}(\widehat{p}_h^t(s,a), p_h(s,a)) \leq \alpha(n_h^t(s,a),\delta) \right\}
\end{align*}
and the Bernoulli concentration event for a series of function $(f_h)_{h \in [H+1]}$ in $[0,b]$:
\begin{align*}
    \mathcal{E}_{f} &= \Bigg\{ \forall t \in \mathbb{N}, \forall h \in \{1,...,H\}, \forall (s, a) \in \mathcal{S} \times \mathcal{A} : \\&\left| (\widehat{p}_h^t - p_h) f_{h+1}(s, a) \right| < \sqrt{ 2 \operatorname{Var}_{p_h}(f_{h+1})(s, a) \frac{\alpha^\star(n_h^t(s, a), \delta)}{n_h^t(s, a)} }+ 3 b \frac{\alpha^\star(n_h^t(s, a), \delta)}{n_h^t(s, a)} \Bigg\}
\end{align*}
And the counts concentration event:
$$\mathcal{E}^{\text{cnt}} = \left\{ \forall t \in \mathbb{N}, \forall h \in \{1,...,H\}, \forall (s, a) \in \mathcal{S} \times \mathcal{A} : n_h^t(s, a) \geq \frac{1}{2} \bar{n}_h^t(s, a) - \alpha^{\text{cnt}}(\delta) \right\}$$
Using this, we define the good event for our algorithm analysis for $\beta > 0$ and $\beta <0$: for $\beta >0$
\begin{align*}
    \mathcal{E}^+ = \mathcal{E}_{KL} \cap  \mathcal{E}_{V^*} \cap \mathcal{E}^{\text{cnt}}
\end{align*}
And for $\beta <0$ we have almost the same thing but the definition of $\mathring{V}$ and the range of the functions changes:
\begin{align*}
    \mathcal{E}^- = \mathcal{E}_{KL} \cap  \mathcal{E}_{V^*} \cap \mathcal{E}^{\text{cnt}}
\end{align*}

\begin{lemma}\label{good event}
    It holds that :
    \begin{align*}
        \mathbb{P}(\mathcal{E}_{KL}) \geq 1 - \frac{\delta}{3}, \qquad \mathbb{P}(\mathcal{E}^{\text{cnt}}) \geq 1 - \frac{\delta}{3}, \qquad \textit{and} \qquad \text{for any $f$} \quad \mathbb{P}(\mathcal{E}_f) \geq 1 - \frac{\delta}{3}
    \end{align*}
    Consequently,
    $$
    \mathbb{P}(\mathcal{E}^+) \geq 1 - \delta \quad \text{and} \quad \mathbb{P}(\mathcal{E}^-) \geq 1 - \delta
    $$
    \end{lemma}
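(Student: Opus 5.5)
The plan is to prove the three probability bounds separately, each by a time-uniform concentration inequality followed by a union bound over $(h,s,a)\in[H]\times\mathcal S\times\mathcal A$. We allot failure probability $\delta/(3SAH)$ to each triple, which accounts for the common term $\log(3SAH/\delta)$ in $\alpha,\alpha^\star,\alpha^{\text{cnt}}$. The conclusion for $\mathcal E^+$ and $\mathcal E^-$ then follows from a final union bound over the three events, since each has probability at least $1-\delta/3$ and the complement of the intersection has probability at most $\delta$. The only difference between $\mathcal E^+$ and $\mathcal E^-$ is the choice of the fixed function sequence $f_{h+1}$ (the exponential transform of the optimal value, $Z^\star_{h+1}$, with range $[0,e^{\beta(H-h)}]$ or $[e^{\beta(H-h)},1]$) and its range bound $b$. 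The argument is therefore identical for both.

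For a fixed $(h,s,a)$, we first reduce to i.i.d. sampling via the standard "stack of samples" (skeleton) argument. Let $X_1,X_2,\dots$ be i.i.d. draws from $p_h(\cdot\mid s,a)$, where the $i$-th visit to $(s,a)$ at step $h$ reveals $X_i$. Then $\widehat p_h^t(\cdot\mid s,a)$ is the empirical distribution of $X_1,\dots,X_{n}$ with $n=n_h^t(s,a)$. It then suffices to prove each bound uniformly over all $n\ge1$, since $n=0$ is trivial with the convention $\alpha/0=\infty$. For $\mathcal E_{KL}$, we invoke the time-uniform KL concentration for categorical distributions (Proposition 1 of Jonsson et al.\ 2020, as used in Ménard et al.\ 2021). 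It gives $\mathbb P(\exists n: n\,\mathrm{KL}(\widehat p_n,p)>\log(1/\delta')+(S-1)\log(e(1+n/(S-1))))\le\delta'$, which is dominated by $\alpha(n,\delta)$ with $\delta'=\delta/(3SAH)$. Note that the definition as written omits the factor $n$; we interpret it as $n\,\mathrm{KL}\le\alpha$, matching Ménard et al.

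For $\mathcal E_f$, we apply a time-uniform empirical or true Bernstein inequality to the bounded i.i.d. variables $f_{h+1}(X_i)\in[0,b]$. We use a peeling (stitching) argument over geometric epochs of $n$, which costs an additive $\log(8e(n+1))$ term, or equivalently Lemma 11 of Ménard et al. This yields $|(\widehat p-p)f|\le\sqrt{2\mathrm{Var}_p(f)\alpha^\star/n}+3b\alpha^\star/n$ simultaneously for all $n$ with probability $1-\delta/(3SAH)$. The key point is that $f$ is deterministic and not data-dependent, so no covering argument is needed. This is precisely why the event is stated for $V^\star$ (that is, $Z^\star$) and not for the optimistic estimates.

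For $\mathcal E^{\text{cnt}}$, we use the martingale argument of Dann et al.\ 2017 (Lemma F.4). Set $\mathbb 1\{(s_h^i,a_h^i)=(s,a)\}-\mathbb P_{i-1}((s_h^i,a_h^i)=(s,a))$, which are bounded martingale differences whose conditional variance is at most their conditional mean. A Freedman-type exponential supermartingale with parameter $1/2$ then gives $n_h^t\ge\frac12\bar n_h^t-\log(1/\delta')$ for all $t$ simultaneously via Ville's inequality. Here $\bar n$ should be read as the sum of conditional visit probabilities, as in Ménard et al. The main obstacle I expect is bookkeeping rather than mathematics: matching the constants ($8e(n+1)$, the factor $3$) in the time-uniform Bernstein bound, and reconciling the paper's notational conventions (the $n\cdot\mathrm{KL}$ scaling and pseudo-counts as conditional expectations) with the cited lemmas.
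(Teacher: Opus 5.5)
Your proof is correct, and its skeleton is the paper's. You use a time-uniform concentration bound for each triple $(h,s,a)$, a union bound over the $SAH$ triples, and a final union bound over the three events. The count event is handled exactly as in the paper, through Lemma~\ref{Bernoulli} (Dann et al.'s Lemma F.4).

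You differ in how the other two events are made uniform over episodes. The paper works directly on the adaptively collected data. For $\mathcal{E}_{KL}$ it invokes a fixed-$n$ Sanov (method-of-types) bound and then simply asserts a union bound over all time steps. For $\mathcal{E}_f$ it applies the self-normalized Bernstein inequality of Lemma~\ref{Bernstein} to the episode-indexed sequence with indicator weights $w_\tau$. You instead pass first to the i.i.d.\ stack-of-samples model, then use inequalities that are uniform in the visit count $n$ itself.

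This buys you what the paper's argument, as written, leaves unjustified:
\begin{itemize}
\item A fixed-$n$ bound cannot be union-bounded over all $n$ for free. The Jonsson et al.\ bound is already uniform in $n$ and is dominated by $\alpha(n,\delta)$.
\item The episode-indexed Bernstein bound carries $\log(4e(2t+1))$ in the episode index $t$. This cannot be traded for $\log(4e(2n+1))$ using $n\le t$, because that inequality points the wrong way. In your visit-indexed version the time index is $n$, and $4e(2n+1)\le 8e(n+1)$ yields $\alpha^\star(n,\delta)$ directly.
\end{itemize}
The paper's martingale route can be repaired the same way by reindexing at the visit times (optional skipping). That avoids building the skeleton probability space but is otherwise equivalent.

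Your allocation of $\delta/(3SAH)$ per triple is the one consistent with the $\log(3SAH/\delta)$ in the rates and with the claimed $\delta/3$ per event; the paper uses $\delta/(SAH)$ for two of them. Both of your reinterpretations are also right. First, $n\,\mathrm{KL}\le\alpha$ implies the literal event and is the form the later transportation steps actually use. Second, Lemma~\ref{Bernoulli} controls the sum of conditional visit probabilities $\sum_{i\le t}\mathbb{P}\big((s_h^i,a_h^i)=(s,a)\mid\mathcal{F}_{i-1}\big)$, not the unconditional $\mathbb{E}[n_h^t(s,a)]$.

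One small slip: in this paper's attribution, Lemma 11 of M\'enard et al.\ is a variance-transportation lemma. The tool you want for $\mathcal{E}_f$ is Lemma~\ref{Bernstein} with $w\equiv 1$, applied to the visit-indexed sequence.
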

\begin{proof}

\textbf{The KL concentration event}: 

For $(h,s,a)$ fixed, we apply lemma~\ref{Sanov} with confidence level $\delta_{h,s,a} =\frac{\delta}{3SAH}$ and then do a union bound over $h,s,a$ to get a concentration inequality that holds uniformly.

\textbf{The Bernstein concentration event}:

Let $(f_h)_{h\in[H+1]}$ be a sequence of function. Fix $(h,s,a)$ and let $(\mathcal F_\tau)_{\tau\ge0}$ be the history filtration. Define
$$
w_\tau=\mathbf 1\{(H_\tau,S_\tau,A_\tau)=(h,s,a)\},\qquad
Y_\tau=f_{h+1}(S_{\tau+1})-\mathbb{E}[f_{h+1}(S_{\tau+1})\mid \mathcal F_{\tau-1}]
$$
Then $(w_\tau)$ is predictable, $\mathbb{E}[Y_\tau\mid\mathcal F_{\tau-1}]=0$, $|Y_\tau|\le H$, and
$$
\mathbb{E}[Y_\tau^2\mid\mathcal F_{\tau-1}]=\operatorname{Var}_{p_h}(f_{h+1})(s,a)\quad\text{on }\{w_\tau=1\}
$$
Let
$$
S_t=\sum_{\tau=1}^t w_\tau Y_\tau,\quad
V_t=\sum_{\tau=1}^t w_\tau^2\mathbb{E}[Y_\tau^2\mid\mathcal F_{\tau-1}],\quad
W_t=\sum_{\tau=1}^t w_\tau=n_h^t(s,a)
$$
Then $V_t=W_t\operatorname{Var}_{p_h}(f_{h+1})(s,a)$ and, for $W_t\ge1$,
$$
(\widehat p_h^t-p_h)f_{h+1}(s,a)=\frac{S_t}{W_t}
$$
Applying lemma~\ref{Bernstein} with $b$ and confidence $\delta_{h,s,a}$ yields (simultaneously for all $t$)
$$
\Big|(\widehat p_h^t-p_h)f_{h+1}(s,a)\Big|
\leq
\sqrt{2\operatorname{Var}_{p_h}(f_{h+1})(s,a)\frac{\alpha^\star(n_h^t(s,a),\delta)}{n_h^t(s,a)}}
+
3B\frac{\alpha^\star(n_h^t(s,a),\delta)}{n_h^t(s,a)}
$$
where $\alpha^\star(n,\delta)=\log\big(\frac{4e(2n+1)}{\delta_{h,s,a}}\big)$ (using $n_h^t(s,a)\le t$ and monotonicity of the log term).
Finally choose $\delta_{h,s,a}=\frac{\delta}{SAH}$ and we apply a union bound over $(h,s,a)$ to get the result for any state-action pair

\textbf{The counts concentration event:}

The proof follows from lemma~\ref{Bernoulli} applied to the Bernoulli random variable $\mathbbm{1}\left\{ (s_h^i, a_h^i) = (s, a) \right\}$ for $\delta_{h,s,a} = \frac{\delta}{SAH}$ and then doing a union bound over $h,s,a$
\end{proof}
\section{Algorithm analysis}\label{Algorithm analysis}
\label{proofsupperbound}
Here we provide a detailed analysis of our algorithm. Our method is a UCB-style algorithm that plans over a KL confidence region, following the approach of \citet{Menard2021Fast} for the risk-neutral objective. At each step $t$ and stage $h$, we construct a confidence set around the true transition kernel:
\begin{align*}
    \mathcal{C}_h^t(s, a) \triangleq \left\{ q \in \Sigma_S : \operatorname{KL}\left(\widehat{p}_h^t(s, a), q(s, a)\right) \le \frac{\alpha(n_h^t(s, a), \delta)}{n_h^t(s, a)} \right\}
\end{align*}
The algorithm then acts optimistically by selecting, among all transition models $q$ such that $q(.|s,a) \in \mathcal{C}_h^t(s, a) $, the one that yields the highest value function, and plans accordingly.

For the entropic risk measure, we follow the same principle, but carry out optimistic planning in the exponential (log-moment-generating) space induced by the entropic criterion. As noted by \citet{FeiYCW21}, working in this exponential space allows us to exploit a Bellman-type recursion that is generally lost if one applies Lipschitz arguments directly in the original value space. This means that the upper and lower confidence bounds on the optimal exponential transformation of the state-value function $U^*$ and value function $Z^*$ for $\beta > 0$ are given by: 
\begin{align*}
    \overline{U}_h^t(s, a) &\triangleq e^{\beta r_h(s,a)} \max_{\overline{p}_h \in \mathcal{C}_h^t(s,a)} \overline{p}_h \overline{Z}_{h+1}^t(s, a) & 
    \underline{U}_h^t(s, a) &\triangleq e^{\beta r_h(s,a)}\min_{\underline{p}_h \in \mathcal{C}_h^t(s,a)} \underline{p}_h \underline{Z}_{h+1}^t)(s, a) \\
    \overline{Z}_h^t(s) &\triangleq \max_{a} \overline{U}_h^t(s, a) & 
    \underline{Z}_h^t(s) &\triangleq \max_{a} \underline{U}_h^t(s, a) \\
    \overline{Z}_{H+1}^t(s) &\triangleq 0 & 
    \underline{Z}_{H+1}^t(s) &\triangleq 0 \\
    \overline{p}_h^t(s, a) &\in \operatorname*{arg\,max}_{\overline{p} \in \mathcal{C}_h^t(s,a)} \overline{p}_h \overline{Z}_{h+1}^t(s, a) & 
    \underline{p}_h^t(s, a) &\in \operatorname*{arg\,min}_{\underline{p} \in \mathcal{C}_h^t(s,a)} \underline{p}_h \underline{Z}_{h+1}^t(s, a) \\
    \overline{\pi}_h^t(s, a) &\in \operatorname*{arg\,max}_{a \in \mathcal{A}} \overline{U}_h^t(s, a) & 
    \underline{\pi}_h^t(s, a) &\in \operatorname*{arg\,max}_{a \in \mathcal{A}} \underline{U}_h^t(s, a).
\end{align*}
For $\beta < 0$, $ \overline{U}$ will correspond to the pessimistic $\underline{Q}$ via the log-transformation. As such, maximizing $\overline{Q}$ to define the policy $\overline{\pi}$ is equivalent to minimizing $\underline{U}$. Similarly, finding the best action at each stage to define $\overline{Z}$ and $\underline{Z}$ corresponds to minimizing $\overline{U}$ and $\underline{U}$ respectively:
\begin{align*}
    \overline{U}_h^t(s, a) &\triangleq e^{\beta r_h(s,a)} \min_{\overline{p}_h \in \mathcal{C}_h^t(s,a)} \overline{p}_h \overline{Z}_{h+1}^t(s, a) & 
    \underline{U}_h^t(s, a) &\triangleq e^{\beta r_h(s,a)}\max_{\underline{p}_h \in \mathcal{C}_h^t(s,a)} \underline{p}_h \underline{Z}_{h+1}^t(s, a) \\
    \overline{Z}_h^t(s) &\triangleq \min_{a} \overline{U}_h^t(s, a) & 
    \underline{Z}_h^t(s) &\triangleq \min_{a} \underline{U}_h^t(s, a) \\
    \overline{Z}_{H+1}^t(s) &\triangleq 0 & 
    \underline{Z}_{H+1}^t(s) &\triangleq 0 \\
    \overline{p}_h^t(s, a) &\in \operatorname*{arg\,min}_{\overline{p} \in \mathcal{C}_h^t(s,a)} \overline{p}_h \overline{Z}_{h+1}^t(s, a) & 
    \underline{p}_h^t(s, a) &\in \operatorname*{arg\,max}_{\underline{p} \in \mathcal{C}_h^t(s,a)} \underline{p}_h \underline{Z}_{h+1}^t(s, a) \\
    \overline{\pi}_h^t(s, a) &\in \operatorname*{arg\,min}_{a \in \mathcal{A}} \overline{U}_h^t(s, a) & 
    \underline{\pi}_h^t(s, a) &\in \operatorname*{arg\,max}_{a \in \mathcal{A}} \underline{U}_h^t(s, a).
\end{align*}
The KL confidence sets $C_h^t(s, a)$ are introduced solely to motivate an optimistic model interpretation. We instead build computable optimistic and pessimistic expressions in the empirical MDP by choosing the radius $\alpha$ so that the true transition kernel belongs to $\mathcal{C}_h^t(s,a)$ in the same style as \citep{Menard2021Fast}. We then prove the corresponding optimism lemma, bound the certificate width, and derive the sample complexity. We treat the cases $\beta>0$ and $\beta<0$ separately. We first restate the theorem in more detail
\begingroup
\renewcommand\thetheorem{\ref{th:upper_bound}}
\begin{theorem}[sample complexity]
For any $\delta \in [0,1]$ and $\varepsilon \in ]0,\frac{2}{|\beta|HS}]$ and for any finite MDP $\mathcal{M}$, \textsc{Entropic-BPI} (\cref{alg:entropic-bpi-both-stopping}) outputs a policy that is $(\varepsilon,\delta)$-PAC for best policy identification problem for the entropic risk measure after $\tau$ episodes. Moreover, with probability $1 - \delta$:
\begin{align*}
    \tau \leq\frac{e^{2\max\{0,\beta\}\varepsilon}}{\big(e^{|\beta| \varepsilon} - 1\big)^2}\frac{(e^{|\beta| G_{\text{max}}(\mathcal{M})} - 1)^2}{e^{|\beta| G_{\text{max}}(\mathcal{M})}} SAH \log(\frac{3SAH}{\delta}) C_2^2
\end{align*}
Where $C_2 = 2765 e^{22}\log\bigg(4 e^{17} \frac{(S+1)(H+1) e^{|\beta H} S A H^2}{e^{\beta \varepsilon} - 1}\bigg)$. In particular, hiding constants and log terms:
$$\tau = \tilde{\mathcal{O}}\left( \frac{e^{2\max\{0,\beta\}\varepsilon}}{\big(e^{|\beta| \varepsilon} - 1\big)^2}\frac{(e^{|\beta| G_{\text{max}}(\mathcal{M})} - 1)^2}{e^{|\beta| G_{\text{max}}(\mathcal{M})}} SAH  \right)$$
\end{theorem}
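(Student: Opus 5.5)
We will prove the result on the good event $\mathcal{E}^+$ (resp. $\mathcal{E}^-$), which by Lemma~\ref{good event} has probability at least $1-\delta$. We treat $\beta>0$ in detail; $\beta<0$ follows by reversing max/min and swapping the roles of optimistic and pessimistic quantities. The argument has two parts: correctness of the stopping rule, then a bound on $\tau$ via a self-bounding inequality.

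\textbf{Correctness.} First we establish optimism and pessimism: $\underaccent{\widetilde}{U}_h^t\le U_h^\star\le \widetilde U_h^t$ for all $(t,h,s,a)$. The proof is by backward induction on $h$. The difference $(\widehat p_h^t-p_h)Z^\star_{h+1}$ is controlled by the Bernstein event $\mathcal{E}_{Z^\star}$, which involves $\operatorname{Var}_{p_h}(Z^\star_{h+1})$. We must replace this true variance by $\operatorname{Var}_{\widehat p_h^t}(\widetilde Z^t_{h+1})$, and this requires two steps:
\begin{itemize}
\item the KL event $\mathcal{E}_{KL}$ transfers variances between $p_h$ and $\widehat p_h^t$, following the variance-transfer lemmas of \citet{Menard2021Fast};
\item the inequality $|\operatorname{Var}(f)-\operatorname{Var}(g)|$-type bound, $\sqrt{\operatorname{Var}(Z^\star)}\le \sqrt{\operatorname{Var}(\widetilde Z)}+\sqrt{\widehat p(\widetilde Z-\underaccent{\widetilde}{Z})^2}$, handles the change of function.
\end{itemize}
The cross term is absorbed by AM-GM into $\frac1H\widehat p_h^t(\widetilde Z_{h+1}-\underaccent{\widetilde}{Z}_{h+1})$ plus an $He^{\beta(H-h)}\alpha^\star/n$ term. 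This is precisely why the bonus carries those two extra pieces, with range $b=e^{\beta(H-h)}$.

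Next, a similar induction shows $\widetilde Z_1^t(s_1)-Z_1^{\pi^{t+1}}(s_1)\le \pi^{t+1}_1G_1^t(s_1)$ (Lemma~\ref{certificate_positive}). At stopping, \eqref{computabe_stopping_condition} together with $Z_1^{\pi}\ge \widetilde Z_1-\pi_1G_1$ gives $Z_1^\star/Z_1^{\pi^\tau}\le e^{\beta\varepsilon}$, hence $\varepsilon$-optimality.

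\textbf{Sample complexity.} Before stopping, for each $t<\tau$ we have $\frac{e^{\beta\varepsilon}-1}{e^{\beta\varepsilon}}<\pi_1G_1^t(s_1)/\widetilde Z_1^t(s_1)\le \pi_1G_1^t(s_1)/Z_1^{\pi^{t+1}}(s_1)$. We then unroll the certificate recursion under the true kernel. Replacing $\widehat p_h^t$ by $p_h$ in $\widehat p_h^t G_{h+1}$ costs a Bernstein term that is again absorbed via $(1+\frac{c}{H})$ factors, whose product over $H$ steps is bounded by an absolute constant $e^{O(1)}$. Similarly, $\operatorname{Var}_{\widehat p}(\widetilde Z)$ is replaced by $\operatorname{Var}_{p}(Z^{\pi^{t+1}}_{h+1})$ up to terms bounded by $G$ and $e^{\beta(H-h)}\alpha/n$. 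Unrolling produces weights $\exp(\beta\sum_{i\le h}r_i)$ along trajectories of $\pi^{t+1}$, giving the displayed bound with a variance part and a lower-order part.

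For the variance part, we apply Cauchy--Schwarz over $(h,\text{trajectory})$. The law of total variance in exponential space (Lemma~\ref{entropicvariance}) gives
\[
\sum_h\mathbb{E}\!\left[e^{2\beta\sum_{i\le h}r_i}\operatorname{Var}_{p_h}(Z_{h+1})\right]=\operatorname{Var}(e^{\beta R_1}).
\]
Dividing by $(Z_1^{\pi})^2$ and applying Lemma~\ref{bound}, which bounds the ratio for a variable in $[1,e^{\beta G_{\max}}]$ by $(e^{\beta G_{\max}}-1)^2/e^{\beta G_{\max}}$, yields the claimed constant. For the lower-order part, we bound the weights crudely by $He^{\beta H}$.

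Summing over $t<\tau$, we use $\mathcal{E}^{\text{cnt}}$ to replace $n_h^t$ by pseudo-counts $\bar n_h^t$. The standard pigeonhole bound $\sum_t\sum_h\mathbb{E}^{\pi^{t+1}}[\alpha/(\bar n\vee1)]\lesssim SAH\alpha\log\tau$ then gives
\[
\tau\,\tfrac{e^{\beta\varepsilon}-1}{e^{\beta\varepsilon}}\le C\sqrt{\tfrac{(e^{\beta G_{\max}}-1)^2}{e^{\beta G_{\max}}}\,\tau\, SAH\,\alpha^\star\log\tau}+C'He^{\beta H}SAH\,\alpha\log\tau .
\]
Solving with Lemma~\ref{inequality} gives the stated bound. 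The second term is lower order precisely under $\varepsilon\le 2/(|\beta|HS)$, since it scales with $1/(e^{\beta\varepsilon}-1)$ rather than its square; this is also where the factor $S$ inside $\alpha$ becomes harmless.

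\textbf{Main obstacle.} The hardest step is the change-of-function/change-of-measure step when unrolling the certificate. All empirical-variance and $\widetilde Z-\underaccent{\widetilde}{Z}$ terms must be absorbed with only multiplicative $(1+O(1/H))$ losses, so that no extra $e^{\beta H}$ multiplies the leading term. In particular, each step's variance must be expressed relative to the true $Z^{\pi^{t+1}}$ so that the total-variance identity telescopes exactly. I would first attempt this by bounding $\widetilde Z-\underaccent{\widetilde}{Z}\le$ the pessimism-augmented certificate, and check that the induced terms are lower order.
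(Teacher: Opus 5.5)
Your outline follows the paper's proof step for step: the good event, optimism via Bernstein plus variance transport, the certificate, correctness through $Z_1^{\pi^{t+1}}\ge\widetilde Z_1^t-\pi_1^{t+1}G_1^t$, unrolling under $p_h$ with $(1+13/H)^H\le e^{13}$, Cauchy--Schwarz with the exponential total-variance identity and Lemma~\ref{bound}, pseudo-counts, and Lemma~\ref{inequality}. Two steps need repair; the second is a genuine gap.

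\textbf{The certificate induction.} The bound $\widetilde Z_h^t-Z_h^{\pi^{t+1}}\le\pi_h^{t+1}G_h^t$ does not follow from ``a similar induction'' on that difference alone. The optimistic backup contains $\frac1H\widehat p_h^t(\widetilde Z_{h+1}^t-\underaccent{\widetilde}{Z}_{h+1}^t)$. Since $\underaccent{\widetilde}{Z}^t$ is a maximum over actions, it lower-bounds $Z^\star$ but need not lie below $Z^{\pi^{t+1}}$, so a hypothesis on $\widetilde Z-Z^{\pi^{t+1}}$ cannot absorb this term. The paper introduces the bridge $\mathring Z^t\le\min(\underaccent{\widetilde}{Z}^t,Z^{\pi^{t+1}})$ (Lemma~\ref{ring_positive}) and inducts on $\widetilde Z^t-\mathring Z^t$. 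A joint induction bounding both $\widetilde Z^t-\underaccent{\widetilde}{Z}^t$ and $\widetilde Z^t-Z^{\pi^{t+1}}$ by $\pi^{t+1}G^t$ would work equally well. You need this bound at every $h$, because the unrolling replaces $p_h(\widetilde Z_{h+1}^t-Z_{h+1}^{\pi^{t+1}})$ by $p_h\pi^{t+1}G_{h+1}^t$. Your last paragraph points this way, but it must be made explicit.

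\textbf{The condition on $\varepsilon$.} Your claim that the additive term is lower order ``precisely under $\varepsilon\le 2/(|\beta|HS)$'' is false. Write $V=(e^{|\beta|G_{\max}}-1)^2/e^{|\beta|G_{\max}}$. Up to absolute constants, the ratio of the $D$-term to the $C^2$-term returned by Lemma~\ref{inequality} is
\[
\frac{e^{|\beta|\varepsilon}-1}{e^{|\beta|\varepsilon}}\cdot\frac{H\,e^{|\beta|H}}{V}\cdot\frac{\log(3SAH/\delta)+S}{\log(3SAH/\delta)+1}.
\]
Your condition makes $(e^{|\beta|\varepsilon}-1)HS$ bounded, which handles the first and last factors together. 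The factor $e^{|\beta|H}/V$ remains, and it is $O(1)$ only when $|\beta|(H-G_{\max})=O(1)$ and $|\beta|G_{\max}$ is bounded away from $0$. Otherwise it is about $e^{|\beta|(H-G_{\max})}$ when $|\beta|G_{\max}$ is large, and about $e^{|\beta|H}/(\beta G_{\max})^2$ when $|\beta|G_{\max}$ is small. The cause is that the lower-order weights were bounded crudely by $He^{|\beta|H}$, which does not scale with $G_{\max}$. In the degenerate case $G_{\max}=0$ the claimed bound reads $\tau\le 0$, yet $\tau\ge1$: at $t=0$ the certificate sits at its clipping level, which exceeds the stopping threshold.

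You also drop the cross term $2\sqrt{D}\,C\,(A+E)C_1^2$ from Lemma~\ref{inequality}. It can be handled by AM--GM, but only at the price of an extra factor $S$ in the additive term.

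The paper does not close this gap either; it simply assumes ``$\varepsilon$ small enough so that the first term dominates the second.'' A correct statement must do one of two things. Either keep the additive second-order terms, of order $\tilde{\mathcal O}\big(e^{|\beta|H}H^2S^2A/(e^{|\beta|\varepsilon}-1)\big)$ plus the cross term. Or impose a condition making $(e^{|\beta|\varepsilon}-1)HS\,e^{|\beta|H}/V$ small, with one more factor $S$ if the cross term is absorbed by AM--GM.
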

\endgroup
\subsection{Case \texorpdfstring{$\beta > 0$}{beta > 0}}\label{beta_positive}.
We start by building optimistic and pessimistic functions for the state-value function
\subsubsection{Confidence bounds}
Let us start with a concentration inequality:
\begin{lemma}\label{concentration_positive}
On the good event $\mathcal{E}^+$ we have:
\begin{align*}
    | \big(p_h - \widehat{p}_h^t\big)Z_{h+1}^{\star}(s,a)| &\leq 2\sqrt{2}\sqrt{\operatorname{Var}_{\widehat{p}_h^t}(\widetilde  Z_{h+1}^t)\frac{\alpha(n_h^t(s,a))}{n_h^t(s,a)}} + 5 e^{\beta  (H - h)} \frac{\alpha(n_h^t(s,a))}{n_h^t(s,a)} + 4 He^{ \beta  (H - h)} \frac{\alpha(n_h^t(s,a))}{n_h^t(s,a)}\\ &\quad+  \frac{1}{H} \widehat{p}_h^t\Big(\widetilde{Z}^t_{h+1} - Z_{h+1}^{\star}\Big)(s,a)
\end{align*}
\end{lemma}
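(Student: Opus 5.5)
The plan is to follow the Bernstein-plus-variance-swap argument of \citet{Menard2021Fast}, carried out in the exponential space. \emph{Step 1.} Work on $\mathcal{E}^+\subseteq\mathcal{E}_{V^\star}$, applied to the deterministic sequence $f_{h+1}=Z_{h+1}^\star$. For $\beta>0$ and rewards in $[0,1]$ we have $1\le Z_{h+1}^\star\le e^{\beta(H-h)}$, so the range constant is $b=e^{\beta(H-h)}$. This gives
\[
|(p_h-\widehat p_h^t)Z_{h+1}^\star(s,a)|\le \sqrt{2\operatorname{Var}_{p_h}(Z_{h+1}^\star)\tfrac{\alpha^\star}{n}}+3e^{\beta(H-h)}\tfrac{\alpha^\star}{n},
\]
with $n=n_h^t(s,a)$. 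Because $\alpha^\star\le\alpha$, we may replace $\alpha^\star$ by $\alpha$ everywhere.

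\emph{Step 2: change the variance from $p_h$ to $\widehat p_h^t$.} On $\mathcal{E}_{KL}$, the KL bound $\mathrm{KL}(\widehat p_h^t,p_h)\le\alpha/n$ combined with the standard variance-transfer lemma for KL neighbourhoods (as in \citet{Menard2021Fast}) gives
\[
\operatorname{Var}_{p_h}(f)\le 2\operatorname{Var}_{\widehat p_h^t}(f)+4b^2\alpha/n .
\]
Using $\sqrt{x+y}\le\sqrt x+\sqrt y$, this becomes
\[
\sqrt{2\operatorname{Var}_{p_h}(Z^\star)\alpha/n}\le 2\sqrt{\operatorname{Var}_{\widehat p_h^t}(Z^\star)\alpha/n}+2\sqrt2\,e^{\beta(H-h)}\alpha/n .
\]

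\emph{Step 3: replace $Z^\star$ by $\widetilde Z^t$ inside the variance.} This is the main step. Here we invoke optimism (proved by backward induction at the same time, so at step $h+1$ we may assume $Z^\star_{h+1}\le\widetilde Z^t_{h+1}$). Using $\sqrt{\operatorname{Var}(X)}\le\sqrt{\operatorname{Var}(Y)}+\sqrt{\operatorname{Var}(X-Y)}$, we get
\[
\sqrt{\operatorname{Var}_{\widehat p}(Z^\star)}\le\sqrt{\operatorname{Var}_{\widehat p}(\widetilde Z)}+\sqrt{\widehat p(\widetilde Z-Z^\star)^2}.
\]
Since $0\le\widetilde Z-Z^\star\le e^{\beta(H-h)}$, we have $\widehat p(\widetilde Z-Z^\star)^2\le e^{\beta(H-h)}\widehat p(\widetilde Z-Z^\star)$. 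Then AM-GM, in the form $2\sqrt{xy}\le x/H+Hy$, gives
\[
2\sqrt{e^{\beta(H-h)}\widehat p(\widetilde Z-Z^\star)\alpha/n}\le \tfrac1H\widehat p(\widetilde Z-Z^\star)+He^{\beta(H-h)}\alpha/n .
\]
This yields the $\frac1H\widehat p_h^t(\widetilde Z-Z^\star)$ term and the $He^{\beta(H-h)}\alpha/n$ term. Collecting constants, $2\to2\sqrt2$ gives slack, and $3+2\sqrt2\le5$ together with the factor $4H$ absorbs the rest.

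The main obstacle is Step 3. It needs the ordering $\widetilde Z^t_{h+1}\ge Z^\star_{h+1}$, which is only established by the optimism lemma, and that lemma itself relies on the present statement. So I would state and prove this lemma conditionally on optimism at step $h+1$ and run the two as a joint backward induction from $h=H$, where $\widetilde Z_{H+1}=Z^\star_{H+1}=1$. A second point to check is the clipping at $e^{\beta(H-h-1)}$ in $\widetilde U$, since $\widetilde Z$ must stay within the range bound used above.
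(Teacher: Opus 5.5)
Your proposal is correct and follows the paper's argument: Bernstein on $\mathcal{E}_{V^\star}$ with range $e^{\beta(H-h)}$, then the KL variance transfer from $p_h$ to $\widehat p_h^t$, then the swap of $Z^\star_{h+1}$ for $\widetilde Z^t_{h+1}$, then AM--GM with $\eta=H$. For the swap, the paper uses the packaged bound $\operatorname{Var}_p(f)\le 2\operatorname{Var}_p(g)+2b\,p|f-g|$, whereas your standard-deviation triangle inequality gives the sharper constant $2$ instead of $2\sqrt2$; and the paper relies on optimism at step $h+1$ only implicitly, at exactly the point where you make the joint induction with the optimism lemma explicit. One small slip: $3+2\sqrt2\approx 5.83>5$, but your AM--GM step only produces $He^{\beta(H-h)}\alpha/n$, so the spare $3He^{\beta(H-h)}\alpha/n\ge 3e^{\beta(H-h)}\alpha/n$ in the $4H$ term absorbs the excess and the stated bound still follows.
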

\begin{proof}
    On the good event $\mathcal{E}^+$ we have:
    \begin{align*}
          | \big(p_h - \widehat{p}_h^t\big)Z_{h+1}^\star(s,a)| \leq \sqrt{2 \operatorname{Var}_{p_h}(Z_{h+1}^\star) \frac{\alpha^\star(n_h^t(s,a))}{n_h^t(s,a)}} + 3e^{\beta (H - h)} \frac{\alpha^\star(n_h^t(s,a))}{n_h^t(s,a)}
    \end{align*}
    We apply lemma~\ref{KL transportation} and lemma~\ref{transportation} successively to transport the variance of $Z_{h+1}^\star$ under $p_h$ to the computable variance of $\widetilde Z_{h+1}^t$ under $\widehat{p}_h^t$ 
\begin{align*}
\operatorname{Var}_{p_h}(Z_{h+1}^{\star}) &\leq 2 \operatorname{Var}_{\widehat{p}^t_h}(Z_{h+1}^{\star})(s,a) + 4 e^{2 \beta  (H - h)} \frac{\alpha(n_h^t(s,a))}{n_h^t(s,a)} \\
&\leq 4 \operatorname{Var}_{\widehat{p}^t_h}(\widetilde{Z}^t_{h+1})(s,a) + 4 e^{\beta  (H - h )} \widehat{p}_h^t(\widetilde{Z}^t_{h+1} - Z_{h+1}^{\star})(s,a) + 4 e^{2 \beta  (H - h )} \frac{\alpha(n_h^t(s,a))}{n_h^t(s,a)}
    \end{align*}
    Hence, using the inequality $\sqrt{a + b} \leq \sqrt{a} + \sqrt{b}$ and then $\sqrt{ab} \leq a \eta + \frac{b}{\eta}$ for $\eta = H$ and using that $\alpha^\star(n,\delta) \leq \alpha(n,\delta)$: 
\begin{align*}
\sqrt{\operatorname{Var}_{p_h}(Z_{h+1}^{\star})\frac{\alpha^\star(n_h^t(s,a))}{n_h^t(s,a)}} &\leq 2\sqrt{\operatorname{Var}_{\widehat{p}_h^t}(\widetilde{Z}_h^t)\frac{\alpha^\star(n_h^t(s,a))}{n_h^t(s,a)}} + \sqrt{4 e^{\beta  (H - h)} \widehat{p}_h^t(\widetilde{Z}^t_{h+1} - Z_{h+1}^{\star})(s,a)\frac{\alpha^\star(n_h^t(s,a))}{n_h^t(s,a)}}\\& + 2 e^{\beta (H - h )} \sqrt{\frac{\alpha(n_h^t(s,a))}{n_h^t(s,a)} \frac{\alpha^\star(n_h^t(s,a))}{n_h^t(s,a)} }\\
& \leq 2\sqrt{\operatorname{Var}_{\widehat{p}_h^t}(\widetilde Z_h^t)\frac{\alpha^\star(n_h^t(s,a))}{n_h^t(s,a)}} + 4 He^{\beta  (H - h)} \frac{\alpha^\star(n_h^t(s,a))}{n_h^t(s,a)} \\ &\quad + \frac{1}{H} \widehat{p}_h^t(\widetilde{Z}^t_{h+1} - Z_{h+1}^{\star})(s,a) + 2 e^{\beta  (H - h )} \frac{\alpha(n_h^t(s,a))}{n_h^t(s,a)}
\end{align*}

Hence, by plugging this upper bound and using again that $\alpha^\star(n,\delta) \leq \alpha(n,\delta)$ we obtain:
\begin{align*}
    | \big(p_h - \widehat{p}_h^t\big)Z_{h+1}^{\star}| &\leq 2\sqrt{2}\sqrt{\operatorname{Var}_{\widehat{p}_h^t}(\widetilde Z_{h+1}^t)\frac{\alpha^\star(n_h^t(s,a))}{n_h^t(s,a)}} + 5 e^{\beta  (H - h )} \frac{\alpha(n_h^t(s,a))}{n_h^t(s,a)} + 4 He^{ \beta  (H - h)} \frac{\alpha^\star(n_h^t(s,a))}{n_h^t(s,a)}\\ &\quad+  \frac{1}{H} \widehat{p}_h^t\Big(\widetilde{Z}^t_{h+1} - Z_{h+1}^{\star}\Big)(s,a)
\end{align*}

\end{proof}
Denote the bonus term:
\begin{equation}\label{bonus positive}
b_h^t(s,a) = 2\sqrt{2}\sqrt{\operatorname{Var}_{\widehat{p}_h^t}(\widetilde Z_{h+1}^t)\frac{\alpha^\star(n_h^t(s,a))}{n_h^t(s,a)}} + 5 e^{\beta  (H - h)} \frac{\alpha(n_h^t(s,a))}{n_h^t(s,a)} +  4 He^{\beta  (H - h)}\frac{\alpha^\star(n_h^t(s,a))}{n_h^t(s,a)}\end{equation}
Now, following \citet{azar17a}, \citet{Zanette2019} and \citet{Menard2021Fast} we define define optimistic and pessimistic state-value function on the exponential transform of $Q_h^\star$ which denoted by $U_h^\star$: \\
\begin{align}\label{optimism_equation_positive}
    \widetilde{U}_h^t(s,a) &= \min \Bigg \{e^{\beta(H - h -  1)},e^{\beta r_h(s,a)} \Bigg[ \widehat{p}_h^t \widetilde Z_h^t (s,a) + b_h^t(s,a) + \frac{1}{H} \widehat{p}_h^t\Big( \widetilde{Z}_{h+1}^t - \underaccent{\widetilde}{Z}_{h+1}^t\Big)(s,a)\Bigg]\Bigg\} \nonumber \\
    \widetilde{Z}_h^t(s) &= \max_{a \in \mathcal{A}} \widetilde{U}_h^t(s,a), \qquad \widetilde{Z}_{H+1}^t(s) = 1 \nonumber \\
    \underaccent{\widetilde}{U}_h^t(s,a) &= \max \Bigg \{1,e^{\beta r_h(s,a)} \Bigg[ \widehat{p}_h^t \underaccent{\widetilde}{Z}_h^t (s,a) - b_h^t(s,a) - \frac{1}{H} \widehat{p}_h^t\Big( \widetilde{Z}_{h+1}^t - \underaccent{\widetilde}{Z}_{h+1}^t\Big)(s,a)\Bigg]\Bigg\} \nonumber \\
    \underaccent{\widetilde}{Z}_h^t(s) &= \max_{a \in \mathcal{A}} \underaccent{\widetilde}{U}_h^t(s,a), \qquad \underaccent{\widetilde}{Z}_{H+1}^t(s) = 1
\end{align}
And we consider the greedy policy:
$$    \pi_h^{t+1}(s) = \arg\max_{a\in \mathcal{A}} \widetilde{U}_h^t(s,a)$$
Now let us prove the optimism lemma:
\begin{lemma}\label{optimism positive}
    With high probability $1-\delta$ we have:
    $$\underaccent{\widetilde}{U}_h^t(s,a) \leq U^{\star}_h(s,a) \leq \widetilde{U}_h^t(s,a)$$
    and 
    $$\underaccent{\widetilde}{Z}_h^t(s) \leq Z^{\star}_h(s) \leq \widetilde{Z}_h^t(s)$$
\end{lemma}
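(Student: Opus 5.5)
We argue on the good event $\mathcal{E}^+$ of Lemma~\ref{good event}, which has probability at least $1-\delta$, and proceed by backward induction on $h$, from $h=H+1$ down to $h=1$, for fixed $t$. The induction hypothesis at step $h+1$ is
\[
\underaccent{\widetilde}{Z}_{h+1}^t \leq Z_{h+1}^\star \leq \widetilde{Z}_{h+1}^t .
\]
The base case is immediate, since $\widetilde{Z}_{H+1}^t=\underaccent{\widetilde}{Z}_{H+1}^t=Z_{H+1}^\star=1$. We read the backup in \eqref{optimism_equation_positive} as using $\widehat{p}_h^t \widetilde{Z}_{h+1}^t$, which is the intended meaning of the index. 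The Bellman equation in exponential space gives $U_h^\star(s,a)=e^{\beta r_h(s,a)}\, p_h Z_{h+1}^\star(s,a)$ and $Z_h^\star(s)=\max_a U_h^\star(s,a)$ (using $\beta>0$).

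\textbf{Upper bound.} Write $p_h Z_{h+1}^\star = \widehat{p}_h^t Z_{h+1}^\star + (p_h-\widehat{p}_h^t)Z_{h+1}^\star$ and bound the second term with Lemma~\ref{concentration_positive}. The result is at most $b_h^t(s,a)+\frac{1}{H}\widehat{p}_h^t(\widetilde{Z}_{h+1}^t-Z_{h+1}^\star)(s,a)$. By the induction hypothesis, $Z_{h+1}^\star \ge \underaccent{\widetilde}{Z}_{h+1}^t$, so this correction term is at most $\frac{1}{H}\widehat{p}_h^t(\widetilde{Z}_{h+1}^t-\underaccent{\widetilde}{Z}_{h+1}^t)$. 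The induction hypothesis also gives $\widehat{p}_h^t Z_{h+1}^\star \le \widehat{p}_h^t \widetilde{Z}_{h+1}^t$, since $\widehat{p}_h^t$ is a probability vector. Multiplying by $e^{\beta r_h}>0$ shows that $U_h^\star$ is below the second argument of the $\min$.

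It remains to check that $U_h^\star$ is also below the clipping constant. This follows from $U_h^\star(s,a)\le e^{\beta(H-h+1)}$, since rewards lie in $[0,1]$. The constant $e^{\beta(H-h-1)}$ written in the statement appears to be a typo for the trivial range bound, and I would use the correct range here. Taking the max over actions gives $Z_h^\star\le\widetilde{Z}_h^t$.

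\textbf{Lower bound.} This is symmetric. Lemma~\ref{concentration_positive} bounds $|(p_h-\widehat{p}_h^t)Z_{h+1}^\star|$ in absolute value, so
\[
p_hZ_{h+1}^\star \ge \widehat{p}_h^tZ_{h+1}^\star - b_h^t - \tfrac1H\widehat{p}_h^t(\widetilde Z_{h+1}^t-Z_{h+1}^\star).
\]
We then use $Z_{h+1}^\star\ge \underaccent{\widetilde}{Z}_{h+1}^t$ in the first term and $Z^\star_{h+1}\ge\underaccent{\widetilde}{Z}_{h+1}^t$ again to enlarge the subtracted correction into $\frac1H\widehat{p}_h^t(\widetilde Z_{h+1}^t-\underaccent{\widetilde}Z_{h+1}^t)$. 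The clipping at $1$ is harmless because $U_h^\star\ge 1$ for $\beta>0$ and nonnegative rewards. Finally, since $Z_h^\star(s)=\max_aU_h^\star(s,a)\ge \max_a\underaccent{\widetilde}{U}_h^t(s,a)=\underaccent{\widetilde}{Z}_h^t(s)$, the induction closes.

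\textbf{Main obstacle.} The delicate step is making sure Lemma~\ref{concentration_positive} genuinely holds with the data-dependent $\widetilde{Z}_{h+1}^t$ inside the variance. Its proof transports $\operatorname{Var}_{p_h}(Z^\star_{h+1})$ to $\operatorname{Var}_{\widehat p_h^t}(\widetilde Z_{h+1}^t)$ via Lemmas~\ref{KL transportation} and \ref{transportation}. That transportation needs $\widetilde{Z}_{h+1}^t\ge Z_{h+1}^\star$, and needs both functions to take values in $[1,e^{\beta(H-h)}]$. This is exactly the induction hypothesis at step $h+1$ together with the clipping, so the lemma must be invoked inside the induction rather than before it.

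I would also check that the concentration event used is $\mathcal{E}_{Z^\star}$, with range $b=e^{\beta(H-h)}$, which is a fixed, non-random function, so Lemma~\ref{good event} applies. Once this ordering is respected, the remaining steps are monotonicity arguments.
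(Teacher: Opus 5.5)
Your proposal is correct and is essentially the paper's proof. Both run a joint backward induction on $h$ on $\mathcal{E}^+$: write $U_h^\star = e^{\beta r_h}\, p_h Z_{h+1}^\star$, control $(p_h-\widehat{p}_h^t)Z_{h+1}^\star$ with Lemma~\ref{concentration_positive}, and use both halves of the induction hypothesis to absorb the correction $\frac{1}{H}\widehat{p}_h^t(\widetilde{Z}_{h+1}^t - Z_{h+1}^\star)$ into the $\frac{1}{H}\widehat{p}_h^t(\widetilde{Z}_{h+1}^t-\underaccent{\widetilde}{Z}_{h+1}^t)$ term of the backup; the paper merely groups this as $(1-\frac{1}{H})\widehat{p}_h^t(\widetilde{Z}_{h+1}^t - Z_{h+1}^\star)+\frac{1}{H}\widehat{p}_h^t(Z_{h+1}^\star-\underaccent{\widetilde}{Z}_{h+1}^t)\geq 0$ and calls the pessimistic side analogous. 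Your side remarks are also right and make explicit what the paper leaves implicit: the clipping constant must dominate the range of $U_h^\star$, i.e.\ be $e^{\beta(H-h+1)}$ (the paper's $e^{\beta(H-h-1)}$ and its ``assume $\widetilde{U}_h^t(s,a)<H$'' are typos), and Lemma~\ref{concentration_positive} silently needs $Z_{h+1}^\star\le\widetilde{Z}_{h+1}^t\le e^{\beta(H-h)}$ in its variance transportation, so it can only be invoked inside the induction step, exactly as you do.
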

\begin{proof}
    We proceed by induction over $h$. For $h = H+1$ the result is trivially upper bounding and (resp. lower bounding ) $U^{\star}_h$ by $e^{\beta(H-h)}$ and $1$.\\
    Assume the inequality holds for $h' > h$. Fix $(s,a)$ and assume $\widetilde{U}_h^t(s,a) < H$ since otherwise the inequality is trivial, we have that: 
\begin{align*}
    \widetilde{U}_h^t(s,a) - U_h^{\star}(s,a) &= e^{\beta r_h(s,a)}\Bigg[ \widehat{p}_h^t \widetilde Z_{h+1}^t (s,a) + b_h^t(s,a) + \frac{1}{H} \widehat{p}_h^t\Big( \widetilde{Z}_{h+1}^t - \underaccent{\widetilde}{Z}_{h+1}^t\Big)(s,a) - p_h Z_{h+1}^\star(s,a)\Bigg] \\
    &= e^{\beta r_h(s,a)}\Bigg[ \widehat{p}_h^t \Big( \widetilde Z_{h+1}^t (s,a) - Z^{\star}_{h+1}(s,a) \Big) + \Big( \widehat{p}_h^t - p_h\Big) Z_{h+1}^{\star}(s,a)\\
    & \quad + b_h^t(s,a) + \frac{1}{H} \widehat{p}_h^t\Big( \widetilde{Z}_{h+1}^t - \underaccent{\widetilde}{Z}_{h+1}^t\Big)(s,a)\Bigg]
\end{align*}
    But we know by Bernstein inequality that:
    \begin{align*}
        \Big( \widehat{p}_h^t - p_h\Big) Z_h^{\star}(s,a) \geq -b_h^t(s,a) - \frac{1}{H} \widehat{p}_h^t\Big( \widetilde{Z}_{h+1}^t - Z_{h+1}^\star\Big)(s,a)
    \end{align*}
    Hence:
    \begin{align*}
    \widetilde{U}_h^t(s,a) - U_h^{\star}(s,a) &\geq e^{\beta r_h(s,a)}\Bigg[ \Big( 1 - \frac{1}{H} \Big) \widehat{p}_h^t \Big( \widetilde Z_{h+1}^t (s,a) - Z^{\star}_{h+1}(s,a) \Big) + \frac{1}{H} \widehat{p}_h^t\Big( Z_{h+1}^\star - \underaccent{\widetilde}{Z}_{h+1}^t\Big)(s,a)\Bigg] \geq 0
\end{align*}
Where we used the induction hypothesis. We prove the pessimistic property in the same way
\end{proof}
\subsubsection{Stopping rule}
We define the width certificate for the algorithm for the case $\beta > 0$:
\begin{equation}\label{width_certificate_positive}
G_h^t(s,a) = \min \Bigg\{ e^{\beta (H - h)}, e^{\beta r_h^t(s,a)} \Big[ 3b_h^t(s,a)  + \Big(1 +  \frac{3}{H}\Big) \widehat{p}_h^t \pi^{t+1} G_{h+1}^t(s)\Big]\Bigg\}\end{equation}
Lemma~\ref{stopping_positive} establishes the validity of this stopping rule by showing that, with high probability, it bounds the certificate width:
\begin{lemma}\label{stopping_positive}
On the good event $\mathcal E^+$, for all $t$ and all $h$,
$$
Z_h^\star(s)-Z_h^{\pi^{t+1}}(s) \leq \pi_h^{t+1}G_h^t(s)\qquad\forall s\in\mathcal S
$$
In particular, at the initial state $s_1$, $Z_1^\star(s_1)-Z_1^{\pi^{t+1}}(s_1)\leq \pi_1^{t+1}G_1^t(s_1)$
\end{lemma}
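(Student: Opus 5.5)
We argue by backward induction on $h$, proving the stronger state--action statement that on $\mathcal E^+$, for every $(s,a)$,
\[
\widetilde U_h^t(s,a)-U_h^{\pi^{t+1}}(s,a)\le G_h^t(s,a),
\qquad
\widetilde U_h^t(s,a)-\underaccent{\widetilde}{U}_h^t(s,a)\le G_h^t(s,a).
\]
Setting $a=\pi_h^{t+1}(s)$ then gives $\widetilde Z_h^t(s)-Z_h^{\pi^{t+1}}(s)\le \pi_h^{t+1}G_h^t(s)$, because $\pi^{t+1}$ is greedy with respect to $\widetilde U_h^t$. Combining this with the optimism $Z_h^\star\le\widetilde Z_h^t$ from Lemma~\ref{optimism positive} yields the claim. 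The second inequality also implies $\widetilde Z_h^t-\underaccent{\widetilde}{Z}_h^t\le \pi_h^{t+1}G_h^t$, since $\underaccent{\widetilde}{Z}_h^t(s)\ge\underaccent{\widetilde}{U}_h^t(s,\pi_h^{t+1}(s))$. We need this bound to absorb the term $\frac1H\widehat p_h^t(\widetilde Z_{h+1}^t-\underaccent{\widetilde}{Z}_{h+1}^t)$ that appears in the definition of $\widetilde U$.

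At $h=H+1$ both sides are $0$. For the inductive step, if $G_h^t(s,a)$ is truncated at $e^{\beta(H-h)}$, the claim is immediate: all $U$-quantities lie in $[1,e^{\beta(H-h)}]$, so any difference of two of them is at most $e^{\beta(H-h)}$. Otherwise we expand
\[
\widetilde U_h^t-U_h^{\pi^{t+1}}\le e^{\beta r_h}\Big[\widehat p_h^t(\widetilde Z_{h+1}^t-Z_{h+1}^{\pi^{t+1}})+(\widehat p_h^t-p_h)Z_{h+1}^{\pi^{t+1}}+b_h^t+\tfrac1H\widehat p_h^t(\widetilde Z_{h+1}^t-\underaccent{\widetilde}{Z}_{h+1}^t)\Big].
\]
The first and last terms are bounded by the induction hypothesis, giving $(1+\frac1H)\widehat p_h^t\pi^{t+1}G_{h+1}^t$.

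The key step is the middle term. We cannot apply the Bernstein event to $Z^{\pi^{t+1}}_{h+1}$ directly, because $\pi^{t+1}$ is data-dependent. We therefore split it as
\[
(\widehat p_h^t-p_h)Z_{h+1}^{\pi^{t+1}}=(\widehat p_h^t-p_h)Z_{h+1}^\star+(p_h-\widehat p_h^t)(Z_{h+1}^\star-Z_{h+1}^{\pi^{t+1}}).
\]
\begin{itemize}
\item The first piece is controlled by Lemma~\ref{concentration_positive}. It is at most $b_h^t+\frac1H\widehat p_h^t(\widetilde Z_{h+1}^t-Z^\star_{h+1})$, and the last term is bounded by $\frac1H\widehat p_h^t\pi^{t+1}G_{h+1}^t$ by induction.
\item The second piece involves a nonnegative function bounded by $e^{\beta(H-h)}$. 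We control it on $\mathcal E_{KL}$ via the transportation inequalities (Lemmas~\ref{KL transportation} and \ref{transportation}) in the form $|(p-\widehat p)f|\le \sqrt{2\operatorname{Var}_{\widehat p}(f)\alpha/n}+c\,e^{\beta(H-h)}\alpha/n$. Then we use $\operatorname{Var}_{\widehat p}(f)\le e^{\beta(H-h)}\widehat p f$ and AM--GM with weight $H$. This gives at most $\frac1H\widehat p_h^t(Z_{h+1}^\star-Z_{h+1}^{\pi^{t+1}})+O(He^{\beta(H-h)})\alpha/n$. The first part is again at most $\frac1H\widehat p_h^t\pi^{t+1}G_{h+1}^t$, and the second is dominated by the lower-order part of $b_h^t$.
\end{itemize}
Collecting terms, we obtain $e^{\beta r_h}[\,c\,b_h^t+(1+\frac3H)\widehat p_h^t\pi^{t+1}G_{h+1}^t]$ with $c\le 3$, which matches the definition of $G_h^t$.

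The bound for $\widetilde U-\underaccent{\widetilde}{U}$ is easier. By their definitions the difference is $e^{\beta r_h}[\widehat p_h^t(\widetilde Z_{h+1}^t-\underaccent{\widetilde}{Z}_{h+1}^t)+2b_h^t+\frac2H\widehat p_h^t(\widetilde Z_{h+1}^t-\underaccent{\widetilde}{Z}_{h+1}^t)]$. By induction this is at most $e^{\beta r_h}[2b_h^t+(1+\frac2H)\widehat p_h^t\pi^{t+1}G_{h+1}^t]\le G_h^t$.

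The main obstacle is the constant bookkeeping in the middle term. The coefficient of $b_h^t$ must stay at most $3$ and the coefficient of $\widehat p\,G$ at most $1+3/H$, which requires the $O(He^{\beta(H-h)}\alpha/n)$ remainder from the transportation step to be absorbed into $b_h^t$. If the constants do not close, the fallback is to enlarge the lower-order coefficients in $b_h^t$, which costs only constants in Theorem~\ref{th:upper_bound}.
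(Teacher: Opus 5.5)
Your proof is correct, up to one constant-bookkeeping point that the paper's own proof shares, but it is organized differently from the paper's.

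\textbf{How the two routes differ.} The paper introduces an analysis-only bridge $\mathring Z_h^t$, the minimum of the true-kernel backup $e^{\beta r_h}p_h\mathring Z_{h+1}^t$ and a pessimistic empirical backup. It shows $\mathring Z_h^t\le\min\{\underaccent{\widetilde}{Z}_h^t,Z_h^{\pi^{t+1}}\}$ (Lemma~\ref{ring_positive}). It then runs a single induction on $\widetilde Z_h^t-\mathring Z_h^t$ with a two-case split (Lemma~\ref{certificate_positive}). You instead run a joint induction on the two gaps $\widetilde U_h^t-U_h^{\pi^{t+1}}$ and $\widetilde U_h^t-\underaccent{\widetilde}{U}_h^t$.

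The two routes are the same computation, unbundled:
\begin{itemize}
\item The paper's first case (true backup active) is your bound on $\widetilde U-U^{\pi^{t+1}}$, with $Z^{\pi^{t+1}}_{h+1}$ replaced by $\mathring Z^t_{h+1}$. It uses the same split through $Z^\star_{h+1}$ to sidestep the data-dependence of $\pi^{t+1}$.
\item The paper's second case (pessimistic backup active) is your purely algebraic bound on $\widetilde U-\underaccent{\widetilde}{U}$, with the same constants $2b_h^t$ and $1+2/H$.
\end{itemize}
Your version dispenses with the clipped auxiliary object and the case analysis, and it makes explicit that only one of the two gaps needs concentration. The paper's version packages both gaps into one quantity, so a single induction hypothesis suffices. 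That mirrors \citet{Menard2021Fast} and is reused with $\max$ in place of $\min$ for $\beta<0$.

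\textbf{Three smaller points.}
\begin{enumerate}
\item \emph{Which lemma gives the deviation bound.} For the data-dependent $f=Z^\star_{h+1}-Z^{\pi^{t+1}}_{h+1}\in[0,e^{\beta(H-h)}]$, the deviation bound comes from the KL--Bernstein inequality (Lemma~\ref{KLBernstein}) on $\mathcal E_{KL}$. That event controls $\mathrm{KL}(\widehat p_h^t,p_h)$, so it yields $\operatorname{Var}_{p_h}(f)$. Lemma~\ref{KL transportation} then moves this to $\operatorname{Var}_{\widehat p_h^t}(f)$ at the cost of an extra $O(e^{\beta(H-h)}\alpha/n)$. The transportation lemmas alone give no first-order deviation bound. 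Routing through the transport is actually more careful than the paper, which writes $\operatorname{Var}_{\widehat p_h^t}$ directly.
\item \emph{The remainder is not dominated by $b_h^t$ as written.} Your remainder of order $He^{\beta(H-h)}\alpha/n$ is not absorbed by the bonus, because the $H$-term of $b_h^t$ carries $\alpha^\star$, and $\alpha$ can exceed $\alpha^\star$ by a factor of order $S$. The paper's Lemma~\ref{certificate_positive} absorbs its identical remainder $2He^{\beta(H-h)}\alpha/n$ into one extra $b_h^t$ without comment. Your fallback, using $\alpha$ in that term of the bonus, is the right fix. Optimism only becomes easier, and the proof of Theorem~\ref{th:upper_bound} already bounds $b_h^t$ by a $27He^{\beta(H-h)}\alpha/n$ term, so nothing changes there.
\item \emph{The range claim in the truncated case is off, but harmlessly.} Not all $U$-quantities lie in $[1,e^{\beta(H-h)}]$: $U_h^{\pi}$ can reach $e^{\beta(H-h+1)}$. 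The truncated case only needs the clip $\widetilde U_h^t\le e^{\beta(H-h-1)}$ and that the subtracted quantities are at least $1$.
\end{enumerate}
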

We prove the lemma~\ref{stopping_positive} in this section :
We define the auxiliary variable $\mathring{Z}_h^t$. Setting $\mathring{Z}_{H+1}^t \equiv 1$, we recurse backward for $h=H \dots 1$:
\begin{align*}
    \mathring{U}_{h,\text{pes}}^{t} &= \max \left\{ 1, e^{\beta r_h} \left[ \widehat{p}_h^t \mathring{Z}_{h+1}^t - b_h^t - \tfrac{1}{H}\widehat{p}_h^t(\widetilde Z_{h+1}^t - \mathring{Z}_{h+1}^t) \right] \right\} \\
    \mathring{U}_h^{t} &= \min \left\{ e^{\beta r_h} (p_h \mathring{Z}_{h+1}^t), \mathring{U}_{h, \text{pes}}^{t} \right\} \\
    \mathring{Z}_h^{t}(s) &= \mathring{U}_h^{t}\bigl(s, \pi_h^{t+1}(s)\bigr)
\end{align*}
Because $\underaccent{\widetilde}{Z}^t$ is pessimistic against $Z^\star$, we cannot directly compare $\underaccent{\widetilde}{Z}^t$ to $Z^{\pi^{t+1}}$.
Intuitively, $\mathring Z^t$ satisfies the exponential Bellman recursion under the true kernel $p_h$, while being clipped by a pessimistic empirical backup; hence it serves as a worst-case lower bound for both $\underaccent{\widetilde}{Z}^t$ and $Z^{\pi^{t+1}}$ as shows the next lemma:
\begin{lemma}\label{ring_positive}
    For all $(h,s,a)$:
    $$\mathring{U}_h^t(s,a) \leq \min\Big(\underaccent{\widetilde}{U}_h^t(s,a),U_h^{\pi^{t+1}_h}(s,a)\Big)$$
    and 
    $$\mathring{Z}_h^t(s) \leq \min\Big(\underaccent{\widetilde}{Z}_h^t(s),Z_h^{\pi^{t+1}_h}(s,a)\Big)$$
\end{lemma}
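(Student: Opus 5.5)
The proof goes by backward induction on $h$, from $h=H+1$ down to $h=1$, proving both inequalities together. At $h=H+1$ everything equals $1$: we have $\mathring Z_{H+1}^t\equiv 1$, $\underaccent{\widetilde}{Z}_{H+1}^t\equiv 1$, and $Z_{H+1}^{\pi^{t+1}}\equiv 1$. So the base case holds with equality.

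For the inductive step, assume $\mathring Z_{h+1}^t\le \min(\underaccent{\widetilde}{Z}_{h+1}^t, Z_{h+1}^{\pi^{t+1}})$ pointwise, and fix $(s,a)$. Since $\mathring U_h^t$ is a minimum of two terms, it suffices to bound each target by the right term.

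\textbf{Comparison with $U_h^{\pi^{t+1}}$.} Use the first term of the minimum. Because $p_h$ is a positive linear operator and $e^{\beta r_h}>0$, the induction hypothesis gives
\[
\mathring U_h^t(s,a)\le e^{\beta r_h(s,a)}\, p_h\mathring Z_{h+1}^t(s,a)\le e^{\beta r_h(s,a)}\, p_h Z_{h+1}^{\pi^{t+1}}(s,a)=U_h^{\pi^{t+1}}(s,a).
\]
The last equality is the exponential Bellman equation for the fixed policy $\pi^{t+1}$.

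\textbf{Comparison with $\underaccent{\widetilde}{U}_h^t$.} Use the second term, $\mathring U_{h,\mathrm{pes}}^t$. I read the $\widehat p_h^t\underaccent{\widetilde}{Z}_h^t$ in \eqref{optimism_equation_positive} as the intended $\widehat p_h^t\underaccent{\widetilde}{Z}_{h+1}^t$. The bracket defining $\mathring U_{h,\mathrm{pes}}^t$ can be rewritten as
\[
e^{\beta r_h}\Big[(1+\tfrac1H)\widehat p_h^t \mathring Z_{h+1}^t - b_h^t - \tfrac1H \widehat p_h^t\widetilde Z_{h+1}^t\Big].
\]
This is nondecreasing in $\mathring Z_{h+1}^t$, because $\widehat p_h^t$ is a positive operator and the coefficient $1+\tfrac1H$ is positive. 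The bracket in $\underaccent{\widetilde}{U}_h^t$ has exactly the same form with $\underaccent{\widetilde}{Z}_{h+1}^t$ in place of $\mathring Z_{h+1}^t$, and both are clipped below by the same constant $1$. Hence the induction hypothesis $\mathring Z_{h+1}^t\le \underaccent{\widetilde}{Z}_{h+1}^t$ gives $\mathring U_{h,\mathrm{pes}}^t\le \underaccent{\widetilde}{U}_h^t$, since $\max\{1,\cdot\}$ is monotone. Therefore $\mathring U_h^t(s,a)\le\min(\underaccent{\widetilde}{U}_h^t(s,a),U_h^{\pi^{t+1}}(s,a))$.

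\textbf{Passing to the $Z$ functions.} Evaluate at $a=\pi_h^{t+1}(s)$. Then $\mathring Z_h^t(s)=\mathring U_h^t(s,\pi_h^{t+1}(s))\le U_h^{\pi^{t+1}}(s,\pi_h^{t+1}(s))=Z_h^{\pi^{t+1}}(s)$. Also $\mathring Z_h^t(s)\le \underaccent{\widetilde}{U}_h^t(s,\pi_h^{t+1}(s))\le \max_{a'}\underaccent{\widetilde}{U}_h^t(s,a')=\underaccent{\widetilde}{Z}_h^t(s)$. This closes the induction.

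The step needing the most care is the monotonicity claim for the pessimistic backup. The term $-\tfrac1H\widehat p_h^t(\widetilde Z-\mathring Z)$ must enter with a positive coefficient on $\mathring Z$, which it does. The term $\widetilde Z_{h+1}^t$ is shared by both recursions, so it cancels in the comparison. No concentration event is needed: the argument is purely deterministic and holds for all $t$.
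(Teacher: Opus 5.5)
Your proof is correct and follows the same route as the paper. It uses backward induction, bounds $\mathring U_h^t$ by its true-kernel branch (monotonicity of $p_h$ together with the exponential Bellman equation for $\pi^{t+1}$) and by its pessimistic branch (comparison with $\underaccent{\widetilde}{U}_h^t$), and then evaluates at $\pi_h^{t+1}(s)$. Your monotonicity formulation is slightly cleaner than the paper's explicit difference computation: it handles the $\max\{1,\cdot\}$ clipping, which the paper ignores, and it gives the correct coefficient $1+\tfrac1H$, where the paper writes $1-\tfrac1H$ (harmless for the sign).
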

\begin{proof}
    We proceed by backward induction. For $h = H+1$, all values are equal to $1$ so the inequalities hold. Assume that for some $h \leq H$ we have for all $(s,a)$:
    $$\mathring{U}_{h+1}^t(s,a) \leq \min\Big(\underaccent{\widetilde}{U}_{h+1}^t(s,a),U_h^{\pi^{t+1}_{h+1}}(s,a)\Big)$$
    and 
    $$\mathring{Z}_{h+1}^t(s) \leq \min\Big(\underaccent{\widetilde}{Z}_{h+1}^t(s),Z_h^{\pi^{t+1}_{h+1}}(s,a)\Big)$$
we have by construction:
$$\mathring{U}_h^t(s,a)  \leq \mathring{U}_{h, \text{true}}^{t}(s, a) =e^{\beta r_h(s,a)}  (p_h \mathring{Z}_{h+1}^t)(s,a) \leq e^{\beta r_h(s,a)}  (p_h Z^{\pi^{t+1}}_{h+1})(s,a) = U_h^{\pi^{t+1}_{h+1}}(s,a)$$
Where we used the induction hypothesis and the monotonicity of the exponential Bellman operator.
Again by construction; 
$$\mathring{U}_h^t(s,a)  \leq \mathring{U}_{h, \text{pes}}^{ t}(s, a)$$
But since we have: 
\begin{align*}\underaccent{\widetilde}{U}_h^t(s,a)-\mathring{U}_{h, \text{pes}}^{t}(s, a) &=  e^{\beta r_h(s,a)} \Big[ \Big(\widehat{p}_h^t \underaccent{\widetilde}{Z}_{h+1}^t (s,a) - b_h^t(s,a) - \frac{1}{H} \widehat{p}_h^t\Big( \widetilde{Z}_{h+1}^t - \underaccent{\widetilde}{Z}_{h+1}^t\Big)(s,a)\Big) \\ &- \Big(\widehat{p}_h^t \mathring{Z}_{h+1}^t (s,a) - b_h^t(s,a) - \frac{1}{H}\widehat{p}_h^t\Big(\widetilde Z_{h+1}^t - \mathring{Z}_{h+1}^t\Big)(s,a) \Big)\Big]\\
&= e^{\beta r_h(s,a)} \Big(1 - \frac{1}{H} \Big) \Big[ \widehat{p}_h^t \big(\underaccent{\widetilde}{Z}_{h+1}^t - \mathring{Z}_{h+1}^t \big) \Big]\\
& \geq 0
\end{align*}
Where we applied the induction hypothesis, we conclude then:
$$\mathring{U}_h^t(s,a)  \leq \underaccent{\widetilde}{U}_h^t(s,a)$$
The bound on $V$ follows immediately and we conclude the recursion
\end{proof}
On the good event $\mathcal{E}^+$ we have:
\begin{lemma}\label{certificate_positive}
On the good event $\mathcal{E}^+$
    $$\widetilde{U}^t_h(s,a) - \mathring{U}_h^t(s,a) \leq e^{\beta r_h^t(s,a)} \Big[ 3b_h^t(s,a)  + \Big(1 +  \frac{3}{H}\Big)\widehat{p}_h^t\Big(\widetilde{Z}^t_{h+1} - \mathring{Z}_{h+1}^t\Big)\Big]$$
\end{lemma}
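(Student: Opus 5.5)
We argue pointwise in $(s,a)$ on $\mathcal E^+$. The optimistic quantity $\widetilde U_h^t(s,a)$ is a minimum that includes the unclipped backup $e^{\beta r_h}[\widehat p_h^t\widetilde Z_{h+1}^t+b_h^t+\frac1H\widehat p_h^t(\widetilde Z_{h+1}^t-\underaccent{\widetilde}{Z}_{h+1}^t)]$, so it is at most that backup. For $\mathring U_h^t$ we distinguish two cases, according to which term realizes the minimum in its definition.

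\textbf{Case 1: $\mathring U_h^t=\mathring U_{h,\text{pes}}^t$.} Since $\mathring U_{h,\text{pes}}^t$ is at least its unclipped backup, we get
\begin{align*}
\widetilde U_h^t-\mathring U_h^t\le e^{\beta r_h}\Big[2b_h^t+\tfrac1H\widehat p_h^t(\widetilde Z_{h+1}^t-\underaccent{\widetilde}{Z}_{h+1}^t)+\big(1+\tfrac1H\big)\widehat p_h^t(\widetilde Z_{h+1}^t-\mathring Z_{h+1}^t)\Big].
\end{align*}
Lemma~\ref{ring_positive} gives $\mathring Z_{h+1}^t\le\underaccent{\widetilde}{Z}_{h+1}^t$, so the $\frac1H$ term is bounded by $\frac1H\widehat p_h^t(\widetilde Z_{h+1}^t-\mathring Z_{h+1}^t)$. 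This yields $2b_h^t+(1+\frac2H)\widehat p_h^t(\widetilde Z_{h+1}^t-\mathring Z_{h+1}^t)$, which is within the claim.

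\textbf{Case 2: $\mathring U_h^t=e^{\beta r_h}p_h\mathring Z_{h+1}^t$.} Here we must swap $p_h$ for $\widehat p_h^t$ in $p_h\mathring Z_{h+1}^t$. We decompose
\[
(\widehat p_h^t-p_h)\mathring Z_{h+1}^t=(\widehat p_h^t-p_h)Z_{h+1}^\star-(\widehat p_h^t-p_h)(Z_{h+1}^\star-\mathring Z_{h+1}^t).
\]
\begin{itemize}
\item The first piece is controlled by Lemma~\ref{concentration_positive}: it is at most $b_h^t+\frac1H\widehat p_h^t(\widetilde Z_{h+1}^t-Z_{h+1}^\star)\le b_h^t+\frac1H\widehat p_h^t(\widetilde Z_{h+1}^t-\mathring Z_{h+1}^t)$. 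The last step uses $\mathring Z\le Z^{\pi^{t+1}}\le Z^\star$ from Lemma~\ref{ring_positive}.
\item The second piece involves the nonnegative function $g=Z_{h+1}^\star-\mathring Z_{h+1}^t\in[0,e^{\beta(H-h)}]$. We bound it with the KL event via Lemma~\ref{KL transportation}, in the form $|(p_h-\widehat p_h^t)g|\le\sqrt{2\operatorname{Var}_{\widehat p_h^t}(g)\,\alpha/n}+\tfrac{2}{3}e^{\beta(H-h)}\alpha/n$. Next we use $\operatorname{Var}_{\widehat p}(g)\le e^{\beta(H-h)}\widehat p_h^t g$ and AM–GM with weight $H$, giving $\frac1H\widehat p_h^t g+O(He^{\beta(H-h)}\alpha/n)$.
\end{itemize}
The resulting lower-order term $O(He^{\beta(H-h)}\alpha/n)$ must fit inside the spare multiples of $b_h^t$, whose components include $5e^{\beta(H-h)}\alpha/n+4He^{\beta(H-h)}\alpha^\star/n$. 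Collecting terms gives
\begin{align*}
\widetilde U_h^t-\mathring U_h^t\le e^{\beta r_h}\Big[3b_h^t+\big(1+\tfrac3H\big)\widehat p_h^t(\widetilde Z_{h+1}^t-\mathring Z_{h+1}^t)\Big].
\end{align*}
Here we again use $\underaccent{\widetilde}{Z}_{h+1}^t\ge\mathring Z_{h+1}^t$ to absorb the $\frac1H$ optimism-width term.

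\textbf{Main obstacle.} The delicate step is the second-order term in Case 2. Its constants must match the $H e^{\beta(H-h)}\alpha/n$ part of the bonus, and one must check that $\alpha^\star$ versus $\alpha$ is handled correctly. Since $\alpha^\star\le\alpha$, we may need to bound that piece by a multiple of $5e^{\beta(H-h)}\alpha/n$ rather than of the $\alpha^\star$ term. If the constants fall short, we would instead apply the Bernstein-type event to $g$ through Lemma~\ref{transportation}. No induction is needed, since the claim is a one-step comparison given Lemma~\ref{ring_positive} at step $h+1$.
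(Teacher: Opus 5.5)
Your proposal is essentially the paper's proof: the same case split on which term realizes the minimum defining $\mathring U_h^t$, the same decomposition $\widehat p_h^t\widetilde Z_{h+1}^t-p_h\mathring Z_{h+1}^t=\widehat p_h^t(\widetilde Z_{h+1}^t-\mathring Z_{h+1}^t)+(\widehat p_h^t-p_h)Z_{h+1}^\star+(p_h-\widehat p_h^t)(Z_{h+1}^\star-\mathring Z_{h+1}^t)$, Lemma~\ref{concentration_positive} for the middle term, the KL--Bernstein inequality (this is Lemma~\ref{KLBernstein}, not Lemma~\ref{KL transportation}) with $\operatorname{Var}_{\widehat p_h^t}(g)\le e^{\beta(H-h)}\widehat p_h^t g$ and AM--GM for the last term, and Lemma~\ref{ring_positive} plus optimism to replace $\underaccent{\widetilde}{Z}_{h+1}^t$ and $Z_{h+1}^\star$ by $\mathring Z_{h+1}^t$. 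The obstacle you flag is genuine, and the paper does not resolve it either: it simply absorbs $(2H+\tfrac{2}{3})e^{\beta(H-h)}\alpha/n_h^t$ into one copy of $b_h^t$, which strictly needs the $H$-term of the bonus to use $\alpha$ rather than $\alpha^\star$, since $\alpha$ can be of order $S\alpha^\star$ (the change is harmless, because the later sample-complexity analysis coarsens $\alpha^\star$ to $\alpha$ there anyway); your fallback of applying the Bernstein event $\mathcal E_f$ to $g$ is unavailable, because $\mathcal E_f$ only covers fixed functions while $g=Z_{h+1}^\star-\mathring Z_{h+1}^t$ depends on the data, which is exactly why the KL event is used.
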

\begin{proof}
Fix a state-action pair $(s,a)$ and $h \in \{1,...,H\}$, we consider two cases: 

\textbf{First case:} $\mathring{U}_h^t(s,a) = \mathring{U}_{h, \text{true}}^{t}(s, a)$ we then have:
$$\widetilde{U}^t_h(s,a) - \mathring{U}_h^t(s,a) \leq e^{\beta r_h^t(s,a)} \Big[ b_h^t(s,a)  + \frac{1}{H} \widehat{p}_{h}^t \Big(\widetilde{Z}^t_{h+1} - \underaccent{\widetilde}{Z}_{h+1}^t\Big)(s,a)  + \widehat{p}_h^t \widetilde{Z}_{h+1}^t(s,a) - p_h \mathring{Z}_{h+1}^t(s,a) \Big]$$
The last term can be written as: 
\begin{align*}
    \widehat{p}_h^t \widetilde{Z}_{h+1}^t(s,a) - p_h \mathring{Z}_{h+1}^t(s,a)= \widehat{p}_h^t \Big(\widetilde{Z}_{h+1}^t - \mathring{Z}_{h+1}^t\Big)(s,a) + \big(\widehat{p}_h^t - p_h\big) Z_{h+1}^\star + \big(p_h - \widehat{p}_h^t\big) \big(Z_{h+1}^\star - \mathring{Z}_{h+1}^t\big)
\end{align*}
For the second term, by lemma~\ref{concentration_positive}:
\begin{align*}
    | \big(p_h - \widehat{p}_h^t\big)Z_{h+1}^{\star}(s,a)| &\leq b_h^t(s,a)+  \frac{1}{H} \widehat{p}_h^t\Big(\widetilde{Z}^t_{h+1} - Z_{h+1}^{\star}\Big)(s,a) \leq b_h^t(s,a)+  \frac{1}{H} \widehat{p}_h^t\Big(\widetilde{Z}^t_{h+1} - \mathring{Z}_{h+1}^{t}\Big)(s,a)
\end{align*}
For the third term, by the KL-Bernstein inequality ~\ref{KLBernstein} and using the inequality $\sqrt{ab}\leq \frac{a}{H} + bH$:
\begin{align*}
    \big(p_h - \widehat{p}_h^t\big) \big(Z_{h+1}^\star - \mathring{Z}_{h+1}^t\big) &\leq \sqrt{2 \operatorname{Var}_{\widehat{p}_h^t}(Z_{h+1}^\star - \mathring{Z}_{h+1}^t)\frac{\alpha(n_h^t(s,a))}{n_h^t(s,a)}} + \frac{2}{3}e^{\beta (H - h)}\frac{\alpha(n_h^t(s,a))}{n_h^t(s,a)}\\
    &\leq \sqrt{2 e^{\beta (H-h)}\widehat{p}_h^t (Z_{h+1}^\star - \mathring{Z}_{h+1}^t)\frac{\alpha(n_h^t(s,a))}{n_h^t(s,a)}} + \frac{2}{3}e^{\beta (H - h)}\frac{\alpha(n_h^t(s,a))}{n_h^t(s,a)}\\
    &\leq \frac{1}{H} \widehat{p}_h^t (\widetilde{Z}_{h+1}^t - \mathring{Z}_{h+1}^t) + 2 H e^{\beta (H - h)}  \frac{\alpha(n_h^t(s,a))}{n_h^t(s,a)} + \frac{2}{3}e^{\beta (H - h)}\frac{\alpha(n_h^t(s,a))}{n_h^t(s,a)}
\end{align*}
Hence by combining the two bounds:
\begin{align*}
    \widehat{p}_h^t \widetilde{Z}_{h+1}^t(s,a) - p_h \mathring{Z}_{h+1}^t(s,a) \leq 2 b_h^t(s,a) + \Big( 1 + \frac{2}{H}\Big) \widehat{p}_h^t\Big(\widetilde{Z}^t_{h+1} - \mathring{Z}_{h+1}^t\Big)
\end{align*}
Hence by substituting and using lemma~\ref{ring_positive}:
$$\widetilde{U}^t_h(s,a) - \mathring{U}_h^t(s,a) \leq e^{\beta r_h^t(s,a)} \Big[ 3 b_h^t(s,a)  + \Big(1 +  \frac{3}{H}\Big)\widehat{p}_h^t\Big(\widetilde{Z}^t_{h+1} - \mathring{Z}_{h+1}^t\Big)\Big] $$

\textbf{Second case:} $\mathring{U}_h^t(s,a) = \mathring{U}_{h, \text{pes}}^{t}(s, a)$. In this case:
\begin{align*}
\widetilde{U}^t_h(s,a) - \mathring{U}_h^t(s,a) &\leq e^{\beta r_h^t(s,a)} \Big[ b_h^t(s,a) + \frac{1}{H} \widehat{p}_{h}^t (\widetilde{Z}^t_{h+1} - \underaccent{\widetilde}{Z}_{h+1}^t)(s,a) + \widehat{p}_h^t \widetilde{Z}_{h+1}^t(s,a) \\&- \Big(\widehat{p}_h^t \mathring{Z}_{h+1}^t(s,a) - b_h^t(s,a) - \frac{1}{H}\widehat{p}_h^t\big(\widetilde Z_{h+1}^t - \mathring{Z}_{h+1}^t\big)(s,a) \Big) \\
    &=  e^{\beta r_h^t(s,a)} \Big[ 2b_h^t(s,a) +  \left(1 + \frac{1}{H}\right) \widehat{p}_h^t (\widetilde{Z}_{h+1}^t - \mathring{Z}_{h+1}^t)  + \frac{1}{H}(\widetilde{Z}_{h+1}^t - \underaccent{\widetilde}{Z}_{h+1}^t) \Big)(s,a) \Big]
\end{align*}
Hence by lemma~\ref{ring_positive} we find: 
\begin{align*}
    \widetilde{U}^t_h(s,a) - \mathring{U}_h^t(s,a) \leq e^{\beta r_h^t(s,a)} \Big[ 2b_h^t(s,a)  + \Big(1 +  \frac{2}{H}\Big)\widehat{p}_h^t\Big(\widetilde{Z}^t_{h+1} - \mathring{Z}_{h+1}^t\Big)\Big]
\end{align*}
Which conclude the recursion
\end{proof}
We now prove lemma~\ref{stopping_positive}:

\begin{proof}
We first prove by backward induction that, for all $h$ and $s$,
\begin{equation}\label{eq:Vgap_cert}
\widetilde Z_h^t(s)-\mathring Z_h^t(s)\ \le\ (\pi_h^{t+1}G_h^t)(s).
\end{equation}
For $h=H+1$ it holds since both sides are $0$.
Assume it holds at step $h+1$. For $a=\pi_h^{t+1}(s)$
$$
\widetilde Z_h^t(s)-\mathring Z_h^t(s)
=\widetilde U_h^t(s,a)-\mathring U_h^t(s,a).
$$
By lemma~\ref{certificate_positive}, we have
: 
$$\widetilde{U}^t_h(s,a) - \mathring{U}_h^t(s,a) \leq e^{\beta r_h^t(s,a)} \Big[ 3b_h^t(s,a)  + \Big(1 +  \frac{3}{H}\Big)\widehat{p}_h^t\Big(\widetilde{Z}^t_{h+1} - \underaccent{\widetilde}{Z}_{h+1}^t\Big)\Big]$$
By the induction hypothesis inside the tilted expectation:
$$
\widehat{p}_{h}^t(\widetilde Z_{h+1}^t-\mathring Z_{h+1}^t)(s,a)
\leq\widehat{p}_{h}^t(\pi^{t+1}G_{h+1}^t)(s,a).
$$
Thus,
$$
\widetilde Z_h^t(s)-\mathring Z_h^t(s)
\leq 3b_h^t(s,a)+\Big(1+\frac{3}{H}\Big)\widehat{p}_{h}^t(\pi^{t+1}G_{h+1}^t)(s,a)
\leq G_h^t(s,a)=(\pi_h^{t+1}G_h^t)(s)
$$
Finally, use optimism and the ring bridge:
on $\mathcal E$, $Z_h^\star\leq \widetilde Z_h^t$ (lemma~\ref{optimism positive}) and $\mathring Z_h^t\leq Z_h^{\pi^{t+1}}$ (lemma~\ref{ring_positive}), hence
$$
Z_h^\star(s)-Z_h^{\pi^{t+1}}(s)
\leq \widetilde Z_h^t(s)-\mathring Z_h^t(s)
\leq (\pi_h^{t+1}G_h^t)(s)$$
\end{proof}
\subsubsection{Sample complexity}
Now we prove theorem ~\ref{th:upper_bound} for $\beta > 0$
\begin{proof}\label{proof:upper_bound}
    the width certificate is:
$$G_h^t(s,a) = \min \Bigg\{ e^{\beta (H - h)}, e^{\beta r_h^t(s,a)} \Big[ 3b_h^t(s,a)  + \Big(1 +  \frac{3}{H}\Big) \widehat{p}_h^t \pi^{t+1} G_{h+1}^t(s)\Big]\Bigg\}$$
Let us transition to the true MDP. Using Bernstein inequality:
\begin{align*}
 \big|(\widehat{p}_h^t - p_h) \pi^{t+1} G_{h+1}^t(s) \big| \leq \sqrt{2 \operatorname{Var}_{p_h}\big( \pi^{t+1} G_{h+1}^t(s)\big) \alpha(n_h^t(s,a)} + \frac{2}{3} e^{\beta (H - h )} \alpha(n_h^t(s,a)
\end{align*}
Now, we use the inequality $\operatorname{Var}(\pi^{t+1}_{h+1} G_{h+1}^t(s)) \leq e^{\beta (H - h)} \pi^{t+1}_{h+1} G_{h+1}^t(s)$. Hence, using the inequality $\sqrt{xy} \leq x + y$:
\begin{align*}
 \big|(\widehat{p}_h^t - p_h) \pi^{t+1} G_{h+1}^t(s) \big| \leq \frac{1}{H} p_h \pi^{t+1} G_{h+1}^t(s) + 3 He^{\beta (H - h)} \alpha(n_h^t(s,a))
\end{align*}
And using the variance transportation lemmas~\ref{KL transportation},\ref{transportation} and that $ \alpha^\star(n_h^t(s,a)) \leq \alpha(n_h^t(s,a))$:
\begin{align*}
\sqrt{\operatorname{Var}_{\widehat{p}^t_h}(\widetilde Z_{h+1}^t)\frac{\alpha^\star(n_h^t(s,a))}{n_h^t(s,a)}} &\leq 2\sqrt{\operatorname{Var}_{p_h}(Z_{h+1}^{\pi^{t+1}})\frac{\alpha^\star(n_h^t(s,a))}{n_h^t(s,a)}} + \sqrt{4 e^{\beta  (H - h)} p_h(\widetilde{Z}^t_{h+1} - Z_{h+1}^{\pi^{t+1}})(s,a)\frac{\alpha^\star(n_h^t(s,a))}{n_h^t(s,a)}} \\&+ 2 e^{\beta (H - h)} \frac{\alpha(n_h^t(s,a))}{n_h^t(s,a)} \\
& \leq 2\sqrt{\operatorname{Var}_{p_h}(Z_{h+1}^{\pi^{t+1}})\frac{\alpha^\star(n_h^t(s,a))}{n_h^t(s,a)}} + 4 H e^{\beta  (H - h)} \frac{\alpha^\star(n_h^t(s,a))}{n_h^t(s,a)} \\ &\quad + \frac{1}{H} p_h(\widetilde{Z}^t_{h+1} - Z_{h+1}^{\star})(s,a) + 2 e^{\beta  (H - h)} \frac{\alpha(n_h^t(s,a))}{n_h^t(s,a)} \\
& \leq  2\sqrt{\operatorname{Var}_{p_h}(Z_{h+1}^{\pi^{t+1}}r)\frac{\alpha^\star(n_h^t(s,a))}{n_h^t(s,a)}} + 4 H e^{\beta  (H - h)} \frac{\alpha^\star(n_h^t(s,a))}{n_h^t(s,a)} \\ &\quad + \frac{1}{H}p_h \pi^{t+1} G_{h+1}^t(s)+ 2 e^{\beta  (H - h)} \frac{\alpha(n_h^t(s,a))}{n_h^t(s,a)} 
\end{align*}
Hence, by coarsening the constants for the sake of simplicity:
\begin{align*}
b_h^t(s,a) &\leq 4\sqrt{2}\sqrt{\operatorname{Var}_{p_h}(Z_{h+1}^{\pi^{t+1}})\frac{\alpha^\star(n_h^t(s,a))}{n_h^t(s,a)}} +  4(2\sqrt{2} + 1) He^{\beta  (H - h )} \frac{\alpha^\star(n_h^t(s,a))}{n_h^t(s,a)} \\ &\quad +(5 + 4\sqrt{2}) e^{\beta  (H - h )} \frac{\alpha(n_h^t(s,a))}{n_h^t(s,a)} + \frac{2\sqrt{2}}{H} p_h \pi^{t+1} G_{h+1}^t(s) \\
&\leq 6\sqrt{\operatorname{Var}_{p_h}(Z_{h+1}^{\pi^{t+1}})\frac{\alpha^\star(n_h^t(s,a))}{n_h^t(s,a)}} + \frac{2\sqrt{2}}{H} p_h \pi^{t+1} G_{h+1}^t(s) + 27 H e^{\beta (H - h)} \frac{\alpha(n_h^t(s,a))}{n_h^t(s,a)}
\end{align*}
We combine the two terms:
\begin{align*}
    G_h^t(s,a) &\leq e^{\beta r_h(s,a)} \Bigg[36\sqrt{\operatorname{Var}_{p_h}(Z_h^{\pi^{t+1}})\frac{\alpha^\star(n_h^t(s,a))}{n_h^t(s,a)}}+ \frac{6\sqrt{2}}{H} p_h \pi^{t+1} G_{h+1}^t(s) \\&+ 81 H e^{\beta (H - h)} \frac{\alpha(n_h^t(s,a))}{n_h^t(s,a)} + \Big(1 + \frac{3}{H}\Big) p_h \pi^{t+1} G_{h+1}^t(s) \\&+ \Big(1 + \frac{3}{H} \Big) \frac{1}{H} p_h \pi^{t+1} G_{h+1}^t(s) + \Big(1 + \frac{3}{H}\Big) 3 H e^{\beta (H - h)} \frac{\alpha(n_h^t(s,a))}{n_h^t(s,a)} \Bigg]\end{align*}
    Hence, simplifying it gives:
    $$G_h^t(s,a) \leq e^{\beta r_h(s,a)}\Bigg[36\sqrt{\operatorname{Var}_{p_h}(Z_{h+1}^{\pi^{t+1}})\frac{\alpha^\star(n_h^t(s,a))}{n_h^t(s,a)}} + \left(1 + \frac{13}{H} \right) p_h \pi^{t+1} G_{h+1}^t(s) + 84 H e^{\beta (H - h )} \frac{\alpha(n_h^t(s,a))}{n_h^t(s,a)}\Bigg]$$
Unrolling this inequality and using the terminal condition $G_{H+1}^t = 0$ we get:
\begin{align*}
    (\pi_1 G_1^t)(s_1)
    &\leq
    \mathbb{E}^{\pi}\Bigg[
    \sum_{h=1}^{H}
    \kappa^{h-1}
    \exp\Big(\beta\sum_{i=1}^{h} r_i(s_i,a_i)\Big)
    \bigg(
    36\sqrt{\operatorname{Var}_{p_h}\big(Z_{h+1}^{\pi^{t+1}}\big)\alpha^\star\big(n_h^t(s_h,a_h) \wedge 1\big)}
    \\
    &\quad +
    84He^{\beta(H-h)}\alpha\big(n_h^t(s_h,a_h) \wedge 1\big)
    \bigg)
    \Bigg| s_1
    \Bigg]
\end{align*}
where $\kappa = 1 + \frac{13}{11}$. Since we have:
$$\kappa^{h-1} = \Big(1 + \frac{13}{H}\Big)^{h-1} \leq \lim_{H \to +\infty}\Big(1 + \frac{13}{H}\Big)^H = e^{13} $$ we get:
\begin{align*}
(\pi_1 G_1^t)(s_1)
&\leq e^{13}
\mathbb{E}^{\pi^{t+1}}\Bigg[
\sum_{h=1}^{H}
\exp\Big(\beta\sum_{i=1}^{h} r_i(s_i,a_i)\Big)
\Bigg(
36\sqrt{\operatorname{Var}_{p_h}\big(Z_{h+1}^{\pi}\big)\alpha^\star\big(n_h^t(s_h,a_h) \wedge 1\big)}
\\ 
&\quad+
84 H e^{\beta(H-h)}\alpha\big(n_h^t(s_h,a_h) \wedge 1\big)
\Bigg)
\Bigg| s_1
\Bigg]
\end{align*}
The algorithm stops when:
$$\pi_1^\tau G_1(s_1) \leq \frac{e^{|\beta| \varepsilon} - 1}{e^{|\beta| \varepsilon}} \widetilde{Z}^\tau_1(s_1) $$
We upper bound $\frac{\pi_1^t G_1(s_1)}{\widetilde{Z}^t_1(s_1)}$ for $t=1,...,\tau-1$, using optimism: 
\begin{align*}
\frac{\pi_1 G_1^t(s_1)}{\widetilde{Z}_1^t(s_1)}
&\leq \frac{(\pi_1 G_1^t(s_1)}{ Z_1^{\pi^{t+1}}(s_1)} \\ &\leq e^{13}
\mathbb{E}^{\pi^{t+1}}\Bigg[
\sum_{h=1}^{H} 
\exp\Big(\beta\sum_{i=1}^{h} r_i(s_i,a_i)\Big)
\Bigg(
36\sqrt{\frac{\operatorname{Var}_{p_h}\big(Z_{h+1}^{\pi^{t+1}}\big)(s_h,a_h)}{(Z_1^{\pi^{t+1}})^2(s_1)}\alpha\big(n_h^t(s_h,a_h) \wedge 1\big)}
\\&+
84He^{\beta(H-h)}\alpha\big(n_h^t(s_h,a_h) \wedge 1\big)
\Bigg)
\Bigg| s_1
\Bigg]
\end{align*}
Let us bound the first term, using Cauchy-Schwartz inequality:
\begin{align*}
\mathbb{E}^{\pi^{t+1}} & \left[\sum_{h=1}^{H}  
\exp\Big(\beta \sum_{i=1}^{h} r_i(s_i,a_i)\Big)
\Bigg(
\sqrt{\frac{\operatorname{Var}_{p_h}\big(Z_{h+1}^{\pi^{t+1}}\big)}{(Z_1^{\pi^{t+1}})^2}\big(\frac{\alpha^\star\big(n_h^t(s,a)}{n_h^t(s,a)} \wedge 1 \big)}\Big) \Bigg| s_1\right] \\ &=  \sum_{h=1}^H \sum_{s,a} p_h^{t+1}(s,a) \exp\Big(\beta \sum_{i=1}^{h} r_i(s_i,a_i)\Big)
\Bigg(
\sqrt{\frac{\operatorname{Var}_{p_h}\big(Z_{h+1}^{\pi^{t+1}}\big)}{(Z_1^{\pi^{t+1}})^2}\big(\frac{\alpha^\star\big(n_h^t(s,a)}{n_h^t(s,a)} \wedge 1 \big)}\Big) \\
&\leq \sqrt{\sum_{h=1}^H \sum_{s,a}p_h^{t+1}(s,a) \exp\Big(2\beta \sum_{i=1}^{h} r_i(s_i,a_i)\Big) \frac{\operatorname{Var}_{p_h}\big(Z_{h+1}^{\pi^{t+1}}\big)}{(Z_1^{\pi^{t+1}})^2}} \sqrt{\sum_{h=1}^H \sum_{s,a}p_h^{t+1}(s,a) \frac{\alpha^\star(n_h^t(s,a))}{n_h^t(s,a)}}
\end{align*}
For a policy $\pi$. By lemma~\ref{entropicvariance} we have:
\begin{align*}
    \sigma V_h^{\pi}(s) = e^{2\beta r_h(s, \pi(s))} \operatorname{Var}_{p_h}\left(Z_{h+1}^{\pi}\right)(s, \pi(s)) + e^{2\beta r_h(s, \pi(s))} \left(p_h \sigma V_{h+1}^{\pi}\right)(s, \pi(s))
\end{align*}
Multiplying the equation by $\sum_{i=1}^{h-1} r_i(s_i,a_i)$ and take the expectation under $\pi$:
\begin{align*}
    \mathbb{E}^\pi\left[ e^{2\beta \sum_{i=1}^{h-1} r_i(s_i,a_i)} \sigma V_h^\pi(s_h) \right] = \mathbb{E}^\pi\left[ e^{2\beta \sum_{i=1}^h r_i(s_i,a_i)} \operatorname{Var}_{p_h}\left(Z_{h+1}^\pi \right) \right] + \mathbb{E}^\pi\left[ e^{2\beta \sum_{i=1}^h r_i(s_i,a_i)} \sigma V_{h+1}^\pi(s_{h+1}) \right]
\end{align*} 
By summing over $h = 1,...,H$ and since $\sigma V_{H+1}^\pi = 0$ we get:
\begin{align*}
    \sum_{h=1}^H \mathbb{E}^{\pi} \left[ e^{2\beta \sum_{i=1}^h r_i(s_i, a_i)} \frac{\operatorname{Var}_{p_h}\big(Z_{h+1}^{\pi^{t+1}}\big)}{(Z_1^{\pi^{t+1}})^2} \right] = \mathbb{E}^{\pi} \left[ \frac{\sigma V_1^{\pi}(s_1)}{(Z_1^{\pi})^2} \right]
\end{align*}
But notice that for a deterministic policy $\pi$:
$$\frac{\sigma V_1^{\pi}(s_1)}{(Z_1^{\pi})^2} = \frac{\operatorname{Var}(e^{\beta R_1^\pi}|S_1 =s_1)}{\mathbb{E}(e^{\beta R_1^\pi}|S_1 = s_1)^2}$$
Using lemma~\ref{bound} we get that:
$$\frac{\sigma V_1^{\pi}(s_1)}{(Z_1^{\pi})^2} \leq \frac{(e^{\beta G_{\text{max}}(\mathcal{M})} - 1)^2}{e^{\beta G_{\text{max}}(\mathcal{M})}}$$
Applying this for $\pi^{t+1}$ we get:
\begin{align*}
    \sum_{h=1}^H \sum_{s,a}p_h^{t+1}(s,a) \exp\Big(2\beta \sum_{i=1}^{h} r_i(s_i,a_i)\Big) \frac{\operatorname{Var}_{p_h}\big(Z_{h+1}^{\pi^{t+1}}\big)}{(Z_1^{\pi^{t+1}})^2} \leq \frac{(e^{\beta G_{\text{max}}(\mathcal{M})} - 1)^2}{e^{\beta G_{\text{max}}(\mathcal{M})}}
\end{align*}
For the second term:
\begin{align*}
   \mathbb{E}^{\pi^{t+1}} & \left[
\sum_{h=1}^{H}
\exp\Big(\beta\sum_{i=1}^{h} r_i(s_i,a_i)\Big)
He^{\beta(H-h)}\big(\frac{\alpha\big(n_h^t(s,a)}{n_h^t(s,a)} \wedge 1 \big)
\Bigg)
\middle| s_1
\right] \\&=  \sum_{h=1}^H \sum_{s,a} p_h^{t+1}(s,a)\exp\Big(\beta\sum_{i=1}^{h} r_i(s_i,a_i)\Big)
He^{\beta(H-h)}\big(\frac{\alpha\big(n_h^t(s,a)}{n_h^t(s,a)} \wedge 1 \big) \\&\leq  \sum_{h=1}^H \sum_{s,a} p_h^{t+1}(s,a)e^{\beta h} He^{\beta(H-h)}\big(\frac{\alpha\big(n_h^t(s,a)}{n_h^t(s,a)} \wedge 1 \big) \\
&\leq  H e^{\beta H} \sum_{h=1}^H \sum_{s,a} p_h^{t+1}(s,a)\big(\frac{\alpha\big(n_h^t(s,a)}{n_h^t(s,a)} \wedge 1 \big)
\end{align*}
Hence:
\begin{align*}
(\pi_1 G_1^t)(s_1)
&\leq 36 e^{13} \sqrt{\frac{(e^{\beta G_{\text{max}}(\mathcal{M})} - 1)^2}{e^{\beta G_{\text{max}}(\mathcal{M})}}\sum_{h=1}^H \sum_{s,a} p_h^{t+1}(s,a)\big(\frac{\alpha^\star\big(n_h^t(s,a)}{n_h^t(s,a)} \wedge 1 \big)} \\&+ 84 e^{13} e^{\beta H}\sum_{h=1}^H \sum_{s,a} p_h^{t+1}(s,a)\big(\frac{\alpha\big(n_h^t(s,a)}{n_h^t(s,a)} \wedge 1 \big) \\
&\leq 36 e^{13} \sqrt{\frac{(e^{\beta G_{\text{max}}(\mathcal{M})} - 1)^2}{e^{\beta G_{\text{max}}(\mathcal{M})}}}\sqrt{\sum_{h=1}^H \sum_{s,a} p_h^{t+1}(s,a)\big(\frac{\alpha^\star\big(n_h^t(s,a)}{n_h^t(s,a)} \wedge 1 \big)}\\& + 84 e^{13} e^{\beta H}\sum_{h=1}^H \sum_{s,a} p_h^{t+1}(s,a)\big(\frac{\alpha\big(n_h^t(s,a)}{n_h^t(s,a)} \wedge 1 \big)
\end{align*}
Let us sum over $t \leq \tau$. By sub-optimality for each episode $t = 0,...,\tau-1$ we have:
$$\pi_1^{t+1} G_1(s_1) > \frac{e^{|\beta| \varepsilon} - 1}{e^{|\beta| \varepsilon}} \widetilde{Z}^t_1(s_1)$$
Hence by summing over all the episodes and using Cauchy-Schwartz:
\begin{align*}
    \tau \frac{e^{\beta \varepsilon} - 1}{e^{\beta \varepsilon}}  &\leq 36 e^{13} \sum_{t=1}^{\tau - 1} \sqrt{\frac{(e^{\beta G_{\text{max}}(\mathcal{M})} - 1)^2}{e^{\beta G_{\text{max}}(\mathcal{M})}}\sum_{h=1}^H \sum_{s,a} p_h^{t+1}(s,a)\big(\frac{\alpha^\star\big(n_h^t(s,a)}{n_h^t(s,a)} \wedge 1 \big)} \\&+ 84 e^{13} e^{\beta H}\sum_{t=1}^{\tau -1}\sum_{h=1}^H \sum_{s,a} p_h^{t+1}(s,a)\big(\frac{\alpha\big(n_h^t(s,a)}{n_h^t(s,a)} \wedge 1 \big)\\
    &\leq 36 e^{13} \sqrt{\frac{(e^{\beta G_{\text{max}}(\mathcal{M})} - 1)^2}{e^{\beta G_{\text{max}}(\mathcal{M})}}}\sqrt{T}\sqrt{\sum_{t=1}^{\tau - 1}\sum_{h=1}^H \sum_{s,a} p_h^{t+1}(s,a)\big(\frac{\alpha^\star\big(n_h^t(s,a)}{n_h^t(s,a)} \wedge 1 \big)} \\&+ 84 e^{13} e^{\beta H}\sum_{t=1}^{\tau -1}\sum_{h=1}^H \sum_{s,a} p_h^{t+1}(s,a)\big(\frac{\alpha\big(n_h^t(s,a)}{n_h^t(s,a)} \wedge 1 \big)
\end{align*}
Using lemma~\ref{pseudo-counts} to relate the true counts to pseudo-counts we get:
\begin{align*}
    \sum_{t=1}^{\tau -1}\sum_{h=1}^H \sum_{s,a} p_h^{t+1}(s,a)\big(\frac{\alpha\big(n_h^t(s,a)}{n_h^t(s,a)} \wedge 1 \big) &\leq \sum_{t=1}^{\tau -1}\sum_{h=1}^H \sum_{s,a} p_h^{t+1}(s,a)\alpha\big(\widetilde{n}_h^t(s,a) \vee 1\big) \\
    &\leq \alpha(\tau-1,\delta)\sum_{t=1}^{\tau -1}\sum_{h=1}^H \sum_{s,a} p_h^{t+1}(s,a)\frac{1}{\widetilde{n}_h^t(s,a) \vee 1} \\
    &\leq \alpha(\tau-1,\delta) \sum_{t=1}^{\tau -1}\sum_{h=1}^H \sum_{s,a}\frac{\widetilde{n}_{h+1}^t(s,a) - \widetilde{n}_h^t(s,a)}{\widetilde{n}_h^t(s,a) \vee 1} \\
    &\leq 3 SAH\alpha(\tau-1,\delta) \log(\tau + 1)
\end{align*}
Where in the final inequality we used lemma~\ref{inequality}. Similarly, we find:
\begin{align*}
    \sum_{t=1}^{\tau -1}\sum_{h=1}^H \sum_{s,a} p_h^{t+1}(s,a)\big(\frac{\alpha^\star\big(n_h^t(s,a)}{n_h^t(s,a)} \wedge 1 \big) \leq 3 SAH\alpha^\star(\tau-1,\delta) \log(\tau + 1)
\end{align*}
Hence:
\begin{align*}
    \tau \frac{e^{\beta \varepsilon} - 1}{e^{\beta \varepsilon}} \leq  36 e^{13}  \sqrt{\frac{(e^{\beta G_{\text{max}}(\mathcal{M})} - 1)^2}{e^{\beta G_{\text{max}}(\mathcal{M})}}\tau SAH\alpha^\star(t-1,\delta) \log(t + 1)} + 84 e^{13} e^{\beta H}SAH\alpha(t-1,\delta) \log(t + 1)
\end{align*}
Therefore, by replacing $\alpha^*$ and $\alpha$ by their expression and using that $\log(\tau + 1) \leq \log(8 e \tau)$ since $\tau \geq 1$:
\begin{align*}
    \tau \frac{e^{\beta \varepsilon} - 1}{e^{\beta \varepsilon}}  &\leq 36 e^{13}  \sqrt{\frac{(e^{\beta G_{\text{max}}(\mathcal{M})} - 1)^2}{e^{\beta G_{\text{max}}(\mathcal{M})}}\tau SAH\Big(\log\big(\frac{3 SAH}{\delta}\big)\log\big(8 e \tau \big) + \log\big( 8 e \tau\big)^2\Big)}\\& + 84 e^{13} e^{\beta H}SAH\Big(\log\big(\frac{3 SAH}{\delta}\big)\log\big(8 e \tau \big) + S\log\big( 8 e \tau\big)^2\Big)
\end{align*}
Finally, we use lemma~\ref{inequality} with :
\begin{align*}
    &C= 36 e^{13}  \frac{e^{\beta \varepsilon}}{e^{\beta \varepsilon} - 1} \sqrt{(\frac{(e^{\beta G_{\text{max}}(\mathcal{M})} - 1)^2}{e^{\beta G_{\text{max}}(\mathcal{M})}} SAH} \quad, A = \log(\frac{3SAH}{\delta}) \quad, B = 1\\&\quad \quad \quad \quad  D = \frac{e^{\beta \varepsilon}}{e^{\beta \varepsilon} - 1}84e^{13} e^{\beta H} H^2 SA \quad \text{and} \quad E = S\end{align*}
    Which yield:
\begin{align*}
    \tau &\leq \frac{e^{2\beta \varepsilon}}{\big(e^{\beta \varepsilon} - 1\big)^2}\frac{(e^{\beta G_{\text{max}}(\mathcal{M})} - 1)^2}{e^{\beta G_{\text{max}}(\mathcal{M})}} SAH \bigg(\log(\frac{3SAH}{\delta}) + 1 \bigg) C_1^2 + 3 e^{\beta \varepsilon}\frac{e^{\beta H} H^2 SA}{e^{\beta \varepsilon} - 1}\bigg(\log(\frac{3SAH}{\delta}) + S\bigg)C_1^2 + 1
\end{align*}
Where $C_1 = \frac{8}{5} \log\bigg(4 e^{17} \frac{(S+1)(H+1) e^{|\beta H} S A H^2}{e^{\beta \varepsilon} - 1}\bigg) $

In particular, if $\varepsilon$ is small enough so that the first term dominates the second then:
\begin{align*}
    \tau \leq\frac{e^{2\beta \varepsilon}}{\big(e^{\beta \varepsilon} - 1\big)^2}\frac{(e^{\beta G_{\text{max}}(\mathcal{M})} - 1)^2}{e^{\beta G_{\text{max}}(\mathcal{M})}} SAH \log(\frac{3SAH}{\delta}) C_2^2
\end{align*}
Where $C_2 = 3e C_1$. We can finally hide the constants and the log terms to get:
\[\tau = \tilde{\mathcal{O}}\left( \frac{1}{\big(e^{\beta \varepsilon} - 1\big)^2}\frac{(e^{\beta G_{\text{max}}(\mathcal{M})} - 1)^2}{e^{\beta G_{\text{max}}(\mathcal{M})}} SAH \right)\]
Finally to see that the stopping rule implies $(\varepsilon,\delta)$PAC, remark that at time $\tau$:
$$\pi_1^\tau G_1(s_1) \leq \frac{e^{\beta \varepsilon} - 1}{e^{\beta \varepsilon}} \widetilde{Z}^t_1(s_1) $$
This is equivalent to :
$$\pi_1^\tau G_1(s_1) \leq (e^{|\beta| \varepsilon} - 1) \Bigg(\widetilde{Z}_1^t(s_1) - \pi_{1}^\tau G_1(s_1)\Bigg)$$
Since $
\widetilde{Z}_1^t(s_1)-Z_1^{\star}(s_1)\ \leq \widetilde{Z}_1^t(s_1)-\mathring{Z}_1^{t}(s_1)\leq \pi_{t+1,1}G_1^t(s_1)
$, this stopping condition is stronger than the condition:
$$\pi_1^\tau G_1(s_1) \leq (e^{\beta \varepsilon} - 1) Z_1^{\pi^{\tau}}$$
But we can write:
\begin{align*}
    (V^\star_1 - V^{\pi^{\tau+1}}_1)(s_1) = \frac{1}{\beta} \log\left( \frac{Z^\star_1}{Z_1^{\pi^{\tau}}}\right)(s_1) 
    = \frac{1}{\beta} \log \left(1 + \frac{Z_1^\star - Z_1^{\pi^{\tau}}}{Z_1^{\pi^{\tau}}} \right)(s_1) \leq \frac{1}{\beta} \log\left(1 + \frac{\pi^\tau_1 G_1(s_1)}{Z_1^{\pi^{\tau}}} \right)(s_1) \leq \varepsilon
\end{align*}
\end{proof}
\subsection{Case \texorpdfstring{$\beta <0$}{beta < 0}}\label{beta_negative}
\subsubsection{Stopping rule}
We first discuss the stopping rule for $\beta < 0$. The difference in value functions can be expressed in the exponential space as:
\begin{align*}
    (V^\star_1 - V^{\pi^{t+1}}_1)(s_1) &= \frac{1}{\beta} \log\left( \frac{Z^\star_1}{Z_1^{\pi^{t+1}}}\right)(s_1) \\
    &= \frac{1}{\beta} \log \left(1 + \frac{Z_1^\star - Z_1^{\pi^{t+1}}}{Z_1^{\pi^{t+1}}} \right)(s_1).
\end{align*}
To ensure the policy is $\varepsilon$-optimal, it suffices to satisfy at stopping time $\tau$:
\begin{equation}\label{true_optimality_condition_negative_true}
    \pi_1^t G_1(s_1) \leq \left(e^{\beta \varepsilon} - 1\right) Z_1^{\pi^{t}}(s_1)
\end{equation}
However, $Z_1^{\pi^t}$ is unknown as it relies on the true dynamics. We therefore substitute it with a computable lower bound:
\begin{equation}\label{true_optimality_condition_negative}
    \pi_1^t G_1(s_1) \leq \left(e^{\beta \varepsilon} - 1\right) \underaccent{\widetilde}{Z}_1^{\pi^{t}}(s_1)
\end{equation}
where both $\pi_1^t G_1(s_1)$ and $\underaccent{\widetilde}{Z}_1^t(s_1)$ can be computed efficiently by dynamic programming. 
\subsubsection{Confidence Bounds}
We first start with a lemma:
\begin{lemma}\label{concentration_negative}
    On the good event $\mathcal{E}^-$:
    \begin{align*}
    | \big(p_h - \widehat{p}_h^t\big)Z_{h+1}^\star| &\leq 2\sqrt{2}\sqrt{\operatorname{Var}_{\widehat{p}_h^t}(\underaccent{\widetilde}{Z}_h^t)(s,a)\frac{\alpha^\star(n_h^t(s,a))}{n_h^t(s,a)}} + 5 (1 - e^{\beta  (H - h)}) \alpha(n_h^t(s,a))\\& + 4 H(1 - e^{ \beta  (H - h H)}) \frac{\alpha^\star(n_h^t(s,a))}{n_h^t(s,a)}+  \frac{1}{H} \widehat{p}_h^t\Big( Z_{h+1}^\star - \underaccent{\widetilde}{Z}_{h+1}^t\Big)(s,a)
\end{align*}
\end{lemma}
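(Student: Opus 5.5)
The plan is to mirror the proof of Lemma~\ref{concentration_positive}, now with the ranges adapted to $\beta<0$. For $\beta<0$ and rewards in $[0,1]$, every exponential value function $Z_{h+1}$ takes values in $[e^{\beta(H-h)},1]$. Its span is therefore at most $1-e^{\beta(H-h)}$, and this quantity replaces $e^{\beta(H-h)}$ in every range-dependent term. Pessimism also reverses roles. Since $Z^\star_{h+1}\ge\underaccent{\widetilde}{Z}^t_{h+1}$ (the $\beta<0$ analogue of Lemma~\ref{optimism positive}, where the log-transform flips inequalities), the nonnegative gap used to absorb error terms is $Z^\star_{h+1}-\underaccent{\widetilde}{Z}^t_{h+1}$. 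The computable variance is taken of $\underaccent{\widetilde}{Z}^t_{h+1}$.

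\textbf{Step 1.} On $\mathcal{E}^-\subset\mathcal{E}_{V^\star}$, apply the Bernstein event to $f=Z^\star$. Subtracting the constant $e^{\beta(H-h)}$ does not change the left-hand side or the variance, so the effective range is $b=1-e^{\beta(H-h)}$. This gives
\[
|(p_h-\widehat p_h^t)Z^\star_{h+1}|\le\sqrt{2\operatorname{Var}_{p_h}(Z^\star_{h+1})\tfrac{\alpha^\star}{n}}+3(1-e^{\beta(H-h)})\tfrac{\alpha^\star}{n}.
\]

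\textbf{Step 2.} Transport the variance with Lemma~\ref{KL transportation}, which moves from $p_h$ to $\widehat p_h^t$ on $\mathcal{E}_{KL}$ at cost $4(1-e^{\beta(H-h)})^2\alpha/n$. Then apply Lemma~\ref{transportation} to pass from $Z^\star$ to $\underaccent{\widetilde}{Z}^t$:
\[
\operatorname{Var}_{\widehat p}(Z^\star)\le 2\operatorname{Var}_{\widehat p}(\underaccent{\widetilde}{Z}^t)+2(1-e^{\beta(H-h)})\,\widehat p_h^t(Z^\star-\underaccent{\widetilde}{Z}^t).
\]
This uses $0\le Z^\star-\underaccent{\widetilde}{Z}^t\le 1-e^{\beta(H-h)}$, so $\operatorname{Var}(Z^\star-\underaccent{\widetilde}{Z}^t)\le(\text{span})\,\widehat p(Z^\star-\underaccent{\widetilde}{Z}^t)$.

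\textbf{Step 3.} Take square roots with $\sqrt{a+b}\le\sqrt a+\sqrt b$. Split the cross term with $\sqrt{xy}\le x/H+Hy$, which produces $\frac1H\widehat p_h^t(Z^\star-\underaccent{\widetilde}{Z}^t)+4H(1-e^{\beta(H-h)})\alpha^\star/n$. Finally use $\alpha^\star\le\alpha$ to merge the $3$ and $2$ constants into the $5(1-e^{\beta(H-h)})\alpha/n$ term, which yields the stated inequality.

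The main obstacle I expect is Step 2. One must check that pessimism orients the gap correctly, and that the optimism lemma for $\beta<0$ indeed gives $\underaccent{\widetilde}{Z}^t\le Z^\star$ in exponential space, keeping in mind that the $\min$ in the $\beta<0$ backups reverses the inequalities. One must also check that the transportation lemmas only require bounded range rather than nonnegativity, which the shift by the constant $e^{\beta(H-h)}$ handles. The rest is the bookkeeping of Lemma~\ref{concentration_positive}.
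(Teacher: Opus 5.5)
Your proposal is correct and follows the paper's argument step for step. You apply Bernstein on $\mathcal{E}_{V^\star}$ with span $1-e^{\beta(H-h)}$, then Lemma~\ref{KL transportation} followed by Lemma~\ref{transportation}, using $\underaccent{\widetilde}{Z}^t_{h+1}\le Z^\star_{h+1}$ to drop the absolute value, and finish with the $\sqrt{a+b}$ and $\sqrt{xy}$ splitting with $\eta=H$.

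Two details, which the paper also glosses over:
\begin{itemize}
\item Pessimism at step $h+1$ has to come from a joint backward induction with Lemma~\ref{optimism_negative}, because that lemma itself invokes this one at step $h$.
\item To land exactly on the constants $5$ and $4H$, you must carry the factor $\sqrt{2}$ on the non-leading terms. Write $c=1-e^{\beta(H-h)}$, $n=n_h^t(s,a)$ and $X=\widehat p_h^t(Z^\star_{h+1}-\underaccent{\widetilde}{Z}^t_{h+1})$. Then use $\sqrt{8cX\alpha^\star/n}\le X/H+2Hc\,\alpha^\star/n$ and $2\sqrt{2}\,c\sqrt{\alpha\alpha^\star}/n\le c(2\alpha+\alpha^\star)/n$. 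The cruder splits you describe ($\sqrt{xy}\le x/H+Hy$ and merging ``$3$ and $2$'') do not give these constants.
\end{itemize}
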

\begin{proof}
On the good event $\mathcal{E}^-$ we have:
\begin{align*}
    \big| (p_h - \widehat{p}_h^t) Z_{h+1}^\star (s,a)| \leq \sqrt{2 \operatorname{Var}_{p_h}(Z_{h+1}^*) \frac{\alpha^\star(n_h^t(s,a))}{n_h^t(s,a)}} + 3(1 - e^{\beta (H - h)}) \frac{\alpha^\star(n_h^t(s,a))}{n_h^t(s,a)}
\end{align*}
Since $Z_h^{\star}$ and $p_h$ are unknown, we use a variance transportation inequality to replace with its value for $\widetilde Z_h^t$ the optimistic bound for $V^{\star}$. By applying lemma~\ref{KL transportation} and lemma~\ref{transportation} successively:
\begin{align*}
\operatorname{Var}_{p_h}(Z_{h+1}^\star) &\leq 2 \operatorname{Var}_{\widehat{p}^t_h}(Z_{h+1}^\star)(s,a) + 4 (1 - e^{\beta  (H - h)})^2 \frac{\alpha(n_h^t(s,a))}{n_h^t(s,a)} \\
&\leq 4 \operatorname{Var}_{\widehat{p}^t_h}(\underaccent{\widetilde}{Z}_{h+1}^t)(s,a) + 4 (1 - e^{\beta  (H - h)}) \widehat{p}_h^t(Z_{h+1}^\star - \underaccent{\widetilde}{Z}_{h+1}^t)(s,a) + 4 (1 - e^{\beta  (H - h)}) \frac{\alpha(n_h^t(s,a))}{n_h^t(s,a)}
\end{align*}
Hence, using the inequality $\sqrt{a + b} \leq \sqrt{a} + \sqrt{b}$ and then $\sqrt{ab} \leq a \eta + \frac{b}{\eta}$ for $\eta = H $ and using that $\alpha^\star(n_h^t(s,a)) \leq \alpha(n_h^t(s,a))$: 
\begin{align*}
\sqrt{\operatorname{Var}_{p_h}(Z_h^\star)(s,a) \frac{\alpha^\star(n_h^t(s,a))}{n_h^t(s,a)}} &\leq 2\sqrt{\operatorname{Var}_{\widehat{p}_h^t}(\underaccent{\widetilde}{Z}_h^t)(s,a)\frac{\alpha^\star(n_h^t(s,a))}{n_h^t(s,a)}} \\&+ \sqrt{4(1- e^{\beta  (H - h)}) \widehat{p}_h^t(Z_{h+1}^\star - \underaccent{\widetilde}{Z}_{h+1}^t)(s,a)\frac{\alpha^\star(n_h^t(s,a))}{n_h^t(s,a)}}\\& + 2 (1 - e^{\beta (H - h)}) \sqrt{\frac{\alpha^\star(n_h^t(s,a))}{n_h^t(s,a)} \frac{\alpha(n_h^t(s,a))}{n_h^t(s,a)}}\\
& \leq 2\sqrt{\operatorname{Var}_{\widehat{p}_h^t}(\underaccent{\widetilde}{Z}_h^t)(s,a)\frac{\alpha^\star(n_h^t(s,a))}{n_h^t(s,a)}} + 4 H(1 - e^{\beta  (H - h)}) \frac{\alpha^\star(n_h^t(s,a))}{n_h^t(s,a)} \\ &\quad + \frac{1}{H} \widehat{p}_h^t(Z_{h+1}^\star - \underaccent{\widetilde}{Z}_{h+1}^t)(s,a) + 2(1 - e^{\beta  (H - h)}) \frac{\alpha(n_h^t(s,a))}{n_h^t(s,a)}
\end{align*}

Hence:
\begin{align*}
    | p_h Z_{h+1}^\star - \widehat{p}_h^t(s,a)Z_{h+1}^\star| &\leq 2\sqrt{2}\sqrt{\operatorname{Var}_{\widehat{p}_h^t}(\underaccent{\widetilde}{Z}_{h+1}^t)(s,a)\frac{\alpha^\star(n_h^t(s,a))}{n_h^t(s,a)}} + 5 (1 - e^{\beta  (H - h)}) \alpha(n_h^t(s,a))\\& + 4 H(1 - e^{ \beta  (H - h)}) \frac{\alpha^\star(n_h^t(s,a))}{n_h^t(s,a)}+  \frac{1}{H} \widehat{p}_h^t\Big(Z_{h+1}^\star - \underaccent{\widetilde}{Z}_{h+1}^t\Big)(s,a)
\end{align*}
\end{proof}
Denote the bonus term as:
\begin{equation}\label{bonus_negative}
b_h^t(s,a) =2\sqrt{2}\sqrt{\operatorname{Var}_{\widehat{p}_h^t}(\underaccent{\widetilde}{Z}_h^t)(s,a)\frac{\alpha^\star(n_h^t(s,a))}{n_h^t(s,a)}} + 5 (1 - e^{\beta  (H - h)}) \frac{\alpha(n_h^t(s,a))}{n_h^t(s,a)} + 4 H(1 - e^{ \beta  (H - h)}) \frac{\alpha^\star(n_h^t(s,a))}{n_h^t(s,a)}\end{equation}
Now, like the case $\beta > 0$ we define define optimistic and pessimistic state-value function on the exponential transform of $Q_h^\star$ which denoted by $U_h^\star$. \\
\begin{align}\label{optimism_equation_negative}
    \widetilde{U}_h^t(s,a) &= \min \Bigg \{1,e^{\beta r_h(s,a)} \Bigg[ \widehat{p}_h^t \widetilde Z_{h+1}^t (s,a) + b_h^t(s,a) + \frac{1}{H} \widehat{p}_h^t\Big(  \widetilde{Z}_{h+1}^t - \underaccent{\widetilde}{Z}_{h+1}^t\Big)(s,a)\Bigg]\Bigg\} \nonumber\\
    \widetilde{Z}_h^t(s) &= \min_{a \in \mathcal{A}} \widetilde{U}_h^t(s,a), \qquad \widetilde{Z}_{H+1}^t(s) = 1  \nonumber\\
    \underaccent{\widetilde}{U}_h^t(s,a) &= \max \Bigg \{e^{\beta(H - h -  1)},e^{\beta r_h(s,a)} \Bigg[ \widehat{p}_h^t \underaccent{\widetilde}{Z}_{h+1}^t (s,a) - b_h^t(s,a) - \frac{1}{H} \widehat{p}_h^t\Big( \widetilde{Z}_{h+1}^t - \underaccent{\widetilde}{Z}_{h+1}^t\Big)(s,a)\Bigg]\Bigg\} \nonumber \\
    \underaccent{\widetilde}{Z}_h^t(s) &= \min_{a \in \mathcal{A}} \underaccent{\widetilde}{U}_h^t(s,a), \qquad \underaccent{\widetilde}{Z}_{H+1}^t(s) = 1 
\end{align}
And we consider the greedy policy:
$$  \pi_h^{t+1}(s) = \arg\min_{a\in \mathcal{A}} \underaccent{\widetilde}{U}_h^t(s,a)$$
Let us prove an optimism lemma:
\begin{lemma}\label{optimism_negative}
    On the good event $\mathcal{E}^-$ we have:
    $$\underaccent{\widetilde}{U}_h^t(s,a) \leq U^{\star}_h(s,a) \leq \widetilde{U}_h^t(s,a)$$
    and 
    $$\underaccent{\widetilde}{Z}_h^t(s) \leq Z^{\star}_h(s) \leq \widetilde{Z}_h^t(s)$$
\end{lemma}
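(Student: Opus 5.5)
We argue by backward induction on $h$, mirroring the proof of Lemma~\ref{optimism positive}, with the roles of $\max/\min$ and of the clipping values swapped to account for $\beta<0$. For $\beta<0$ we have $e^{\beta(H-h+1)}\le U_h^\star(s,a)\le 1$, since the return from step $h$ lies in $[0,H-h+1]$. The optimal policy now minimizes $U$, because $V=\frac1\beta\log Z$ is decreasing in $Z$. So $Z_h^\star(s)=\min_a U_h^\star(s,a)$ and $U_h^\star(s,a)=e^{\beta r_h(s,a)}p_hZ_{h+1}^\star(s,a)$.

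\emph{Base case.} At $h=H+1$ all quantities equal $1$, so both inequalities hold trivially.

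\emph{Inductive step, upper bound.} Assume $\underaccent{\widetilde}{Z}_{h+1}^t\le Z_{h+1}^\star\le\widetilde Z_{h+1}^t$ pointwise. Fix $(s,a)$.
\begin{itemize}
\item If the clip $1$ is active in $\widetilde U_h^t(s,a)$, then $\widetilde U_h^t(s,a)=1\ge U_h^\star(s,a)$.
\item Otherwise, write
\begin{align*}
\widetilde U_h^t(s,a)-U_h^\star(s,a)=e^{\beta r_h(s,a)}\Big[&\widehat p_h^t(\widetilde Z_{h+1}^t-Z_{h+1}^\star)+(\widehat p_h^t-p_h)Z_{h+1}^\star\\
&+b_h^t+\tfrac1H\widehat p_h^t(\widetilde Z_{h+1}^t-\underaccent{\widetilde}{Z}_{h+1}^t)\Big](s,a).
\end{align*}
By Lemma~\ref{concentration_negative}, $(\widehat p_h^t-p_h)Z_{h+1}^\star\ge -b_h^t-\frac1H\widehat p_h^t(Z_{h+1}^\star-\underaccent{\widetilde}{Z}_{h+1}^t)$. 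Substituting and using $\widetilde Z_{h+1}^t-\underaccent{\widetilde}{Z}_{h+1}^t=(\widetilde Z_{h+1}^t-Z_{h+1}^\star)+(Z_{h+1}^\star-\underaccent{\widetilde}{Z}_{h+1}^t)$, the bracket becomes
\[
\Big(1+\tfrac1H\Big)\widehat p_h^t(\widetilde Z_{h+1}^t-Z_{h+1}^\star),
\]
which is nonnegative by the induction hypothesis.
\end{itemize}

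\emph{Inductive step, lower bound.} This is symmetric.
\begin{itemize}
\item If the floor $e^{\beta(H-h-1)}$ is active in $\underaccent{\widetilde}{U}_h^t(s,a)$, it must be compared to the range of $U_h^\star$. The natural floor is $e^{\beta(H-h+1)}$. As written, $e^{\beta(H-h-1)}$ exceeds $e^{\beta(H-h+1)}$ when $\beta<0$, so I would interpret the floor as the range lower bound $e^{\beta(H-h+1)}$, in parallel with the $\beta>0$ clip. With that reading, this case is trivial.
\item Otherwise, the difference $U_h^\star-\underaccent{\widetilde}{U}_h^t$ is handled by the same decomposition, now using the upper half of Lemma~\ref{concentration_negative}: $(p_h-\widehat p_h^t)Z_{h+1}^\star\ge -b_h^t-\frac1H\widehat p_h^t(Z_{h+1}^\star-\underaccent{\widetilde}{Z}_{h+1}^t)$. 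The bracket reduces to
\[
\Big(1-\tfrac1H\Big)\widehat p_h^t(Z_{h+1}^\star-\underaccent{\widetilde}{Z}_{h+1}^t)+\tfrac1H\widehat p_h^t(\widetilde Z_{h+1}^t-Z_{h+1}^\star)\ge0.
\]
\end{itemize}

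\emph{Passing to $Z$.} Taking $\min_a$ preserves both inequalities, which gives $\underaccent{\widetilde}{Z}_h^t\le Z_h^\star\le\widetilde Z_h^t$.

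\emph{Main obstacle.} The delicate step is the concentration lemma. It must be stated with the correction term oriented toward $Z^\star-\underaccent{\widetilde}{Z}$, and the variance proxy must be built from $\underaccent{\widetilde}{Z}$ in a way valid on $\mathcal E^-$. Concretely, this requires checking that the transportation Lemma~\ref{transportation} applies to functions with range $[e^{\beta(H-h)},1]$, whose width is $1-e^{\beta(H-h)}$, and to the ordered pair $\underaccent{\widetilde}{Z}\le Z^\star$. That is exactly what Lemma~\ref{concentration_negative} provides, so I will take it as given.

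Note also that the $\frac1H$ correction terms cancel or combine with nonnegative coefficients only because the same symmetric term $\frac1H\widehat p_h^t(\widetilde Z-\underaccent{\widetilde}{Z})$ appears in both recursions. This is what makes the induction close at each step without any loss.
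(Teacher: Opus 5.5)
Your proof is correct and follows the same route as the paper: backward induction, the decomposition of $\widetilde U_h^t - U_h^\star$, and Lemma~\ref{concentration_negative} to reduce the bracket to $\bigl(1+\frac1H\bigr)\widehat p_h^t(\widetilde Z_{h+1}^t - Z_{h+1}^\star)\ge 0$, with the pessimistic half handled symmetrically (the paper only says ``in the same way''); your direct cancellation is also cleaner than the paper's version, which inserts an unneeded and unjustified comparison of $\widehat p_h^t(Z_{h+1}^\star-\underaccent{\widetilde}{Z}_{h+1}^t)$ with $\widehat p_h^t(\widetilde Z_{h+1}^t - Z_{h+1}^\star)$. Your reading of the floor as $e^{\beta(H-h+1)}$ is needed, since the literal $e^{\beta(H-h-1)}$ equals $e^{|\beta|}>1\ge U_H^\star$ at $h=H$ and would break the lower bound.
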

\begin{proof}
    We proceed by induction over $h$. For $h = H+1$ the result is trivially upper bounding and (resp. lower bounding ) $Q^{\star}$ by H and 1.\\
    Assume the inequality holds for $h' > h$. Fix $(s,a)$ and assume $\widetilde{Q}_h^t(s,a) < H$ .we have that: 
\begin{align*}
    \widetilde{U}_h^t(s,a) - U_h^{\star}(s,a) &= e^{\beta r_h(s,a)}\Bigg[ \widehat{p}_h^t \widetilde Z_{h+1}^t (s,a) + b_h^t(s,a) + \frac{1}{H} \widehat{p}_h^t\Big( \widetilde{Z}_{h+1}^t - \underaccent{\widetilde}{Z}_{h+1}^t\Big)(s,a) - p_h Z_{h+1}^\star(s,a)\Bigg] \\
    &= e^{\beta r_h(s,a)}\Bigg[ \widehat{p}_h^t \Big( \widetilde Z_{h+1}^t (s,a) - Z^{\star}_{h+1}(s,a) \Big) + \Big( \widehat{p}_h^t - p_h\Big) Z_{h+1}^{\star}(s,a)\\
    & \quad + b_h^t(s,a) + \frac{1}{H} \widehat{p}_h^t\Big( \widetilde{Z}_{h+1}^t - \underaccent{\widetilde}{Z}_{h+1}^t\Big)(s,a)\Bigg]
\end{align*}
    But we know by Bernstein inequality that:
    \begin{align*}
        \Big( \widehat{p}_h^t - p_h\Big) Z_h^{\star}(s,a) \geq -b_h^t(s,a) - \frac{1}{H} \widehat{p}_h^t\Big(Z_{h+1}^\star - \underaccent{\widetilde}{Z}_{h+1}^t\Big)(s,a) \geq -b_h^t(s,a) - \frac{1}{H} \widehat{p}_h^t\Big( \widetilde{Z}_{h+1}^t - Z_{h+1}^\star\Big)(s,a)
    \end{align*}
    Hence:
    \begin{align*}
    \widetilde{U}_h^t(s,a) - U_h^{\star}(s,a) &\geq e^{\beta r_h(s,a)}\Bigg[ \Big( 1 + \frac{1}{H} \Big) \widehat{p}_h^t \Big( \widetilde Z_{h+1}^t (s,a) - Z^{\star}_{h+1}(s,a) \Big) \Bigg] \geq 0
\end{align*}
Where we used the induction hypothesis. We prove the pessimistic property in the same way
\end{proof}
\subsubsection{Stopping rule}
We define the stopping rule for the algorithm for $\beta <0$ as:
    \begin{align}\label{width_certificate_negative}
        G_h^t(s,a) = \min\Bigg\{1 , e^{\beta r_h(s,a)} \Big[ 3 b_h^t(s,a) + \Big( 1 + \frac{3}{H} \Big) \widehat{p}_h^t \pi^{t+1}G_{h+1}^t(s,a)\Big] \Bigg\}
    \end{align}
Lemma~\ref{stopping_negative} establishes the validity of this stopping rule by showing that, with high probability, it bounds the certificate width:
\begin{lemma}\label{stopping_negative}
On the good event $\mathcal E^+$, for all $t$ and all $h$,
$$
Z_h^{\pi^{t+1}}(s)- Z_h^\star(s) \leq \pi_h^{t+1}G_h^t(s)\qquad\forall s\in\mathcal S
$$
In particular, at the initial state $s_1$, $V_1^\star(s_1)-V_1^{\pi^{t+1}}(s_1)\leq \pi_1^{t+1}G_1^t(s_1)$
\end{lemma}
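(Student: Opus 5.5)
We mirror the argument for Lemma~\ref{stopping_positive}, adapting it to $\beta<0$. Here the exponential transforms lie in $[e^{\beta(H-h+1)},1]$, and larger $Z$ means worse entropic value. The optimal policy minimizes $Z$, so the relevant gap is $Z_h^{\pi^{t+1}}-Z_h^\star\ge 0$.

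\textbf{Step 1: an auxiliary upper bridge.} We introduce $\mathring Z^t$ playing the role of an upper bound on both $\widetilde Z^t$ and $Z^{\pi^{t+1}}$. Set $\mathring Z_{H+1}^t\equiv 1$ and recurse backward:
\[
\mathring U_{h,\mathrm{opt}}^t=\min\Big\{1,e^{\beta r_h}\big[\widehat p_h^t\mathring Z_{h+1}^t+b_h^t+\tfrac1H\widehat p_h^t(\mathring Z_{h+1}^t-\underaccent{\widetilde}{Z}_{h+1}^t)\big]\Big\},
\]
\[
\mathring U_h^t=\max\{e^{\beta r_h}p_h\mathring Z_{h+1}^t,\ \mathring U_{h,\mathrm{opt}}^t\},\qquad \mathring Z_h^t(s)=\mathring U_h^t(s,\pi_h^{t+1}(s)).
\]
By backward induction, exactly as in Lemma~\ref{ring_positive} with inequalities reversed, we get $\mathring U_h^t\ge \max(\widetilde U_h^t, U_h^{\pi^{t+1}})$. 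The comparison with $Z^{\pi^{t+1}}$ uses monotonicity of the exponential Bellman operator. The comparison with $\widetilde U^t$ uses the identity
\[
\mathring U_{h,\mathrm{opt}}^t-\widetilde U_h^t=e^{\beta r_h}\big(1+\tfrac1H\big)\widehat p_h^t(\mathring Z_{h+1}^t-\widetilde Z_{h+1}^t)\ge 0,
\]
together with the fact that clipping at $1$ preserves the order.

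\textbf{Step 2: a one-step certificate bound.} We aim to show
\[
\mathring U_h^t-\underaccent{\widetilde}{U}_h^t\le e^{\beta r_h}\big[3b_h^t+(1+\tfrac3H)\widehat p_h^t(\mathring Z_{h+1}^t-\underaccent{\widetilde}{Z}_{h+1}^t)\big],
\]
splitting into two cases according to which branch attains the max in $\mathring U_h^t$.
\begin{itemize}
\item \emph{Optimistic branch.} The difference is purely algebraic: two bonuses plus $(1+\frac2H)\widehat p_h^t(\mathring Z-\underaccent{\widetilde}{Z})$, after bounding $\widetilde Z_{h+1}^t-\underaccent{\widetilde}{Z}_{h+1}^t\le \mathring Z_{h+1}^t-\underaccent{\widetilde}{Z}_{h+1}^t$ via Step 1.
\item \emph{True-kernel branch.} We decompose
\[
p_h\mathring Z-\widehat p_h^t\underaccent{\widetilde}{Z}=\widehat p_h^t(\mathring Z-\underaccent{\widetilde}{Z})+(p_h-\widehat p_h^t)Z^\star+(p_h-\widehat p_h^t)(\mathring Z-Z^\star).
\]
The middle term is handled by Lemma~\ref{concentration_negative}, using $Z^\star-\underaccent{\widetilde}{Z}\le\mathring Z-\underaccent{\widetilde}{Z}$. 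The last term is handled by the KL-Bernstein inequality (Lemma~\ref{KLBernstein}) with range $1-e^{\beta(H-h)}$, using $\operatorname{Var}(f)\le (1-e^{\beta(H-h)})\widehat p f$ for $0\le f=\mathring Z-Z^\star$ and then $\sqrt{xy}\le x/H+Hy$. The resulting lower-order terms are absorbed into $b_h^t$, since $b_h^t$ contains $4H(1-e^{\beta(H-h)})\alpha^\star/n$ and $5(1-e^{\beta(H-h)})\alpha/n$.
\end{itemize}

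\textbf{Step 3: unrolling and conclusion.} We take $a=\pi_h^{t+1}(s)$. The main obstacle is that the greedy policy for $\beta<0$ is defined as $\arg\min\underaccent{\widetilde}{U}$, so $\underaccent{\widetilde}{Z}_h^t(s)=\underaccent{\widetilde}{U}_h^t(s,\pi_h^{t+1}(s))$ holds exactly. This is what makes the recursion close along $\pi^{t+1}$. We therefore run the induction on $\mathring Z_h^t-\underaccent{\widetilde}{Z}_h^t\le\pi_h^{t+1}G_h^t$, using $G_{H+1}^t=0$, Step 2, and the clipping at $1$ in~\eqref{width_certificate_negative}. The clipping is valid because both quantities lie in $[0,1]$, so the difference is at most $1$.

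Finally, by Step 1 and pessimism (Lemma~\ref{optimism_negative}), $Z^{\pi^{t+1}}_h-Z^\star_h\le \mathring Z_h^t-\underaccent{\widetilde}{Z}_h^t\le \pi_h^{t+1}G_h^t$. The good event invoked should be $\mathcal E^-$; the statement's $\mathcal E^+$ is presumably a typo. The claim ``in particular'' about $V$ then follows through the stopping-rule computation, since $V^\star-V^{\pi}=\frac1{|\beta|}\log(1+(Z^\pi-Z^\star)/Z^\star)$. If the intended statement really bounds $V$ directly, this is the step to check, as it needs division by $Z^\star$.
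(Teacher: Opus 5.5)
Your proposal follows the paper's proof essentially step for step: the same upper bridge $\mathring Z^t$ and ring lemma, the same two-case bound on $\mathring U_h^t-\underaccent{\widetilde}{U}_h^t$, and the same induction along $\pi_h^{t+1}(s)\in\arg\min_a\underaccent{\widetilde}{U}_h^t(s,a)$, closed by pessimism and the bridge. Your adaptations are correct fixes of copy-paste slips in the paper's write-up: $Z^\star-\underaccent{\widetilde}{Z}$ in the concentration step, the range $1-e^{\beta(H-h)}$ in KL--Bernstein, the event $\mathcal E^-$, and reading the last clause as a $Z$-gap bound. Read literally, $V_1^\star-V_1^{\pi^{t+1}}\le\pi_1^{t+1}G_1^t$ cannot hold in general, since $\pi_1^{t+1}G_1^t\le 1$ while the value gap can exceed $1$.

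One caveat you share with the paper. After AM--GM with weight $1/H$, the KL--Bernstein step leaves a term of order $H(1-e^{\beta(H-h)})\alpha/n$. This is not absorbed by $b_h^t$, whose $H$-term carries $\alpha^\star$, and $\alpha-\alpha^\star=(S-1)\log(8e(n+1))$. That bonus term should be stated with $\alpha$ instead. This leaves the final rate unchanged, since the sample-complexity proof already coarsens $\alpha^\star\le\alpha$.
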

We prove the lemma~\ref{stopping_negative} in this section :
Like the case $\beta < 0$. We define the auxiliary (analysis-only) variable $\mathring{Z}_h^t$. Setting $\mathring{Z}_{H+1}^t \equiv 1$, we recurse backward for $h=H,\dots,1$:
\begin{align*}
    \mathring{U}_{h,\text{opt}}^{t}(s,a)
    &= \min \Bigl\{ 1,
    e^{\beta r_h(s,a)} \Bigl[
        (\widehat{p}_h^t \mathring{Z}_{h+1}^t)(s,a) + b_h^t(s,a)
        + \tfrac{1}{H}\bigl(\widehat{p}_h^t(\mathring{Z}_{h+1}^t-\underaccent{\widetilde}{Z}{}_{h+1}^t)\bigr)(s,a)
    \Bigr]\Bigr\}, \\
    \mathring{U}_h^{t}(s,a)
    &= \max \Bigl\{
        e^{\beta r_h(s,a)} (p_h \mathring{Z}_{h+1}^t)(s,a),
        \mathring{U}_{h,\text{opt}}^{t}(s,a)
    \Bigr\}, \\
    \mathring{Z}_h^{t}(s)
    &= \mathring{U}_h^{t}\bigl(s,\pi_h^{t+1}(s)\bigr).
\end{align*}

Because $\widetilde{Z}^t$ is pessimistic with respect to $Z^\star$ (and $\beta<0$ reverses the relevant order), we cannot directly compare it to $Z^{\pi^{t+1}}$. We introduce $\mathring Z^t$ as a bridge quantity. Intuitively, $\mathring Z^t$ satisfies the exponential Bellman recursion under the true kernel $p_h$ while being clipped by an optimistic empirical backup; hence it serves as a worst-case upper bound for both $\widetilde{Z}^t$ and $Z^{\pi^{t+1}}$.
\begin{lemma}\label{ring_negative}
    For all $t$, For all $(h,s,a)$:
    $$\mathring{U}_h^t(s,a) \geq \max\Big(\widetilde{U}_h^t(s,a),U_h^{\pi^{t+1}_h}(s,a)\Big)$$
    and 
    $$\mathring{Z}_h^t(s) \geq \max\Big(\widetilde{Z}_h^t(s),Z_h^{\pi^{t+1}_h}(s,a)\Big)$$
\end{lemma}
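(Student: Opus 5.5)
We argue by backward induction on $h$, mirroring the proof of Lemma~\ref{ring_positive} with all inequalities reversed, since for $\beta<0$ the exponential transform $x\mapsto e^{\beta x}$ is decreasing. At $h=H+1$ all of $\mathring Z^t_{H+1}$, $\widetilde Z^t_{H+1}$ and $Z^{\pi^{t+1}}_{H+1}$ equal $1$, so both claimed inequalities hold with equality. Suppose now that for every $s'$ we have $\mathring Z_{h+1}^t(s')\ge \max\big(\widetilde Z_{h+1}^t(s'),Z_{h+1}^{\pi^{t+1}}(s')\big)$, and fix $(s,a)$.

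\textbf{Comparison with the true policy value.} By construction $\mathring U_h^t(s,a)\ge e^{\beta r_h(s,a)}(p_h\mathring Z_{h+1}^t)(s,a)$. Since $p_h$ is a positive linear operator and $e^{\beta r_h(s,a)}>0$, the induction hypothesis gives
\[
e^{\beta r_h(s,a)}(p_h\mathring Z_{h+1}^t)(s,a)\ \ge\ e^{\beta r_h(s,a)}(p_h Z_{h+1}^{\pi^{t+1}})(s,a)=U_h^{\pi^{t+1}}(s,a),
\]
where the last equality is the exponential Bellman equation for the fixed policy $\pi^{t+1}$.

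\textbf{Comparison with $\widetilde U_h^t$.} Again by construction $\mathring U_h^t\ge \mathring U_{h,\text{opt}}^t$. Both $\mathring U_{h,\text{opt}}^t$ and $\widetilde U_h^t$ are of the form $\min\{1,e^{\beta r_h}[\cdots]\}$, and $x\mapsto\min\{1,x\}$ is monotone, so it suffices to compare their bracketed arguments. The bonus terms $b_h^t$ cancel, and the difference of the brackets is
\[
\widehat p_h^t(\mathring Z_{h+1}^t-\widetilde Z_{h+1}^t)+\tfrac1H\widehat p_h^t\big(\mathring Z_{h+1}^t-\underaccent{\widetilde}{Z}_{h+1}^t\big)-\tfrac1H\widehat p_h^t\big(\widetilde Z_{h+1}^t-\underaccent{\widetilde}{Z}_{h+1}^t\big)=\Big(1+\tfrac1H\Big)\widehat p_h^t\big(\mathring Z_{h+1}^t-\widetilde Z_{h+1}^t\big),
\]
which is nonnegative by the induction hypothesis. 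Hence $\mathring U_h^t(s,a)\ge\widetilde U_h^t(s,a)$.

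\textbf{Passing to $Z$.} Evaluating at $a=\pi_h^{t+1}(s)$ gives $\mathring Z_h^t(s)=\mathring U_h^t(s,\pi_h^{t+1}(s))\ge U_h^{\pi^{t+1}}(s,\pi_h^{t+1}(s))=Z_h^{\pi^{t+1}}(s)$. For $\widetilde Z$, note that $\widetilde Z_h^t(s)=\min_a\widetilde U_h^t(s,a)\le \widetilde U_h^t(s,\pi_h^{t+1}(s))\le \mathring Z_h^t(s)$. So the minimum in the definition of $\widetilde Z$ works in our favour regardless of which action $\pi^{t+1}$ selects; this also removes any dependence on the fact that the greedy policy is defined through $\underaccent{\widetilde}{U}$ rather than $\widetilde U$.

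\textbf{Main obstacle.} The only delicate point is bookkeeping. One must check that the $\tfrac1H$ correction in $\mathring U_{h,\text{opt}}^t$ is written with $\mathring Z-\underaccent{\widetilde}{Z}$, so that the correction terms combine into a positive multiple of $\mathring Z-\widetilde Z$. One must also check that the clippings match: both $\mathring U_{h,\text{opt}}^t$ and $\widetilde U_h^t$ are capped at $1$ by the same $\min$, so the clipping preserves the order.
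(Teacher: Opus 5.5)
Your proof is correct and follows the paper's argument step for step. The steps are the same backward induction, the true-kernel branch of $\mathring U_h^t$ compared with $U_h^{\pi^{t+1}}$ by monotonicity, the optimistic branch compared with $\widetilde U_h^t$ via the bonus cancellation giving $(1+\tfrac1H)\widehat p_h^t(\mathring Z_{h+1}^t-\widetilde Z_{h+1}^t)\ge 0$, and the $\min_a$ in $\widetilde Z_h^t$ to pass to value functions. Along the way you make explicit two points the paper glosses over: the order is preserved under the common clipping $\min\{1,\cdot\}$, and the terminal values are $1$ (not $0$ as the paper writes).
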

\begin{proof}
    We proceed by backward induction. For $h = H+1$, all values are equal to 0 so the inequalities hold. Assume that for some $h \leq H$ we have for all $(s,a)$:
    $$\mathring{U}_{h+1}^t(s,a) \geq \max\Big(\widetilde{U}_{h+1}^t(s,a),U_{h+1}^{\pi^{t+1}h}(s,a)\Big)$$
    and 
    $$\mathring{Z}_{h+1}^t(s) \geq \max\Big(\widetilde{Z}_{h+1}^t(s),Z_{h+1}^{\pi^{t+1}}(s,a)\Big)$$
we have by construction:
$$\mathring{U}_h^t(s,a)  \geq \mathring{U}_{h, \text{true}}^{t}(s, a) =e^{\beta r_h(s,a)}  (p_h \mathring{Z}_{h+1}^t)(s,a) \geq e^{\beta r_h(s,a)}  (p_h Z^{\pi^{t+1}}_{h+1})(s,a) = U_h^{\pi^{t+1}}(s,a)$$
Where we used the induction hypothesis and the monotonicity of the exponential Bellman operator.
\begin{align*}
    \mathring{U}_h^t(s,a) - \widetilde{U}_h^t(s,a) &\geq \mathring{U}_{h, \text{opt}}^{ t}(s, a) - \widetilde{U}_h^t(s,a) \\
    &\geq e^{\beta r_h(s,a)} \Bigg[ \widehat{p}_h^t \mathring{Z}_{h+1}^t(s,a) + b_h^t(s,a) + \frac{1}{H}\widehat{p}_h^t\Big(\mathring{Z}_{h+1}^t - \underaccent{\widetilde} Z_{h+1}^t \Big)(s,a)\\& - \Big( \widehat{p}_h^t \widetilde Z_h^t (s,a) + b_h^t(s,a) + \frac{1}{H} \widehat{p}_h^t\Big(   \widetilde{Z}_{h+1}^t - \underaccent{\widetilde}{Z}_{h+1}^t\Big)(s,a)\Big) \Bigg]\\
    &\geq  e^{\beta r_h(s,a)} \Bigg[\left(1 + \frac{1}{H}\right) \widehat{p}_h^t \left( \mathring{Z}_{h+1}^t - \widetilde Z_{h+1}^t \right)(s,a)\Bigg]
\end{align*} 
Where we applied the induction hypothesis, we conclude then:
$$\mathring{U}_h^t(s,a)  \geq \widetilde{U}_h^t(s,a)$$
For $Z$:
\begin{align*}
   \mathring{Z}_h^t(s) - Z_h^{\pi^{t+1}}(s,a) =  \mathring{U}_h^t(s,\pi_h^{t}(s)) - U_h^{\pi^{t+1}}(s,\pi_h^{t+1}(s)) \geq 0
\end{align*}
And:
\begin{align*}
     \mathring{Z}_h^t(s)  = \mathring{U}_h^t(s,\pi_h^{t}(s)) \geq \widetilde{U}_h^t(s,\pi_h^{t}(s)) \geq \min_{a \in \mathcal{A}} \widetilde{U}_h^t(s,a) = \widetilde{Z}_h^t(s,a)
\end{align*}
Which conclude the recurrence
\end{proof}
\begin{lemma}\label{certificate_negative}
    For any$t$, any $h \in \{1,...,H\}$ and any state-action pair:
    \begin{align*}
    \mathring{U}_h^t(s,a) - \underaccent{\widetilde}{U}_h^t(s,a)  \leq e^{\beta r_h(s,a)} \Big[ 3 b_h^t(s,a) + \Big( 1 + \frac{3}{H} \Big) \widehat{p}_h^t \Big(\mathring{Z}_h^t(s,a) - \underaccent{\widetilde}{Z}_{h+1}^t\Big)(s,a) \Big]
\end{align*}
\end{lemma}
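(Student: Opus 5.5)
The argument mirrors Lemma~\ref{certificate_positive} with the roles of optimism and pessimism exchanged. For $\beta<0$ the bridge $\mathring U_h^t$ is defined as a maximum of two branches, so I fix $(t,h,s,a)$ and split according to which branch attains the maximum. The lower quantity $\underaccent{\widetilde}{U}_h^t(s,a)$ is itself a maximum; if it equals the clip $e^{\beta(H-h-1)}$, I bound it below by its unclipped backup. In either case
\[
\underaccent{\widetilde}{U}_h^t(s,a)\ \ge\ e^{\beta r_h(s,a)}\Big[\widehat p_h^t\underaccent{\widetilde}{Z}_{h+1}^t-b_h^t-\tfrac1H\widehat p_h^t\big(\widetilde Z_{h+1}^t-\underaccent{\widetilde}{Z}_{h+1}^t\big)\Big](s,a).
\]
So it suffices to upper bound $\mathring U_h^t$ minus this expression.

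\textbf{First case:} $\mathring U_h^t=\mathring U^t_{h,\text{opt}}$. I drop the $\min\{1,\cdot\}$, which only decreases $\mathring U^t_{h,\text{opt}}$. The difference then equals $e^{\beta r_h}$ times
\[
2b_h^t+\widehat p_h^t\big(\mathring Z_{h+1}^t-\underaccent{\widetilde}{Z}_{h+1}^t\big)+\tfrac1H\widehat p_h^t\big(\mathring Z_{h+1}^t-\underaccent{\widetilde}{Z}_{h+1}^t\big)+\tfrac1H\widehat p_h^t\big(\widetilde Z_{h+1}^t-\underaccent{\widetilde}{Z}_{h+1}^t\big).
\]
By Lemma~\ref{ring_negative}, $\widetilde Z_{h+1}^t\le\mathring Z_{h+1}^t$, so the last term is at most $\tfrac1H\widehat p_h^t(\mathring Z_{h+1}^t-\underaccent{\widetilde}{Z}_{h+1}^t)$. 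This gives $2b_h^t+(1+\tfrac2H)\widehat p_h^t(\mathring Z_{h+1}^t-\underaccent{\widetilde}{Z}_{h+1}^t)$, which is below the target.

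\textbf{Second case:} $\mathring U_h^t=e^{\beta r_h}p_h\mathring Z_{h+1}^t$. I decompose
\[
p_h\mathring Z_{h+1}^t-\widehat p_h^t\underaccent{\widetilde}{Z}_{h+1}^t=\widehat p_h^t\big(\mathring Z_{h+1}^t-\underaccent{\widetilde}{Z}_{h+1}^t\big)+(p_h-\widehat p_h^t)Z_{h+1}^\star+(p_h-\widehat p_h^t)\big(\mathring Z_{h+1}^t-Z_{h+1}^\star\big).
\]
For the middle term I use Lemma~\ref{concentration_negative}, which bounds it by $b_h^t+\tfrac1H\widehat p_h^t(Z^\star_{h+1}-\underaccent{\widetilde}{Z}^t_{h+1})$. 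Since $Z^\star\le\widetilde Z\le\mathring Z$ by Lemmas~\ref{optimism_negative} and~\ref{ring_negative}, this is at most $b_h^t+\tfrac1H\widehat p_h^t(\mathring Z_{h+1}^t-\underaccent{\widetilde}{Z}_{h+1}^t)$.

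For the third term I apply the KL--Bernstein inequality (Lemma~\ref{KLBernstein}) to $g=\mathring Z_{h+1}^t-Z_{h+1}^\star$. This function is nonnegative and bounded by the range $1-e^{\beta(H-h)}$ of the exponential values for $\beta<0$. That boundedness gives $\operatorname{Var}_{\widehat p_h^t}(g)\le(1-e^{\beta(H-h)})\,\widehat p_h^t g$. Then $\sqrt{xy}\le x/H+Hy$ yields a bound of $\tfrac1H\widehat p_h^t g+(2H+\tfrac23)(1-e^{\beta(H-h)})\alpha/n$. Finally $\widehat p_h^t g\le\widehat p_h^t(\mathring Z_{h+1}^t-\underaccent{\widetilde}{Z}_{h+1}^t)$, because $Z^\star\ge\underaccent{\widetilde}{Z}$ by pessimism. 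The leftover lower-order term is absorbed into one extra $b_h^t$, since $b_h^t$ contains $4H(1-e^{\beta(H-h)})\alpha^\star/n+5(1-e^{\beta(H-h)})\alpha/n$. Collecting the pieces, together with the $b_h^t$ and $\tfrac1H$ terms coming from $\underaccent{\widetilde}{U}$, gives at most $3b_h^t+(1+\tfrac3H)\widehat p_h^t(\mathring Z_{h+1}^t-\underaccent{\widetilde}{Z}_{h+1}^t)$.

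\textbf{Main obstacle.} The delicate step is the second case. There I must check the sign and range conventions for $\beta<0$, where values lie in $[e^{\beta(H-h)},1]$, and confirm that every difference I transport is nonnegative. The chain $\underaccent{\widetilde}{Z}\le Z^\star\le\widetilde Z\le\mathring Z$ is what makes this work. I also need the lower-order constant $(2H+\tfrac23)$ to fit inside one copy of $b_h^t$. This needs $\alpha^\star\le\alpha$ and some constant slack: $4H\alpha^\star/n+5\alpha/n$ dominates $(2H+\tfrac23)\alpha/n$ once $\alpha^\star/n$ is compared against $\alpha/n$. If that comparison is tight, I would accept slightly worse constants, as in the $\beta>0$ case.
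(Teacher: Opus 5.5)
Your proposal follows the paper's proof essentially line for line. It uses the same split on which branch attains the maximum in $\mathring U_h^t$, and in the optimistic branch the same direct subtraction plus $\widetilde Z_{h+1}^t\le\mathring Z_{h+1}^t$ (Lemma~\ref{ring_negative}). In the true-kernel branch it uses the same three-term decomposition, with Lemma~\ref{concentration_negative} on the middle term and KL--Bernstein on $(p_h-\widehat p_h^t)(\mathring Z_{h+1}^t-Z_{h+1}^\star)$. You also get the signs and the range $1-e^{\beta(H-h)}$ right where the paper's write-up slips.

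However, the step you flagged genuinely fails, and not for reasons of constants. Absorbing the leftover $(2H+\tfrac23)(1-e^{\beta(H-h)})\alpha/n$ into one copy of $b_h^t$ requires $(2H+\tfrac23)\alpha\le 5\alpha+4H\alpha^\star$, i.e.\ $(2H-\tfrac{13}{3})\alpha\le 4H\alpha^\star$. The fact $\alpha^\star\le\alpha$ points the wrong way. The inequality fails whenever $\alpha/\alpha^\star>4H/(2H-\tfrac{13}{3})$, a threshold that tends to $2$. Since $\alpha-\alpha^\star=(S-1)\log(8e(n+1))$, this happens as soon as $(S-2)\log(8e(n+1))$ dominates $\log(3SAH/\delta)$, e.g.\ for moderate $S$ and large counts. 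The variance part of $b_h^t$ can be zero, so it cannot supply the missing mass. In that regime the mismatch is of order $HS$ versus $H+S$, so it grows like $\min\{S,H\}$. Hence neither the sharper split $\sqrt{xy}\le x/H+Hy/4$ nor replacing $3b_h^t$ by $c\,b_h^t$ for a fixed $c$ rescues the argument with the bonus~\eqref{bonus_negative} as defined.

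This gap is inherited rather than introduced. The paper's proof makes the same absorption without justification when it ``combines the two bounds'' into $2b_h^t+(1+\tfrac2H)\widehat p_h^t(\mathring Z_{h+1}^t-\underaccent{\widetilde}{Z}_{h+1}^t)$, and Lemma~\ref{certificate_positive} has the identical issue. The clean repair is to use $\alpha$ instead of $\alpha^\star$ in the $4H(1-e^{\beta(H-h)})(\cdot)/n$ term of the bonus:
\begin{itemize}
\item this only enlarges $b_h^t$, so Lemmas~\ref{concentration_negative} and~\ref{optimism_negative} are unaffected;
\item the sample-complexity analysis already carries an $H(1-e^{\beta(H-h)})\alpha/n$ lower-order term, so the final rate is unchanged;
\item $(2H+\tfrac23)\alpha\le 4H\alpha+5\alpha$ becomes trivial.
\end{itemize}
There is a smaller slip, also shared with the paper. $\mathcal E_{KL}$ controls $\mathrm{KL}(\widehat p_h^t,p_h)$, so Lemma~\ref{KLBernstein} yields $\operatorname{Var}_{p_h}(g)$ rather than $\operatorname{Var}_{\widehat p_h^t}(g)$. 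Going back to $\widehat p_h^t g$ through Lemma~\ref{KL transportation} turns the leftover into $(H+2\sqrt2+\tfrac23)(1-e^{\beta(H-h)})\alpha/n$, which the repaired bonus still absorbs.
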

\begin{proof}
    Fix $h \in\{1,...,H\}$ and fix a state-action pair $(s,a)$. We have two cases:
    
\textbf{First case:} $\mathring{U}_h^t(s,a) = \mathring{U}_{h, \text{true}}^{t}(s, a)$, we have:
\begin{align*}
    \mathring{U}_h^t(s,a) - \underaccent{\widetilde}{U}_h^t(s,a) \leq e^{\beta r_h(s,a)} \Big[  b_h^t(s,a) + \frac{1}{H} \widehat{p}_h^t\Big( \widetilde{Z}_{h+1}^t -\underaccent{\widetilde}{Z}_{h+1}^t \Big)(s,a) + p_h \mathring{Z}_{h+1}^t(s,a) - \widehat{p}_h^t \underaccent{\widetilde}{Z}_{h+1}^t (s,a) \Big]
    \end{align*}
The last term can be written as:
\begin{align*}
     p_h \mathring{Z}_{h+1}^t(s,a) - \widehat{p}_h^t \underaccent{\widetilde}{Z}_{h+1}^t (s,a) = \widehat{p}_h^t \Big(\mathring{Z}_{h+1}^t - \underaccent{\widetilde}{Z}_{h+1}^t\Big)(s,a) +  \big(\widehat{p}_h^t - p_h\big) Z_{h+1}^\star(s,a) +  \big(p_h - \widehat{p}_h^t\big) \big(\mathring{Z}_{h+1}^t - Z_{h+1}^\star \big)
\end{align*}
For the second term, by lemma~\ref{concentration_negative}:
\begin{align*}
    | \big(p_h - \widehat{p}_h^t\big)Z_{h+1}^{\star}(s,a)| &\leq b_h^t(s,a)+  \frac{1}{H} \widehat{p}_h^t\Big(\widetilde{Z}^t_{h+1} - Z_{h+1}^{\star}\Big)(s,a) \leq b_h^t(s,a)+  \frac{1}{H} \widehat{p}_h^t\Big(\widetilde{Z}^t_{h+1} - \mathring{Z}_{h+1}^{t}\Big)(s,a)
\end{align*}
For the third term, by the KL-Bernstein inequality ~\ref{KLBernstein} and using the inequality $\sqrt{ab}\leq \frac{a}{H} + bH$:
\begin{align*}
    \big(p_h - \widehat{p}_h^t\big) \big(Z_{h+1}^\star - \mathring{Z}_{h+1}^t\big) &\leq \sqrt{2 \operatorname{Var}_{\widehat{p}_h^t}(Z_{h+1}^\star - \mathring{Z}_{h+1}^t)\frac{\alpha(n_h^t(s,a))}{n_h^t(s,a)}} + \frac{2}{3}\left(1 - e^{\beta (H - h)}\right)\frac{\alpha(n_h^t(s,a))}{n_h^t(s,a)}\\
    &\leq \sqrt{2 e^{\beta (H-h)}\widehat{p}_h^t (Z_{h+1}^\star - \mathring{Z}_{h+1}^t)\frac{\alpha(n_h^t(s,a))}{n_h^t(s,a)}} + \frac{2}{3}\left(1 - e^{\beta (H - h)}\right)\frac{\alpha(n_h^t(s,a))}{n_h^t(s,a)}\\
    &\leq \frac{1}{H} \widehat{p}_h^t (\widetilde{Z}_{h+1}^t - \mathring{Z}_{h+1}^t) + 2 H e^{\beta (H - h)}  \frac{\alpha(n_h^t(s,a))}{n_h^t(s,a)} + \frac{2}{3}\left(1 - e^{\beta (H - h)}\right)\frac{\alpha(n_h^t(s,a))}{n_h^t(s,a)}
\end{align*}
Hence by combining the two bounds:
\begin{align*}
    p_h \mathring{Z}_{h+1}^t(s,a) - \widehat{p}_h^t \underaccent{\widetilde}{Z}_{h+1}^t(s,a) \leq 2 b_h^t(s,a) + \Big( 1 + \frac{2}{H}\Big) \widehat{p}_h^t\Big(\mathring{Z}_{h+1}^t - \underaccent{\widetilde}{Z}^t_{h+1} \Big)
\end{align*}
Hence by substituting and using lemma~\ref{ring_negative}:
$$\mathring{U}_h^t(s,a) - \underaccent{\widetilde}{U}_h^t(s,a) \leq e^{\beta r_h^t(s,a)} \Big[ 3 b_h^t(s,a)  + \Big(1 +  \frac{3}{H}\Big)\widehat{p}_h^t\Big(\widetilde{Z}^t_{h+1} - \mathring{Z}_{h+1}^t\Big)\Big] $$

\textbf{Second case:} $\mathring{U}_h^t(s,a) = \mathring{U}_{h, \text{opt}}^{t}(s, a)$, we have:
\begin{align*}
    \mathring{U}_h^t(s,a) - \underaccent{\widetilde}{U}_h^t(s,a)  &\leq e^{\beta r_h(s,a)} \Bigg[ \widehat{p}_h^t \mathring{Z}_{h+1}^t(s,a) + b_h^t(s,a) + \frac{1}{H}\widehat{p}_h^t\Big(\mathring{Z}_{h+1}^t - \underaccent{\widetilde} Z_{h+1}^t \Big)(s,a) \\&- \Bigg(\widehat{p}_h^t \underaccent{\widetilde}{Z}_{h+1}^t (s,a) - b_h^t(s,a) - \frac{1}{H} \widehat{p}_h^t\Big(\widetilde{Z}_{h+1}^t  - \underaccent{\widetilde}{Z}_{h+1}^t \Big)(s,a) \Bigg) \Bigg]\\
    &= e^{\beta r_h(s,a)} \Bigg[ 2 b_h^t(s,a) +\left(1 + \frac{1}{H}\right)\widehat{p}_h^t \Big(\mathring{Z}_{h+1}^t - \underaccent{\widetilde} Z_{h+1}^t \Big)(s,a) + \frac{1}{H}\widehat{p}_h^t \Big(\widetilde{Z}_{h+1}^t - \underaccent{\widetilde}{Z}_{h+1}^t \Big)(s,a)\Bigg]
\end{align*}
Using lemma~\ref{ring_negative} we get:
\begin{align*}
    \mathring{U}_h^t(s,a) - \underaccent{\widetilde}{U}_h^t(s,a)  \leq e^{\beta r_h(s,a)} \Big[ 2 b_h^t(s,a) + \Big( 1 + \frac{2}{H} \Big) \widehat{p}_h^t \Big(\mathring{Z}_h^t(s,a) - \underaccent{\widetilde}{Z}_{h+1}^t\Big)(s,a) \Big]
\end{align*}
\end{proof}
We now prove lemma~\ref{stopping_negative}:
\begin{proof}
We first prove by backward induction that, for all $h$ and $s$,
\begin{equation*}
\mathring{Z}_h^t(s)-\underaccent{\widetilde}{Z}_h^t(s)\ \le\ (\pi_h^{t+1}G_h^t)(s).
\end{equation*}
For $h=H+1$ it holds since both sides are $0$.
Assume it holds at step $h+1$. For $a=\pi_h^{t+1}(s)$
$$
\mathring{Z}_h^t(s)-\underaccent{\widetilde}{Z}_h^t(s)
=\mathring{U}_h^t(s,a)-\underaccent{\widetilde}{U}_h^t(s,a)
$$
Apply Lemma~\ref{ring_negative}: 
$$\mathring{U}_h^t(s,a)-\underaccent{\widetilde}{U}_h^t(s,a) \leq e^{\beta r_h^t(s,a)} \Big[ 3b_h^t(s,a)  + \Big(1 +  \frac{3}{H}\Big)\widehat{p}_h^t\Big(\mathring{Z}_h^t(s,a) - \underaccent{\widetilde}{Z}_{h+1}^t\Big)(s,a) \Big]$$
By the induction hypothesis:
$$
\widehat{p}_{h}^t\Big(\mathring{Z}_h^t(s,a) - \underaccent{\widetilde}{Z}_{h+1}^t\Big)(s,a) 
\leq\widehat{p}_{h}^t(\pi^{t+1}G_{h+1}^t)(s,a).
$$
Thus,
$$
\mathring{Z}_h^t(s)-\underaccent{\widetilde}{Z}_h^t(s)
\leq 3b_h^t(s,a)+\Big(1+\frac{3}{H}\Big)\widehat{p}_{h}^t(\pi^{t+1}G_{h+1}^t)(s,a)
\leq G_h^t(s,a)=(\pi_h^{t+1}G_h^t)(s)
$$
Finally, use optimism and the ring bridge:
on $\mathcal E$, $Z_h^\star\geq \underaccent{\widetilde}{Z}_h^t$ (optimism lemma~\ref{optimism_negative}) and $Z_h^{\pi^{t+1}} \leq\mathring Z_h^t$ (Lemma~\ref{ring_negative}):
$$
Z_h^{\pi^{t+1}}(s) - Z_h^\star(s)
\leq \mathring{Z}_h^t(s)-\underaccent{\widetilde}{Z}_h^t(s)
\leq (\pi_h^{t+1}G_h^t)(s)
$$
\end{proof}
\subsubsection{sample complexity}
\begin{proof}
the width certificate is:
$$G_h^t(s,a) = \min \Bigg\{ 1, e^{\beta r_h^t(s,a)} \Big[ 3b_h^t(s,a)  + \Big(1 +  \frac{3}{H}\Big) \widehat{p}_h^t \pi^{t+1} G_{h+1}^t(s)\Big]\Bigg\}$$
Let us transition to the true MDP. Using Bernstein inequality:
\begin{align*}
 \big|(\widehat{p}_h^t - p_h) \pi^{t+1} G_{h+1}^t(s) \big| \leq \sqrt{2 \operatorname{Var}_{p_h}\big( \pi^{t+1} G_{h+1}^t(s)\big) \frac{\alpha^(n_h^t(s,a)}{n_h^t(s,a)}} + \frac{2}{3} (1 - e^{\beta (H - h )}) \frac{\alpha(n_h^t(s,a))}{n_h^t(s,a)}
\end{align*}
Now, we use the inequality $\operatorname{Var}(\pi^{t+1}_{h+1} G_{h+1}^t(s)) \leq (1 - e^{\beta  (H - h)})  \pi^{t+1}_{h+1} G_{h+1}^t(s)$. Hence, using the inequality $\sqrt{xy} \leq x + y$:
\begin{align*}
 \big|(\widehat{p}_h^t - p_h) \pi^{t+1} G_{h+1}^t(s) \big| \leq \frac{1}{H} p_h \pi^{t+1} G_{h+1}^t(s) + 3 H(1 - e^{\beta (H - h +1)}) \frac{\alpha(n_h^t(s,a)}{n_h^t(s,a)}
\end{align*}
\vspace{-1mm}
And using the variance transportation lemmas ~\ref{KL transportation} and \ref{transportation} and that $\alpha^\star(n_h^t(s,a)) \leq \alpha(n_h^t(s,a))$:
\begin{align*}
\sqrt{\operatorname{Var}_{\widehat{p}^t_h}(\underaccent{\widetilde}{Z}_{h+1}^t)(s,a)\frac{\alpha^\star(n_h^t(s,a))}{n_h^t(s,a)}} &\leq 2\sqrt{\operatorname{Var}_{p_h}(Z_{h+1}^{\pi^{t+1}})\frac{\alpha^\star(n_h^t(s,a))}{n_h^t(s,a)}} \\&+ \sqrt{4 (1 - e^{\beta  (H - h)}) p_h(Z_{h+1}^{\pi^{t+1}} - \underaccent{\widetilde}{Z}^t_{h+1})(s,a)\frac{\alpha^\star(n_h^t(s,a))}{n_h^t(s,a)}} \\&+ 2(1 - e^{\beta (H - h)}) \frac{\alpha(n_h^t(s,a))}{n_h^t(s,a)} \\
& \leq 2\sqrt{\operatorname{Var}_{p_h}(Z_{h+1}^{\pi^{t+1}})\frac{\alpha^\star(n_h^t(s,a))}{n_h^t(s,a)}} + 4 H(1 - e^{\beta  (H - h)}) \frac{\alpha^\star(n_h^t(s,a))}{n_h^t(s,a)} \\ &\quad + \frac{1}{H} p_h ( Z_{h+1}^{\pi^{t+1}} - \underaccent{\widetilde}{Z}^t_{h+1})(s,a) + 2 (1 - e^{\beta  (H - h)}) \frac{\alpha(n_h^t(s,a))}{n_h^t(s,a)} \\
& \leq  2\sqrt{\operatorname{Var}_{p_h}(Z_{h+1}^{\pi^{t+1}})\frac{\alpha^\star(n_h^t(s,a))}{n_h^t(s,a)}} + 4 H(1 - e^{\beta  (H - h)}) \frac{\alpha^\star(n_h^t(s,a))}{n_h^t(s,a)} \\ &\quad + \frac{1}{H}p_h \pi^{t+1} G_{h+1}^t(s)+ 2 (1 - e^{\beta  (H - h)}) \frac{\alpha(n_h^t(s,a))}{n_h^t(s,a)} 
\end{align*}
Since we have :
\begin{align*}
    Z_{h+1}^{\pi^{t+1}} - \underaccent{\widetilde}{Z}^t_{h+1} \leq \mathring{Z}_{h+1}^{t} - \underaccent{\widetilde}{Z}^t_{h+1} \leq  G_{h+1}^t(s)
\end{align*}
Hence, using that $\alpha^\star(n_h^t(s,a)) \leq \alpha(n_h^t(s,a))$ and simplifying constants and using that $H \geq 1$:
\begin{align*}
b_h^t(s,a) \leq 6\sqrt{\operatorname{Var}_{p_h}(Z_h^{\pi^{t+1}})\frac{\alpha^\star(n_h^t(s,a))}{n_h^t(s,a)}} + \frac{3}{H} p_h \pi^{t+1} G_{h+1}^t(s) + 27 H (1 - e^{\beta  (H - h)})  \frac{\alpha(n_h^t(s,a))}{n_h^t(s,a)}
\end{align*}
We combine the two terms:
\begin{align*}
    G_h^t(s,a) &\leq e^{\beta r_h(s,a)} \Bigg[36\sqrt{\operatorname{Var}_{p_h}(Z_h^{\pi^{t+1}})\frac{\alpha^\star(n_h^t(s,a))}{n_h^t(s,a)}} + \frac{6\sqrt{2}}{H} p_h \pi^{t+1} G_{h+1}^t(s) \\&+ 81 H (1 - e^{\beta (H - h)}) \frac{\alpha(n_h^t(s,a))}{n_h^t(s,a)} + \Big(1 + \frac{3}{H}\Big) p_h \pi^{t+1} G_{h+1}^t(s) \\&+ \Big(1 + \frac{3}{H} \Big) \frac{1}{H} p_h \pi^{t+1} G_{h+1}^t(s) + \Big(1 + \frac{3}{H}\Big) 3 H (1 - e^{\beta (H - h)}) \frac{\alpha(n_h^t(s,a))}{n_h^t(s,a)} \Bigg]\end{align*}
    Hence, simplifying it gives:
    $$G_h^t(s,a) \leq e^{\beta r_h(s,a)}\Bigg[36\sqrt{\operatorname{Var}_{p_h}(Z_h^{\pi^{t+1}})\frac{\alpha(n_h^t(s,a))}{n_h^t(s,a)}} + \left(1 + \frac{13}{H} \right) p_h \pi^{t+1} G_{h+1}^t(s) + 81 H (1 - e^{\beta (H - h)}) \frac{\alpha(n_h^t(s,a))}{n_h^t(s,a)}\Bigg]$$
Unrolling this inequality like the case $\beta > 0$:
\begin{align*}
    (\pi_1 G_1^t)(s_1)
    &\leq e^{13}
    \mathbb{E}^{\pi}\Bigg[
    \sum_{h=1}^{H}
    \exp\Big(\beta\sum_{i=1}^{h} r_i(s_i,a_i)\Big)
    \Bigg(
    36\sqrt{\operatorname{Var}_{p_h}\big(Z_{h+1}^{\pi}\big)\alpha\big(n_h^t(s_h,a_h) \wedge 1\big)}
    \\
    &\quad +
    81H(1 - e^{\beta(H - h)})\alpha\big(n_h^t(s_h,a_h) \wedge 1\big)
    \Bigg)\Bigg| s_1
    \Bigg]
\end{align*}
The algorithm stops when:
\begin{align*}
    \pi_1^\tau G_1(s_1) \leq (1 - e^{\beta \varepsilon}) \underaccent{\widetilde}Z_1^{\pi^{\tau}}
\end{align*}
This is equivalent to:
$$\pi_1^\tau G_1(s_1) \leq \frac{e^{|\beta|\varepsilon} - 1}{2 e^{|\beta| \varepsilon} - 1}\left(\underaccent{\widetilde}Z_1^{\pi^{\tau}} + \pi_1 G_1^t(s_1)\right)$$
We then need to upper bound the quantity $
\frac{\pi_1^t G_1(s_1)}{\underaccent{\widetilde}Z_1^{\pi^{t}}}$ for $t =1,...,\tau - 1$:
\begin{align*}
\frac{\pi_1 G_1^t(s_1)}{\underaccent{\widetilde}{Z}_1^{\pi^t} + \pi_1 G_1^t(s_1)} &\leq \frac{\pi_1 G_1^t(s_1)}{Z_1^{\pi^{t+1}}(s_1)}\\
&\leq\frac{e^{13}}{\beta}
\mathbb{E}^{\pi^{t+1}}\Bigg[
\sum_{h=1}^{H} 
\exp\Big(\beta\sum_{i=1}^{h} r_i(s_i,a_i)\Big)
\Bigg(
36\sqrt{\frac{\operatorname{Var}_{p_h}\big(Z_{h+1}^{\pi^{t+1}}\big)}{(Z_1^{\pi^{t+1}})^2}\Big(\frac{\alpha\big(n_h^t(s_h,a_h)}{n_h^t(s,a)} \wedge 1\Big)}
\\ &\quad +
81H(1 - e^{\beta(H - h)})\Big(\frac{\alpha\big(n_h^t(s_h,a_h)}{n_h^t(s,a)} \wedge 1\Big)
\Bigg)
\Bigg| s_1
\Bigg]
\end{align*}
Like the case $\beta >0$ we write directly:
\begin{align*}
\sum_{h=1}^H \sum_{s,a}p_h^{t+1}(s,a) \exp\Big(2\beta \sum_{i=1}^{h} r_i(s_i,a_i)\Big) \frac{\operatorname{Var}_{p_h}\big(Z_{h+1}^{\pi}\big)}{(Z_1^{\pi^{t+1}})^2} = \mathbb{E}^\pi\big[ \frac{\sigma V_1^{\pi^{t+1}}}{(Z_1^{\pi^{t+1}})^2} \big]
\end{align*}
But since the greedy policy is deterministic we have: 
$$\frac{\sigma V_1^{\pi}(s_1)}{(Z_1^{\pi})^2} = \frac{\operatorname{Var}(e^{\beta R_1^\pi}|S_1 =s_1)}{\mathbb{E}(e^{\beta R_1^\pi}|S_1 = s_1)^2}$$
Where $L = e^{\beta \sum_{i=1}^H r_i(s_i,a_i)}$, using lemma~\ref{bound} we get that:
$$\frac{\sigma V_1^{\pi}(s_1)}{(Z_1^{\pi})^2} \leq \frac{(e^{|\beta| G_{\text{max}}(\mathcal{M})} - 1)^2}{e^{|\beta| G_{\text{max}}(\mathcal{M})}} $$
Hence:
\begin{align*}
\sum_{h=1}^H \sum_{s,a} &p_h^{t+1}(s,a) \exp\Big(\beta \sum_{i=1}^{h} r_i(s_i,a_i)\Big)
\Bigg(
36\sqrt{\frac{\operatorname{Var}_{p_h}\big(Z_{h+1}^{\pi^{t+1}}\big)}{(Z_1^{\pi^{t+1}})^2}\big(\frac{\alpha\big(n_h^t(s,a)}{n_h^t(s,a)} \wedge 1 \big)}\Big)\\ 
&\leq  36\sqrt{\sum_{h=1}^H \sum_{s,a}p_h^{t+1}(s,a) \exp\Big(2\beta \sum_{i=1}^{h} r_i(s_i,a_i)\Big) \frac{\operatorname{Var}_{p_h}\big(Z_{h+1}^{\pi^{t+1}}\big)}{(Z_1^{\pi^{t+1}})^2}} \sqrt{\sum_{h=1}^H \sum_{s,a}p_h^{t+1}(s,a) \frac{\alpha(n_h^t(s,a))}{n_h^t(s,a)}}\\
&\leq 36\sqrt{ \frac{(e^{|\beta| G_{\text{max}}(\mathcal{M})} - 1)^2}{e^{|\beta| G_{\text{max}}(\mathcal{M})}}}  \sqrt{\sum_{h=1}^H \sum_{s,a}p_h^{t+1}(s,a) \frac{\alpha^\star(n_h^t(s,a))}{n_h^t(s,a)}}
\end{align*}
For the second term:
\begin{align*}
\frac{1}{Z_1^{\pi^{t+1}}(s_1)} \mathbb{E}^{\pi^{t+1}} & \left[
\sum_{h=1}^{H}
\exp\left(\beta\sum_{i=1}^{h} r_i(s_i,a_i)\right)
H(1 - e^{\beta(H - h)})\left(\frac{\alpha\big(n_h^t(s,a)\big)}{n_h^t(s,a)} \wedge 1 \right)
\middle| s_1
\right] \\
&= \frac{1}{Z_1^{\pi^{t+1}}} \sum_{h=1}^H \sum_{s,a} p_h^{t+1}(s,a)\exp\left(\beta\sum_{i=1}^{h} r_i(s_i,a_i)\right)
H(1 - e^{\beta(H - h)})\left(\frac{\alpha\big(n_h^t(s,a)\big)}{n_h^t(s,a)} \wedge 1 \right) \\
&\leq \sum_{h=1}^H \sum_{s,a} p_h^{t+1}(s,a)e^{\beta h}
H(1 - e^{\beta(H - h)})\left(\frac{\alpha\big(n_h^t(s,a)\big)}{n_h^t(s,a)} \wedge 1 \right) \\
&\leq He^{|\beta| H} \sum_{h=1}^H \sum_{s,a} p_h^{t+1}(s,a)\left(\frac{\alpha\big(n_h^t(s,a)\big)}{n_h^t(s,a)} \wedge 1 \right)
\end{align*}
We then sum on $t < \tau$ and use that by sub-optimality we have for $t = 1,...,\tau-1$:
\begin{align*}
    \frac{\pi_1 G_1^t(s_1)}{Z_1^{\pi^{t+1}}} \geq \frac{\pi_1^\tau G_1(s_1)}{\left(\underaccent{\widetilde}Z_1^{\pi^{\tau}} + \pi_1 G_1^t(s_1)\right)} \geq e^{|\beta| \varepsilon} - 1 
\end{align*}
Hence:
    \begin{align*}
    \tau ( e^{|\beta| \varepsilon} - 1) &\leq 36 e^{13} e^{2 |\beta| \varepsilon} \sum_{t=1}^{\tau - 1} \sqrt{(\frac{(e^{|\beta| G_{\text{max}}(\mathcal{M})} - 1)^2}{e^{|\beta| G_{\text{max}}(\mathcal{M})}} )\sum_{h=1}^H \sum_{s,a} p_h^{t+1}(s,a)\big(\frac{\alpha^\star\big(n_h^t(s,a)}{n_h^t(s,a)} \wedge 1 \big)}\\& + 81e^{13} e^{2 |\beta| \varepsilon} H  e^{|\beta| H}\sum_{t=1}^{\tau -1}\sum_{h=1}^H \sum_{s,a} p_h^{t+1}(s,a)\big(\frac{\alpha\big(n_h^t(s,a)}{n_h^t(s,a)} \wedge 1 \big)\\
    &\leq 36 e^{13}e^{2 |\beta| \varepsilon} \sqrt{(\frac{(e^{|\beta| G_{\text{max}}(\mathcal{M})} - 1)^2}{e^{|\beta| G_{\text{max}}(\mathcal{M})}} )}\sqrt{T}\sqrt{\sum_{t=1}^{\tau - 1}\sum_{h=1}^H \sum_{s,a} p_h^{t+1}(s,a)\big(\frac{\alpha^\star\big(n_h^t(s,a)}{n_h^t(s,a)} \wedge 1 \big)} \\&+ 81e^{13}e^{2 |\beta| \varepsilon} He^{|\beta| H}\sum_{t=1}^{\tau -1}\sum_{h=1}^H \sum_{s,a} p_h^{t+1}(s,a)\big(\frac{\alpha\big(n_h^t(s,a)}{n_h^t(s,a)} \wedge 1 \big)
\end{align*}
Similarly to the case $\beta >0$ we bound the other terms using the counting argument which yield:
\begin{align*}
    \tau ( e^{|\beta| \varepsilon} - 1)  \leq 36 e^{13} e^{2 |\beta| \varepsilon} \sqrt{(\frac{(e^{|\beta| G_{\text{max}}(\mathcal{M})} - 1)^2}{e^{|\beta| G_{\text{max}}(\mathcal{M})}} )\tau SAH\alpha^\star(\tau-1,\delta) \log(\tau + 1)} + 81e^{13} e^{2 |\beta| \varepsilon} e^{|\beta| H}H^2 S A\alpha(\tau-1,\delta) \log(\tau + 1)
\end{align*}
We replace $\alpha^\star$ and $\alpha$ by their expressions and using that $\log(\tau + 1) \leq \log(8 e \tau)$ since $\tau \geq 1$:
\begin{align*}
    \tau ( e^{|\beta| \varepsilon} - 1)  &\leq 36 e^{13} e^{2 |\beta| \varepsilon}  \sqrt{(\frac{(e^{|\beta| G_{\text{max}}(\mathcal{M})} - 1)^2}{e^{|\beta| G_{\text{max}}(\mathcal{M})}} )\tau SAH\Big( \log\big(\frac{3 SAH}{\delta}\big) \log\big( 8 e \tau \big) + \log \big( 8 e \tau \big)^2\Big) } \\&+ 81e^{13} e^{2 |\beta| \varepsilon} e^{|\beta| H}H^2 S A\Big(\log\big(\frac{3 SAH}{\delta}\big) \log\big( 8 e \tau \big) + S\log \big( 8 e \tau \big)^2\Big)
\end{align*}
Finally, we use lemma~\ref{inequality} with :
\begin{align*}
    &C= 36 e^{13}  \frac{\sqrt{(\frac{(e^{|\beta| G_{\text{max}}(\mathcal{M})} - 1)^2}{e^{|\beta| G_{\text{max}}(\mathcal{M})}} ) SAH}}{e^{|\beta| \varepsilon} - 1} \quad, A = \log(\frac{3SAH}{\delta}) \quad, B = 1\\&\quad \quad \quad \quad  D = \frac{81e^{13} e^{\beta H} H^2 SA}{ e^{|\beta| \varepsilon} - 1} \quad \text{and} \quad E = S\end{align*}
    Which yield:
\begin{align*}
    \tau &\leq \frac{(e^{|\beta| G_{\text{max}}(\mathcal{M})} - 1)^2}{e^{|\beta| G_{\text{max}}(\mathcal{M})}} \frac{e^{2 |\beta| \varepsilon}}{\big( e^{|\beta| \varepsilon} - 1\big)^2} SAH \bigg(\log(\frac{3SAH}{\delta}) + 1 \bigg) C_1^2 + 3 \frac{e^{2 |\beta| \varepsilon}}{ e^{|\beta| \varepsilon} - 1}e^{\beta H} H^2 SA\bigg(\log(\frac{3SAH}{\delta}) + S\bigg)C_1^2 + 1
\end{align*}
Where $C_1 = \frac{8}{5} \log\bigg(4 e^{17} \frac{(S+1)(H+1) e^{|\beta H} S A H^2}{( e^{|\beta| \varepsilon} - 1)}\bigg) $

In particular, assuming that $\varepsilon$ is small enough that the first term dominates the second term then:
\begin{align*}
    \tau \leq \frac{(e^{|\beta| G_{\text{max}}(\mathcal{M})} - 1)^2}{e^{|\beta| G_{\text{max}}(\mathcal{M})}} \frac{e^{2 |\beta| \varepsilon}}{\big( e^{|\beta| \varepsilon} - 1\big)^2}SAH\log(\frac{3SAH}{\delta}) C_2^2
\end{align*}
Where $C_2 = 3 C_1$. We can finally hide the constants and the log terms to get:
$$\tau = \tilde{\mathcal{O}}\left( \frac{(e^{|\beta| G_{\text{max}}(\mathcal{M})} - 1)^2}{e^{|\beta| G_{\text{max}}(\mathcal{M})}} \frac{e^{2 |\beta| \varepsilon}}{(e^{|\beta| \varepsilon} -1)^2}SAH \right)$$
Finally to see that the algorithm is $(\varepsilon,\delta)$ PAC, At time $\tau$:
$$\pi_1^\tau G_1(s_1) \leq (1 - e^{\beta \varepsilon}) \underaccent{\widetilde}Z_1^{\pi^{\tau}}$$
Since $Z_1^{\pi^{\tau}} \geq \underaccent{\widetilde}Z_1^{\pi^{\tau}}$, this is a stronger stopping condition than:
$$\pi_1^\tau G_1(s_1) \leq (1 - e^{\beta \varepsilon}) Z_1^{\pi^{\tau}}$$
Now, we write:
\begin{align*}
    \big(V^\star_1 - V^{\pi^{\tau}}_1\big)(s_1) = \frac{1}{\beta} \log\big( \frac{Z^\star_1}{Z_1^{\pi^{\tau}}}\Big)(s_1) = \frac{1}{\beta} \log \Big(1 + \frac{Z_1^\star - Z_1^{\pi^{\tau}}}{Z_1^{\pi^{\tau}}} \Big)(s_1) \leq \frac{1}{\beta} \log \Big(1 + \frac{\pi_1^\tau G_1(s)}{Z_1^{\pi^{\tau}}} \Big)(s_1)\leq \varepsilon
\end{align*}
\end{proof}
\section{Lower bound}\label{lower_bound}
We first state a change of measure for bandit models result from \citep{kauffman2016}:
\begin{lemma}\label{measure_change}
Let $N_a(t) = \sum_{s=1}^t \mathbbm{1}_{\{A_s=a\}}$ be the number of draws of arm $a$ between the instants $1$ and $t$ and $N_a = N_a(\tau)$ be the total number of draws of arm $a$ by some algorithm $\mathcal{A} = ((A_t), \tau, \hat{S}_m)$.
 
Let $\nu$ and $\nu'$ be two bandit models with $K$ arms such that for all $a$, the distributions $\nu_a$ and $\nu'_a$ are mutually absolutely continuous. For any almost-surely finite stopping time $\sigma$ with respect to $(\mathcal{F}_t)$,
$$
\sum_{a=1}^K \mathbb{E}_\nu[N_a(\sigma)]\mathrm{KL}(\nu_a, \nu'_a) \ge \sup_{\mathcal{E} \in \mathcal{F}_\sigma} d(\mathbb{P}_\nu(\mathcal{E}), \mathbb{P}_{\nu'}(\mathcal{E}))
$$
where $d(x, y) = x \log(x/y) + (1 - x) \log((1 - x)/(1 - y))$ is the binary relative entropy, with the convention that $d(0, 0) = d(1, 1) = 0$.
\end{lemma}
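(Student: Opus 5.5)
This is the change-of-measure inequality of \citet{kauffman2016}. I would prove it in two steps: an exact identity for the expected log-likelihood ratio at the stopping time $\sigma$, followed by a data-processing bound on the law of a single event.

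\textbf{Step 1: the log-likelihood ratio.} Assume the algorithm's internal randomization is carried by an independent uniform variable $U_0$ and is identical under both models. Fix dominating measures so that $\nu_a,\nu'_a$ have densities $f_a,f'_a$; these exist and are a.e.\ positive by mutual absolute continuity. Define
\[
L_t=\sum_{s=1}^t \log\frac{f_{A_s}(Y_s)}{f'_{A_s}(Y_s)}, \qquad\text{equivalently}\qquad L_t=\sum_{a=1}^K\sum_{s=1}^{N_a(t)}\log\frac{f_a(Y_{a,s})}{f'_a(Y_{a,s})},
\]
where $Y_{a,s}$ is the $s$-th observation drawn from arm $a$. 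The arm choices $A_s$ are $\mathcal F_{s-1}$-measurable, so conditionally on $\mathcal F_{s-1}$ the increment $L_s-L_{s-1}$ has mean $\mathrm{KL}(\nu_{A_s},\nu'_{A_s})$ under $\mathbb P_\nu$. Wald's identity, or optional stopping applied to $L_t-\sum_a N_a(t)\,\mathrm{KL}(\nu_a,\nu'_a)$, then gives
\[
\mathbb E_\nu[L_\sigma]=\sum_a \mathbb E_\nu[N_a(\sigma)]\,\mathrm{KL}(\nu_a,\nu'_a).
\]
If the left-hand side of the lemma is infinite there is nothing to prove. Otherwise, apply the identity at $\sigma\wedge T$ and let $T\to\infty$, using monotone convergence for the counts and the integrability of $L$ guaranteed by finite KL.

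\textbf{Step 2: from likelihood ratio to events.} The likelihood ratio on $\mathcal F_\sigma$ is $d\mathbb P_\nu/d\mathbb P_{\nu'}|_{\mathcal F_\sigma}=e^{L_\sigma}$. The factors coming from the sampling rule cancel because they are the same under both models, and the change-of-measure formula $\mathbb E_{\nu'}[\mathbbm 1_{\mathcal E}]=\mathbb E_\nu[\mathbbm 1_{\mathcal E}e^{-L_\sigma}]$ holds for $\mathcal E\in\mathcal F_\sigma$, again by truncating at $\sigma\wedge T$. Hence $\mathbb E_\nu[L_\sigma]=\mathrm{KL}(\mathbb P_\nu|_{\mathcal F_\sigma},\mathbb P_{\nu'}|_{\mathcal F_\sigma})$.

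For any $\mathcal E\in\mathcal F_\sigma$, the data-processing inequality applied to the measurable map $\omega\mapsto\mathbbm 1_{\mathcal E}(\omega)$ gives
\[
\mathrm{KL}\big(\mathbb P_\nu|_{\mathcal F_\sigma},\mathbb P_{\nu'}|_{\mathcal F_\sigma}\big)\ge d\big(\mathbb P_\nu(\mathcal E),\mathbb P_{\nu'}(\mathcal E)\big).
\]
A direct alternative is to split the expectation on $\mathcal E$ and $\mathcal E^c$ and apply Jensen to the conditional expectations of $-L_\sigma$, which yields $\mathbb P_\nu(\mathcal E)\log\frac{\mathbb P_\nu(\mathcal E)}{\mathbb P_{\nu'}(\mathcal E)}$ plus the complementary term. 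Taking the supremum over $\mathcal E$ finishes the proof. The edge cases $d(0,0)=d(1,1)=0$ are covered by the convention $0\log 0=0$.

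\textbf{Main obstacle.} The delicate point is the rigorous justification of $\mathbb E_\nu[L_\sigma]=\sum_a\mathbb E_\nu[N_a(\sigma)]\mathrm{KL}(\nu_a,\nu'_a)$ for a stopping time that is a.s.\ finite but possibly unbounded. I would handle it through the truncation $\sigma\wedge T$. Data processing for $\mathcal F_{\sigma\wedge T}$-measurable events is immediate. For a general $\mathcal E\in\mathcal F_\sigma$, I would approximate by $\mathcal E\cap\{\sigma\le T\}$, which lies in $\mathcal F_{\sigma\wedge T}$, and conclude using the lower semicontinuity of $d$ together with monotone convergence of the counts.
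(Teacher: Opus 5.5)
Your argument is correct and is essentially the standard proof of \citet{kauffman2016}, which the paper only cites without reproducing. That proof is Wald's identity for the log-likelihood ratio $L_\sigma$ followed by a per-event bound; your Jensen ``alternative'' (conditioning on $\mathcal E$ and $\mathcal E^c$) is exactly their route, and the data-processing formulation is an equivalent repackaging. One small precision: the equality $\mathbb P_{\nu'}(\mathcal E)=\mathbb E_\nu[\mathbbm 1_{\mathcal E}e^{-L_\sigma}]$ requires $\sigma<\infty$ also $\mathbb P_{\nu'}$-a.s. If finiteness is only assumed under $\mathbb P_\nu$, you get ``$\le$'' instead (equivalently, the density on $\mathcal F_\sigma$ is $e^{L_\sigma}\mathbbm 1\{\sigma<\infty\}$), and that is still all the Jensen or data-processing step needs.
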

We make the following assumption:

\noindent \textbf{Assumption 1} Let $d = \lceil \log_{A}((S - 3)(A - 1) + 1) \rceil$. Assume that $H \geq 3d$

This assumption means that the horizon is long enough with respect to the size of MDP so that the agent can reach the reward state. We state the proof under the stronger assumption that $d = \log_{A}((S - 3)(A - 1) + 1)$ for simplicity. But as discussed in \citep{domingues2021episodic}, we can extend the construction to the general case by not having a full $A$-ary tree.

\noindent \textbf{Condition A} The bound is stated in the small $\varepsilon$ regime. Let $c = e^{|\beta| H} - 1$, Assume that we have :
\begin{align}\label{condition A}
e^{|\beta|\varepsilon}< \begin{cases}
\min\left\{\frac{4(c+1)^2}{2c^2+7c+4},\frac{16}{13}\right\} & \beta>0\\[10pt]
\min\left\{\frac{11c+8}{10c+8},\frac{11}{10}\right\} & \beta<0~
\end{cases}
\end{align}
When $|\beta|H \geq \ln(2)$, the condition is reduced to the constants and gets harder to satisfy as $\beta$ goes to $0$. Condition ~\ref{condition A} is used to have a valid construction in the proof below i.e to ensure the transition probabilities are in $[0,1]$
\begingroup
\renewcommand\thetheorem{\ref{th:lower_bound_theorem}}
\begin{theorem}
Fix $S \geq 6, A \geq 2, H \in \mathbb{N}, \beta \in \mathbb{R}^\star$, and $\varepsilon, \delta \in (0, 1)$ such that $\delta \leq 1/16$ and $\varepsilon$ verify the condition ~\eqref{condition A}. Then there exists an MDP $\mathcal{M}_0$ with $S$ states, $A$ actions, horizon $H$, and rewards in $[0, 1]$ such that for every algorithm $\mathcal{A}$ output a policy $\hat{\pi}$ that is $(\varepsilon, \delta)$-PAC for the entropic risk measure after sampling $\tau$ trajectories we have: 
$$\mathbb{E}_{\mathcal{M}_0}[\tau] \geq \frac{1}{1650} \frac{(e^{|\beta| G_{\max}(\mathcal{M}_0)} - 1)^2}{e^{|\beta| G_{\max}(\mathcal{M}_0)}} \frac{e^{2\min \{\beta,0\}\varepsilon}SAH}{(e^{|\beta| \varepsilon} - 1)^2} \log \left( \frac{1}{\delta} \right)$$
\end{theorem}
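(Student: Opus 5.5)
We follow the hard-instance construction of \citet{domingues2021episodic} described in the proof sketch. The state space consists of a waiting state $s_w$, a full $A$-ary tree of depth $d$ whose $L=(S-3)(A-1)/A+\dots$ leaves form $\mathcal L$, and two absorbing states $s_g,s_b$. Reward $1$ is collected only in $s_g$, and only during the last $H-\bar H-d$ stages, so that $G_{\max}(\mathcal M_0)=\bar H' \triangleq H-\bar H-d=\Theta(H)$. Take $\bar H=H/3$, which is possible by Assumption 1.

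In the reference instance $\mathcal M_0$, every leaf/action/stage triple $(h,\ell,a)$ with $h\in\{1+d,\dots,\bar H+d\}$ sends the process to $s_g$ with probability $p_-$. In $\mathcal M_u$, with $u=(h^\star,\ell^\star,a^\star)$, the single triple $u$ instead uses $p_+=p_-+\Delta$. Any policy's exponential value at $s_1$ is then $Z=1+q\,(e^{\beta G}-1)$, where $q$ is its success probability and $G=G_{\max}$. Thus $V^\star-V^\pi=\frac1\beta\log\frac{1+p_+c}{1+p_-c}$ with $c=e^{\beta G}-1$ whenever $\pi$ misses $u$. We choose $\Delta$ so that this gap equals $2\varepsilon$, namely $\Delta=(e^{2\beta\varepsilon}-1)(1+p_-c)/c$; any scaling that keeps an $\varepsilon$-optimal policy forced to hit $u$ also works. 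We set $p_-=1/(c+1)=e^{-\beta G}$ for $\beta>0$, and for $\beta<0$ we set $p_-$ so that $1+p_-c$ is of order $e^{-|\beta|G}$, i.e.\ failure is rare. Condition~\eqref{condition A} is exactly what guarantees $p_+\le 1$ and keeps the Bernoulli KL in the quadratic regime.

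The change of measure proceeds as in the bandit reduction. Each episode under $\mathcal M_0$ versus $\mathcal M_u$ differs only at the triple $u$, so the KL between trajectory laws is $\mathbb E_0[N_u(\tau)]\,\mathrm{kl}(p_-,p_+)$. Lemma~\ref{measure_change} applied to the event $\mathcal E_u=\{\hat\pi \text{ hits } u\}$, together with $\mathbb P_u(\mathcal E_u)\ge1-\delta$, gives for each $u$
\[
\mathbb E_0[N_u(\tau)]\,\mathrm{kl}(p_-,p_+)\ge d\big(\mathbb P_0(\mathcal E_u),1-\delta\big).
\]
Since $\hat\pi$ is deterministic and hits at most one triple, $\sum_u\mathbb P_0(\mathcal E_u)\le 1$. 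Hence at least half of the $\bar H L A$ triples satisfy $\mathbb P_0(\mathcal E_u)\le 2/(\bar HLA)\le 1/2$, and for those $d(\cdot,1-\delta)\ge \frac12\log\frac1\delta$ up to constants (using $\delta\le1/16$). Each episode visits at most one triple, so summing gives
\[
\mathbb E_0[\tau]\ge \sum_u \mathbb E_0[N_u]\gtrsim \frac{\bar H L A\log(1/\delta)}{\mathrm{kl}(p_-,p_+)}.
\]
Using $LA\asymp SA$ and $\bar H\asymp H$, this produces the factor $SAH$.

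The main obstacle is the quantitative bound on $\mathrm{kl}(p_-,p_+)$. We intend to use $\mathrm{kl}(p,p+\Delta)\le \Delta^2/\big((p+\Delta)(1-p-\Delta)\big)$ (chi-square upper bound). For $\beta>0$ we have $p_-\approx e^{-\beta G}$ and $\Delta=(e^{2\beta\varepsilon}-1)(1+p_-c)/c\approx 2(e^{2\beta\varepsilon}-1)/c$, so the bound becomes
\[
\mathrm{kl}\lesssim \frac{(e^{\beta\varepsilon}-1)^2}{c^2}\cdot\frac{c+1}{1}=\frac{(e^{\beta\varepsilon}-1)^2e^{\beta G}}{(e^{\beta G}-1)^2},
\]
which is exactly the inverse of the claimed factor. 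The condition $e^{\beta\varepsilon}\le 16/13$ and the $c$-dependent bound keep $p_+\le$ const$\cdot p_-$ and $1-p_+$ bounded away from $0$. For $\beta<0$ the roles of success and failure swap: $1-p_-\approx e^{-|\beta|G}$, and the ratio $Z^\star/Z^\pi$ involves $e^{\beta\varepsilon}<1$. The threshold $1-e^{\beta\varepsilon}=e^{-|\beta|\varepsilon}(e^{|\beta|\varepsilon}-1)$ then produces the extra $e^{2\min\{\beta,0\}\varepsilon}$. The careful tracking of these constants, which are crude (factor $1/1650$), is the tedious part. Finally, we verify $G_{\max}(\mathcal M_0)=H-\bar H-d$ so that the bound can be stated in terms of $G_{\max}$.
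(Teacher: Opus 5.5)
Your overall route matches the paper's: the Domingues et al.\ tree instance with one perturbed Bernoulli transition per triple, the change-of-measure lemma on mutually exclusive events, and a $\chi^2$ bound on the Bernoulli KL. However, your parameter choice for $\beta>0$ breaks the argument. You set $p_-=1/(c+1)$ and $\Delta=(e^{2\beta\varepsilon}-1)(1+p_-c)/c$, and you assert that Condition~A is exactly what guarantees $p_+\le 1$. It is not. The first branch of Condition~A, $e^{\beta\varepsilon}<4(c+1)^2/(2c^2+7c+4)$, is precisely the condition $p_+<1$ for the paper's choice $p_-=\frac{1}{2(c+1)}$, with $\Delta$ calibrated so that $\nu_\pi(u)\le\tfrac12$ already costs $\varepsilon$. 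With your parameters, $c=1$ and $e^{\beta\varepsilon}=6/5$ (allowed by Condition~A) give $p_-=\tfrac12$ and $\Delta=\tfrac32(e^{2\beta\varepsilon}-1)=0.66$, hence $p_+=1.16$.

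The failure is structural whenever $c\lesssim 1$, which includes the small-$\beta$ regime the theorem must cover. There $1-p_-=c/(c+1)\to 0$, so admissibility requires the extra constraint $\beta\varepsilon\lesssim c^2$, whereas Condition~A allows $\beta\varepsilon$ up to roughly $c/4$. Even when $p_+\le1$, the factor $1/(1-p_+)$ in your bound $\Delta^2/(p_+(1-p_+))$ is of order $1/c$ rather than a constant. This is not mere slack: with $1-p_-\approx c$ and $\Delta\approx 2\beta\varepsilon/c$, $\mathrm{kl}(p_-,p_+)$ is genuinely of order $\beta^2\varepsilon^2/c^3$ rather than $(e^{\beta\varepsilon}-1)^2(c+1)/c^2$. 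Your lower bound therefore loses a factor $c\approx\beta G_{\max}(\mathcal M_0)$ and does not recover the risk-neutral rate. The missing ingredient is the factor $\tfrac12$: $p_-=\frac{1}{2(c+1)}$ is still $\asymp e^{-\beta G_{\max}}$ for large $c$ but never exceeds $\tfrac12$, which keeps $1-p_\pm$ bounded below. For $\beta<0$ you specify $p_-$ only up to order, so admissibility cannot be checked as written. The mirror choice $1-p_-=\frac{1}{2(c'+1)}$ with $c'=e^{|\beta|G_{\max}}-1$ is what the paper uses.

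There are also three smaller points.
\begin{itemize}
\item Your $\chi^2$ bound, with $p_+(1-p_+)$ in the denominator, is the correct one for $\mathrm{kl}(p_-,p_+)$; the paper's display uses $p_-(1-p_-)$, which bounds the reverse divergence. Precisely for that reason you should keep a constant margin such as $p_+\le\tfrac34$, i.e.\ tighten Condition~A by a constant, rather than only $p_+<1$.
\item The event $\{\hat\pi \text{ hits } u\}$ and the bound $\sum_u\mathbb P_0(\mathcal E_u)\le 1$ presume $\hat\pi$ is deterministic, while an algorithm may return a randomized policy. Work with $\nu_{\hat\pi}(u)$ as the paper does. Under your $2\varepsilon$ calibration, $\varepsilon$-optimality only forces $\nu_{\hat\pi}(u)\ge 1/(1+e^{\beta\varepsilon})$. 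This is below $\tfrac12$ for $\beta>0$ but above $\tfrac13$, so at most two triples qualify and you lose only a constant.
\item Your explanation of the factor $e^{2\min\{\beta,0\}\varepsilon}$ has the sign reversed. For $\beta<0$, $(1-e^{\beta\varepsilon})^{-2}=e^{2|\beta|\varepsilon}(e^{|\beta|\varepsilon}-1)^{-2}$ would yield a stronger bound, and the stated $e^{-2|\beta|\varepsilon}$ is pure slack, so this is harmless.
\end{itemize}
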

\endgroup 
\begin{proof}
Consider the following MDP defined in \citep{domingues2021episodic}(with different transitions). We have three special states $s_w$(waiting state), $s_g$(the good absorbing state) and $s_b$(the bad absorbing state). The rest $S-3$ are arranged in the form of a full A-tree of depth $d - 1$ denoted $\mathcal{L}$ whose root is $s_{\text{root}}$, this means that the number of states is 
$$3 +\sum_{i=1}^{d-1} A^i = 3 + \frac{A^d - 1}{A  -1} = S$$
The action set is $\mathcal{A} = \{1,...,A\}$ and let $a_w \in A$ be a fixed action, denote $L = A^{d-1}$. Let $\overline{H} \leq H-d$ be an integer to be chosen later.

The episode starts at $s_w$, for steps $h = 1,...,\overline{H}$ we have the transition kernel:
$$p_h(s_w|a,s_w) =  \mathbbm{1}_{\{a = a_w, h \leq \overline{H}\}} \quad \quad \text{and} \quad \quad p_h(s_{\text{root}}|s_w,a) = 1 - p_h(s_w|s_w,a)$$
This means that for the first $\overline{H}$ steps, you can either chose the action $a_w$ to stay in the waiting state $s_w$, or pick any other action and enter the tree at the next step, and at step $h = \overline{H}$ we exit regardless of the chosen action.

Once you enter the tree, the transition is deterministic. From any internal node $x$, the action $a$ deterministically goes to the $a$-th child of $x$. Thus, after exactly $\overline{H} + d$ the policy reaches a leaf of the tree $\mathcal{L}$. 

Define the index set of ``triples''
$$
\mathcal{U} = [\overline{H}] \times L \times [A], \quad |\mathcal{U}| = \overline{H}  L  A
$$

A triple $u = (h, l, a)$ corresponds to:
\begin{itemize}
    \item exiting at step $h$ (so leaf step is $h + d$),
    \item reaching leaf $l$,
    \item choosing leaf action $a$
\end{itemize}

Fix two numbers $0 < p_- < p_+ < 1$.The baseline instance $\mathcal{M}_0$: for every triple $u = (t, l, a)$,
$$
\mathbb{P}(s_g \mid t, l, a) = p_-
$$
For $\mathcal{M}_0$, all leafs are the same and takes to the good state with probability $p_-$ regardless of the chosen action.
The Special instance $\mathcal{M}_u$ for each $u =(h^\star,l^\star,a^\star) \in \mathcal{U}$: identical to $\mathcal{M}_0$ except that at the single triple $u = $,
$$
\mathbb{P}(s_g \mid u) = p_+
$$
and for all other triples $u' \neq u, \mathbb{P}(s_g \mid u') = p_-$. This means that there exists one unique optimal leaf $l^\star$ where the agent can chose one unique optimal action $a^\star$ when exiting precisely at step $h^\star$ 

Thus, the family is $\{\mathcal{M}_0\} \cup \{\mathcal{M}_u : u \in \mathcal{U}\}$, and any two instances differ in exactly one Bernoulli parameter at one triple.

Let
$$
\tilde{H} = \overline{H} + d + 1, \quad H' = H - \overline{H} - d
$$

Define rewards by
$$
r_h(s, a) = \mathbf{1}\{s = s_g\}  \mathbf{1}\{h \geq \tilde{H}\}
$$

So rewards are in $[0, 1]$, and rewards only accumulate from $\tilde{H}$ onward. We give an illustration in figure ~\eqref{fig:hard-mdp} taken from \citep{domingues2021episodic} and adapted to have our transisitions that are a bit different from the original construction:
\begin{figure}[t]
\centering
\begin{tikzpicture}[
  x=1cm,y=1cm,
  >={Stealth[length=2.2mm]},
  every node/.style={font=\small},
  st/.style={circle,draw,inner sep=0pt,minimum size=10mm},
  leaf/.style={circle,draw,fill=yellow!80,inner sep=0pt,minimum size=10mm},
  blk/.style={circle,draw=black,fill=black,inner sep=0pt,minimum size=12mm},
  dashedarrow/.style={->,dash pattern=on 2.2pt off 2.2pt},
  bluearrow/.style={->,blue,thick}
]

\node[st]  (sw)    at (0,4.20) {$s_w$};
\node[st]  (sroot) at (0,2.90) {$s_{\mathrm{root}}$};

\node[blk] (uL) at (-3.90,1.55) {};
\node[blk] (uR) at ( 3.90,1.55) {};

\node[leaf] (s1) at (-5.60,0.25) {$s_1$};
\node[leaf] (s2) at (-2.10,0.25) {$s_2$};
\node[leaf] (s3) at ( 2.10,0.25) {$s_3$};
\node[leaf] (s4) at ( 5.60,0.25) {$s_4$};

\node[st] (sb) at (-1.10,-1.80) {$s_b$};
\node[st] (sg) at ( 1.10,-1.80) {$s_g$};

\draw[->] (sw) edge[loop above] node[above] {action $=a_w$} (sw);
\draw[->] (sw) -- node[right] {action $\neq a_w$} (sroot);

\draw[->] (sroot) -- (uL);
\draw[->] (sroot) -- (uR);

\draw[->] (uL) -- (s1);
\draw[->] (uL) -- (s2);
\draw[->] (uR) -- (s3);
\draw[->] (uR) -- (s4);

\draw[dashedarrow] (s1) -- node[pos=0.30,below left] {$1 -p_-$} (sb);
\draw[dashedarrow] (s1) -- node[pos=0.30,above left] {$p_-$} (sg);

\draw[dashedarrow] (s2) -- (sb);
\draw[dashedarrow] (s2) -- (sg);

\draw[dashedarrow] (s3) -- (sb);
\draw[dashedarrow] (s3) -- (sg);
\draw[dashedarrow] (s4) -- (sb);
\draw[dashedarrow] (s4) -- (sg);

\draw[bluearrow]
  (s2) to[bend right=25] node[pos=0.25,blue,below left] {$1-p_+$} (sb);

\draw[bluearrow]
  (s2) to[bend left=18]  node[pos=0.30,blue,above] {$p_+$} (sg);

\draw[->] (sb) edge[loop below] node[below] {$1$} (sb);
\draw[->] (sg) edge[loop below] node[below] {$1$} (sg);

\node[anchor=east] at ($(sb)+(-1.65,0)$) {$r_h(s_b,a)=0$};
\node[anchor=west] at ($(sg)+( 1.65,0)$) {$r_h(s_g,a)=\mathbbm{1}\{h\ge \bar{H}+d+1\}$};

\end{tikzpicture}
\caption{An example of the MDP construction. Here the state $s_2$ is optimal and have a better probability $p_+$ of landing in the good state $s_g$, $p_+$ and $p_-$ are defined bellow in the proof. Figure reproduced from \citep{domingues2021episodic}}
\label{fig:hard-mdp}
\end{figure}

By construction, by time $\tilde{H}$ the chain has already entered $s_g$ or $s_b$ and is absorbing. Therefore the return is
\begin{equation*}
G = \sum_{h=\tilde{H}}^{H} \mathbf{1}\{S_h = s_g\} = H'  \mathbf{1}\{S_{\tilde{H}} = s_g\} \in \{0, H'\}
\end{equation*}

For any policy $\pi$ and instance $\mathcal{M}$, let
$$
p^\mathcal{M}(\pi) = \mathbb{P}_{\mathcal{M}, \pi}(S_{\tilde{H}} = s_g)
$$
The probability of being at the good state by the time $\tilde{H}$. By construction, by the time $\tilde H$ we are either in $s_g$ or $s_h$, Then:
$$
\mathbb{E}[e^{\beta G}] = (1 - p^M(\pi))  1 + p^\mathcal{M}(\pi)  e^{\beta H'} = 1 + c p^\mathcal{M}(\pi), \quad c = e^{\beta H'} - 1
$$
Hence
\begin{equation}
V^\mathcal{M}(\pi) = \frac{1}{\beta} \log \left( 1 + c p^\mathcal{M}(\pi) \right)
\end{equation}
For a policy $\pi$, define the probability of sucess: 
$$\nu_{\pi}(u) = \mathbb{P}_{\pi}(\text{the episode's exit/leaf/action triple equals } u), \quad u \in \mathcal{U}$$
Because each episode produces exactly one triple, we have
$$\sum_{u \in \mathcal{U}} \nu_{\pi}(u) = 1$$
In instance $\mathcal{M}_u$, since only when the realized triple equals the special one do we get probability $p_+$; otherwise $p_-$. The success probability is
$$p^{\mathcal{M}_u}(\pi) = p_- + (p_+ - p_-)\nu_{\pi}(u)$$
Let $\Delta = p_+ - p_- > 0$. The optimal policy in $\mathcal{M}_u$ can choose $t$, route to $l$, and pick action $a$ deterministically so that $\nu_{\pi}(u) = 1$. Hence the optimal success probability is $p_+$, and
$$V^{\mathcal{M}_u, \star} = \frac{1}{\beta} \log(1 + cp_+)$$
Let us now chose $p_+$ and $p_-$ so any $\varepsilon$-optimal policy must satisfy $\nu_\pi(u) > \frac{1}{2}$ in $\mathcal{M}_u$. 
The entropic criterion overweights rare high-return trajectories, so we make success(reaching the good state) rare by choosing $p_- \sim  e^{-\beta H}$, more precisely:
$$p_- = \frac{1}{2(c + 1)} \sim e^{-\beta H'}$$
We have:
$$V_{\beta}^{\mathcal{M}_u}(\pi) = \frac{1}{\beta} \log (1 + c(p_- + \Delta \nu_\pi(u)))$$
We chose $\Delta$ so that whenever we have a probability of success $\nu_\pi(u)$ smaller than $\frac{1}{2}$ the the policy $\pi$ is $\varepsilon$-suboptimal. When $\nu_{\pi}(u) \leq 1/2$, then $p^{\mathcal{M}_u}(\pi) \leq p_- + \Delta/2$. So it suffices to enforce
$$\frac{1}{\beta} \log \frac{1 + c(p_- + \Delta)}{1 + c(p_- + \Delta/2)} = \varepsilon \iff \frac{1 + c(p_- + \Delta)}{1 + c(p_- + \Delta/2)} = e^{\beta \varepsilon}$$
Hence, we must have:
$$\Delta = \frac{(3c + 2)(e^{\beta \varepsilon} - 1)}{c(c + 1)(2 - e^{\beta \varepsilon})} $$
Since we have that $e^{\beta \varepsilon} < \frac{4(c+1)^2}{2c^2 + 7c + 4} <2 $ by condition ~\ref{condition A}, the construction is admissible in the sense that $0 <p_- <p_+ <1$.

By this construction, we have for every $u \in \mathcal{U}$ and every policy $\pi$,
\begin{equation}\label{hardness}
V^{\mathcal{M}_u}(\pi) \geq V^{\mathcal{M}_u, \star} - \varepsilon \implies \nu_{\pi}(u) > \frac{1}{2}\end{equation}
Indeed, if $\nu_{\pi}(u) \leq 1/2$, then $p^{\mathcal{M}_u}(\pi) \leq p_- + \Delta/2$, while the optimal policy achieves $p_+ = p_- + \Delta$. Hence
$$V^{\mathcal{M}_u,\star} - V^{\mathcal{M}_u}(\pi) \geq \frac{1}{\beta} \log \frac{1 + c(p_- + \Delta)}{1 + c(p_- + \Delta/2)} = \varepsilon$$
And the contraposition yields the claim~\eqref{hardness}.
Let the algorithm output $\hat{\pi}$ at stopping time $\tau$. Define the event
$$\mathcal{E}_u = \{\nu_{\hat{\pi}}(u) > 1/2\}$$
By the previous remark and $(\varepsilon, \delta)$-PAC correctness,
$$\mathbb{P}_{u}(\mathcal{E}_u) \geq 1 - \delta \quad \forall u \in \mathcal{U}$$
Also, since $\sum_u \nu_{\hat{\pi}}(u) = 1$, at most one $u$ can satisfy $\nu_{\hat{\pi}}(u) > 1/2$. Hence the events $\{\mathcal{E}_u\}_{u \in \mathcal{U}}$ are mutually exclusive and in particular under $\mathcal{M}_0$,
$$\sum_{u \in \mathcal{U}} \mathbb{P}_{0}(\mathcal{E}_u) \leq 1$$
Let $N_u(\tau)$ be the number of episodes $k \leq \tau$ in which the algorithm's realized triple equals $u$. Then
$$\sum_{u \in \mathcal{U}} N_u(\tau) = \tau \quad \text{a.s.}$$
Fix $u \in \mathcal{U}$. We apply lemma~\ref{measure_change} for the event $\mathcal{E}_u$, the two instances $\mathcal{M}_u$ and $\mathcal{M}_0$ differ only in the Bernoulli transition at triplet $u$ hence:
\begin{align*}
\mathbb{E}_0[N_u(\tau)] d(p_-,p_+) \geq d(\mathbb{P}_0(\mathcal{E}_u),\mathbb{P}_u(\mathcal{E}_u)) &\geq (1 - \mathbb{P}_0(\mathcal{E}_u) )\log\left(\frac{1}{1 - \mathbb{P}_u(\mathcal{E}_u))}\right) - \log(2) \\&\geq (1 - \mathbb{P}_0(\mathcal{E}_u) )\log\left(\frac{1}{\delta}\right) - \log(2)
\end{align*}
Where we used the $(\varepsilon,\delta)$-PAC in the last inequality. We sum over $u \in \mathcal{U}$:
$$\mathbb{E}_0[\tau] = \sum_{u \in \mathcal{U}} \mathbb{E}_0[N_u(\tau)] \geq \frac{1}{d(p_-,p_+)}\left(\sum_{u \in \mathcal{U}}  (1 - \mathbb{P}_0(\mathcal{E}_u) )\log\left(\frac{1}{\delta}\right) - \log(2) \right)\geq \frac{1}{2d(p_-,p_+)}|\mathcal{U|} \log\left(\frac{1}{\delta}\right)$$
Where we used in the final inequality that $\sum_{u \in \mathcal{U}} \mathbb{P}_{0}(\mathcal{E}_u) \leq 1$ and the condition $\delta \leq \frac{1}{16}$ and since:
$$d(p_-, p_+) \leq \frac{(p_+ - p_-)^2}{p_-(1 - p_-)} \leq (c+1)  \frac{(e^{\beta \varepsilon} - 1)^2 \left(\frac{3c+2}{2(c+1)}\right)^2}{c^2(1 - \frac{1}{2}e^{\beta \varepsilon})^2} = \frac{(c+1)}{c^2}  \frac{(e^{\beta \varepsilon} - 1)^2}{(1 - \frac{1}{2}e^{\beta \varepsilon})^2}  \left(\frac{3c+2}{2(c+1)}\right)^2$$
Using the condition $e^{\beta \varepsilon} \leq \frac{16}{13}$ we get:
$$d(p_-,p_+) \leq \frac{1521}{100} \frac{(c+1)}{c^2} (e^{\beta \varepsilon} - 1)^2$$
Hence:
$$\mathbb{E}_0[\tau] \geq \frac{50}{1521} \frac{c^2}{c+1} \frac{\big( H - \overline{H}-d\big) L A}{(e^{\beta \varepsilon} - 1)^2} \log\left(\frac{1}{\delta}\right)$$
Since the number of leaves is given by $L = (1 - 1/A)(S - 3) + 1/A \geq S/4$ (for $A \geq 2, S \geq 6$), and taking $\overline{H} = H/3$ with $d \leq H/3$, we obtain the sample complexity lower bound:
$$\mathbb{E}_0[\tau] \geq \frac{1}{1650} \frac{\left(e^{\beta (H - \overline{H} -1)}- 1\right)^2}{e^{\beta (H - \overline{H} -1)}} \frac{SAH}{(e^{\beta \varepsilon} - 1)^2} \log\left(\frac{1}{\delta}\right)$$
Since in the constructed MDP, we only accumulate rewards after $\tilde H$ and the optimal policy accumulate rewards across all subsequent steps, we have: $G_{\max}(\mathcal{M}_0) = H - \tilde{H} + 1 = H - \overline{H} -d $ Hence:
$$\mathbb{E}_0[\tau] \geq \frac{1}{1650} \frac{\left(e^{\beta G_{\max}(\mathcal{M}_0)}- 1\right)^2}{e^{\beta G_{\max}(\mathcal{M}_0)}} \frac{SAH}{(e^{\beta \varepsilon} - 1)^2} \log\left(\frac{1}{\delta}\right)$$
For $\beta>0$ (risk-seeking entropic criterion), we choose $p_-$ so that transitioning to the good absorbing state is a rare event. This makes upside tail events hard
to detect and estimate, and since the entropic objective overweights favorable rare outcomes, this
again leads to larger sample complexity.\begin{align*}
\mathbb{E}[e^{\beta G}] &= (1 - p^\mathcal{M}(\pi))  1 + p^\mathcal{M}(\pi)  e^{\beta H'} = e^{\beta H'} \left( (1 - p^\mathcal{M}(\pi)) e^{|\beta | H'} + p^\mathcal{M}(\pi) \right) \\&=  e^{\beta H'} \left( (1 - p^\mathcal{M}(\pi)) (e^{|\beta | H'} -1)  + 1 \right) = e^{\beta H'} \left( (1 - p^\mathcal{M}(\pi))c  + 1 \right)
\end{align*}
And we have the entropic risk measure:
$$V^{\mathcal{M}}(\pi) = -\frac{1}{|\beta|} \log\left(e^{\beta H'} \left( (1 - p^\mathcal{M}(\pi))c  + 1 \right)\right) = H' - \frac{1}{|\beta|} \log\left((1 - p^\mathcal{M}(\pi))c  + 1 \right)$$
Where $c = e^{|\beta | H'} -1$.

For $\beta<0$, the entropic criterion is especially sensitive to adverse tail events, so we instead make failure rare by choosing $p_- \sim 1 - e^{-|\beta|H}$, more precisely:
$$p_- = 1 - \frac{1}{2(c+1)} \sim 1 - e^{-\beta H'}$$
We derive $p_+$ similarly to $\beta > 0$ and we find if $p_+ = p_- + \Delta$ then:
$$\Delta = \frac{(3c + 2)(e^{|\beta| \varepsilon} - 1)}{c(c + 1)(e^{|\beta| \varepsilon} -\frac{1}{2})}$$
Again, since we have $e^{|\beta|\varepsilon} < \frac{11c + 8}{10c + 8}$, this construction is admissible in the sense that $0 <p_- <p_+ <1$.

And we prove in the same way that:
 for every $u \in \mathcal{U}$ and every policy $\pi$,
$$V^{\mathcal{M}_u}(\pi) \geq V^{M_u, \star} - \varepsilon \implies \nu_{\pi}(u) > \frac{1}{2}$$
The rest of the argument (change of measure and summing over $\mathcal{U}$ goes the same way )and we finally upper bound the kl divergence:
$$d(p_-, p_+) \leq (c+1)  \frac{(e^{|\beta| \varepsilon} - 1)^2 \left(\frac{3c+2}{2(c+1)}\right)^2}{c^2(e^{|\beta| \varepsilon} -\frac{1}{2})^2} \leq \frac{9(c+1)}{c^2}  \frac{(e^{|\beta| \varepsilon} - 1)^2}{e^{2\beta \varepsilon}}  \left(\frac{3c+2}{2(c+1)}\right)^2$$
Hence by having a looser constant to match the $\beta>0$ lower bound:
$$\mathbb{E}_0[\tau] \geq \frac{1}{1650} \frac{\left(e^{|\beta| G_{\max}(\mathcal{M}_0)}- 1\right)^2}{e^{|\beta| G_{\max}(\mathcal{M}_0)}} \frac{e^{2\beta \varepsilon}SAH}{(e^{|\beta| \varepsilon} - 1)^2} \log\left(\frac{1}{\delta}\right)$$
\end{proof}
\section{Experiments}\label{app:experiments}
For our experiments, We consider a toy MDP  that consists of 8-states and two actions, \textbf{safe} and \textbf{risky}. Starting from $s_0$, \textbf{safe} deterministically walks along the bridge $s_0 \to s_1 \to s_2 \to s_3 \to s_4 \to s_5 \to s_g$, where $s_g$ is an absorbing goal state. The \textbf{risky} action attempts shortcuts: from early states it jumps directly to $s_4$ but can fall into an absorbing bad state $s_b$ with state-dependent probability (start/mid/final risk); from $s_4$ it makes a final dash to $s_g$ that can also fail and transition to $s_b$. Both $s_g$ and $s_b$ are absorbing and we receive reward $1$ in $s_g$ for the rest of the horizon. 
\begin{figure}[h]
\centering
\begin{tikzpicture}[
  >={Stealth[length=2.2mm]},
  font=\small,
  st/.style={circle,draw,minimum size=10mm,inner sep=0pt},
  safe/.style={st,fill=yellow!70},
  good/.style={st,fill=green!18},
  bad/.style={st,fill=red!15},
  safearrow/.style={->,draw=black,thick},
  riskyarrow/.style={->,draw=red,very thick,dashed},
  lab/.style={midway,sloped,above,fill=white,inner sep=1pt},
  labd/.style={midway,sloped,below,fill=white,inner sep=1pt}
]

\node[safe] (s0) at (-6,1.6) {$s_0$};
\node[safe] (s1) at (-4,1.6) {$s_1$};
\node[safe] (s2) at (-2,1.6) {$s_2$};
\node[safe] (s3) at ( 0,1.6) {$s_3$};
\node[safe] (s4) at ( 2,1.6) {$s_4$};
\node[safe] (s5) at ( 4,1.6) {$s_5$};
\node[good] (sg) at ( 6,1.6) {$s_g$};

\node[bad]  (sb) at ( 0,-1.6) {$s_b$};

\draw[safearrow] (s0) -- node[lab] {1} (s1);
\draw[safearrow] (s1) -- node[lab] {1} (s2);
\draw[safearrow] (s2) -- node[lab] {1} (s3);

\draw[safearrow] (s3) -- node[lab] {1} (s4);
\draw[safearrow] (s4) -- node[lab] {1} (s5);
\draw[safearrow] (s5) -- node[lab] {1} (sg);

\draw[riskyarrow]
  (s0) to[bend left=50]
  node[lab] {$1-p_{\text{risk,start}}$} (s4);

\draw[riskyarrow]
  (s0) to[bend right=18]
  node[labd] {$p_{\text{risk,start}}$} (sb);

\draw[riskyarrow]
  (s1) to[bend left=40]
  node[lab] {$1-p_{\text{risk,mid}}$} (s4);

\draw[riskyarrow]
  (s1) to[bend right=10]
  node[labd] {$p_{\text{risk,mid}}$} (sb);

\draw[riskyarrow]
  (s2) to[bend left=30]
  node[lab] {$1-p_{\text{risk,final}}$} (s4);

\draw[riskyarrow]
  (s2) to[bend right=6]
  node[labd] {$p_{\text{risk,final}}$} (sb);

\draw[riskyarrow]
  (s4) to[bend left=40]
  node[lab] {$1-p_{\text{fall,dash}}$} (sg);

\draw[riskyarrow]
  (s4) to[bend right=22]
  node[labd] {$p_{\text{fall,dash}}$} (sb);

\draw[->] (sg) edge[loop above] node[above] {$1$} (sg);
\draw[->] (sb) edge[loop below] node[below] {$1$} (sb);

\end{tikzpicture}
\label{toy MDP}
\caption{toy MDP. Safe action follows the straight chain to $s_g$; risky action can shortcut but may fall to $s_b$. \textbf{safe} action is presented by a black edge and \textbf{risky} action by a dashed red edge.}
\label{fig:risky-bridge-mdp}
\end{figure}
For analogy, think of it as a child standing at the top of a long staircase. Taking safe means walking down (or along) the stairs one step at a time, steadily progressing. Taking risky means you try to jump over several steps at once to land much farther ahead saving time, but with some chance of missing the landing and falling into a pit (failure). After the staircase you reach a narrow bridge: safe is crossing it normally, while risky is a final dash across the bridge, which is faster but again risks falling into the pit. If you make it, you end in the goal; if you fall, you’re stuck in failure. The MDP is visualized in the figure ~\ref{fig:risky-bridge-mdp}. In our experiments we take $\Big\{p_{\text{risk,start}},p_{\text{risk,mid}},p_{\text{risk,final}},p_{\text{fall,dash}}\Big\} = \Big\{ 0.95,0.75,0.25,0.85\Big\} $, we also take $\varepsilon = 0.2$ and $\delta = 0.1$
\subsection{Sample complexity}
We study the sample complexity of our method as a function of the risk sensitivity $\beta$ and the horizon $H$. In the first sweep, we fix $H = 7$ and run the algorithm for $\beta \in \{2.5, 3.0, 3.5, 4.0, 4.5\}$ until the stopping condition is met. In the second sweep, we fix $\beta = 0.5$ and vary the horizon $H \in \{5, 7, 9, 11, 13, 15, 17\}$. For each configuration, we run 15 independent trials (different random seeds) and plot the mean stopping time $\tau$; the $y$-axis is shown on a log scale.
\begin{figure}[h]
    \centering
    \begin{minipage}{0.48\textwidth}
        \centering
        \includegraphics[width=\textwidth]{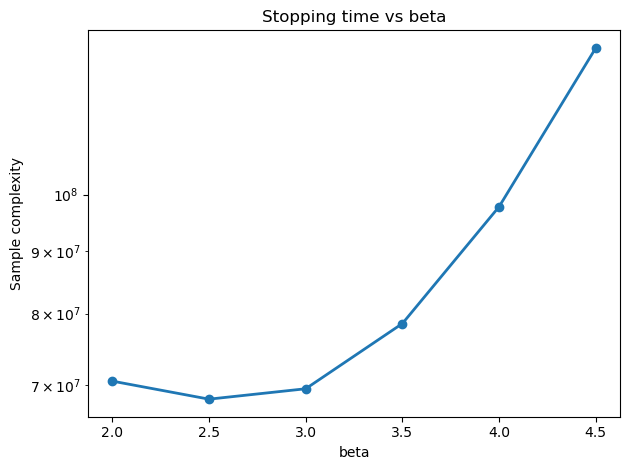}
        \caption{Sample complexity in function of $\beta$ (log-scale y-axis)}
        \label{fig:first}
    \end{minipage}\hfill
    \begin{minipage}{0.48\textwidth}
        \centering
        \includegraphics[width=\textwidth]{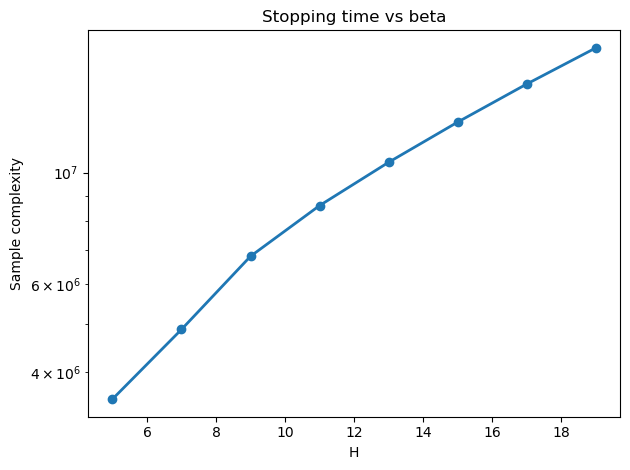}
        \caption{Sample complexity in function of $\beta$ (log-scale y-axis)}
        \label{fig:second}
    \end{minipage}
\end{figure}
For Figure~\ref{fig:second}, the stopping time $\tau$ increases sharply as $\beta$ grows. Moreover, the approximately linear trend on the log-scale $y$-axis suggests that $\log \tau$ increases roughly linearly with $\beta$, i.e., the sample complexity is consistent with an exponential scaling $\tau \approx \exp(c\,\beta)$ over the tested range. This is in line with the intuition behind the entropic objective, where larger $\beta$ amplifies tail events and makes BPI more demanding.

Interestingly, the effective slope $c$ is not constant across $\beta$. We hypothesize that this variation is driven by the $\chi^2$-divergence term $\chi^2(\mathbb{P}_\beta^\pi,\mathbb{P}^\pi)$ discussed in Paragraph~\ref{insight}, which depends on the policy induced at each $\beta$. In our sweep, the learned policy changes substantially as $\beta$ increases, reflecting policy changes as the agent shows increasingly risk-seeking behavior, and then stabilizes around $\beta \approx 3.5$. Beyond this point, further increases in $\beta$ do not change the policy, which may explain the change in scaling behavior. For Figure~4, we observe a similarly sharp increase in the stopping time $\tau$ as the horizon $H$ grows and is roughly linear on the log scale
\subsection{Regret bounds}
We now compare our approach to other regret algorithms. At each episode, we evaluate the current policy returned by each algorithm from the initial state $s_0$. We then compute the cumulative regret for each algorithm defined by:
$$R(K) = \sum_{k=1}^K V^\star(s_0) - V^{\pi^{k+1}}(s_0)$$
Where $\pi^{k+1}$ is the algorithm returned by the algorithm at the end of episode $k$. We cap the stopping time at $10^7$ episodes and draw the regret for $\beta \in [1.0,2.0,3.0,4.0]$. We compare our method to multiple regret algorithms:
\begin{itemize}
    \item \textsc{RSVI} and \textsc{RSQ} from \citep{Fei2020RS}
    \item \textsc{RSVI2} and \textsc{RSQ2} from \citep{FeiYCW21}
    \item \textsc{UCB Advantage} from \citep{hu23b}
    \item  \textsc{RODI} (OTP and PTO variants) and \textsc{ROVI} from \citep{liang2024bridging}
\end{itemize}
We report the findings of our experiment:
\begin{figure}[h]
    \centering
    \begin{minipage}{0.48\textwidth}
        \centering
        \includegraphics[width=\textwidth]{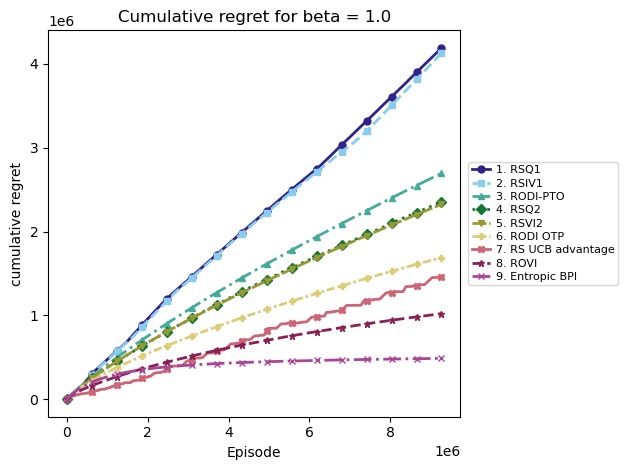}
        \caption{Regret for different algorithms for $\beta = 1.0$}
        \label{fig:beta1}
    \end{minipage}\hfill
    \begin{minipage}{0.48\textwidth}
        \centering
        \includegraphics[width=\textwidth]{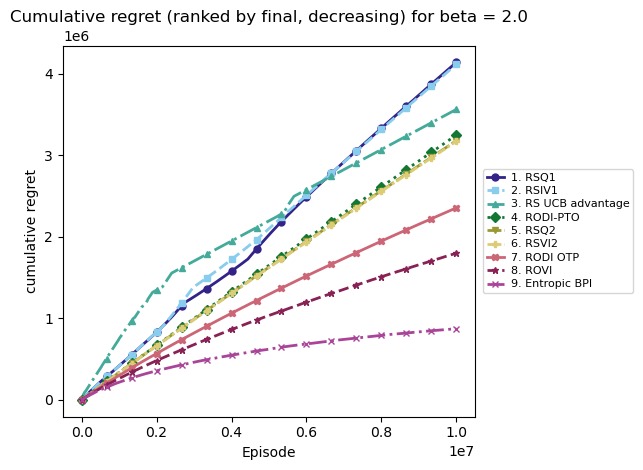}
        \caption{Regret for different algorithms for $\beta = 2.0$}
        \label{fig:beta2}
    \end{minipage}
\end{figure}

\begin{figure}[h]
    \centering
    \begin{minipage}{0.48\textwidth}
        \centering
        \includegraphics[width=\textwidth]{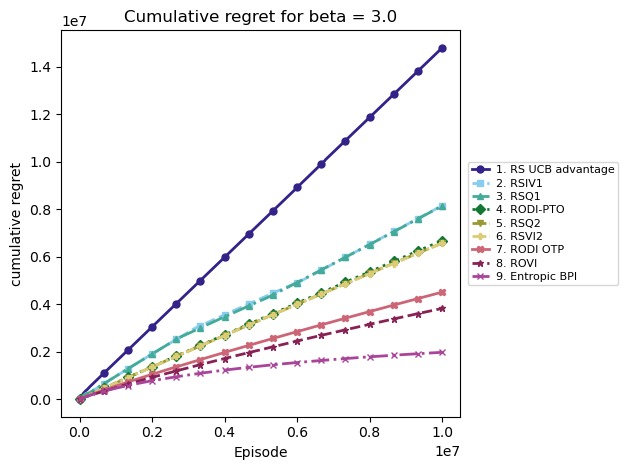}
        \caption{Regret for different algorithms for $\beta = 3.0$}
        \label{fig:beta3}
    \end{minipage}\hfill
    \begin{minipage}{0.48\textwidth}
        \centering
        \includegraphics[width=\textwidth]{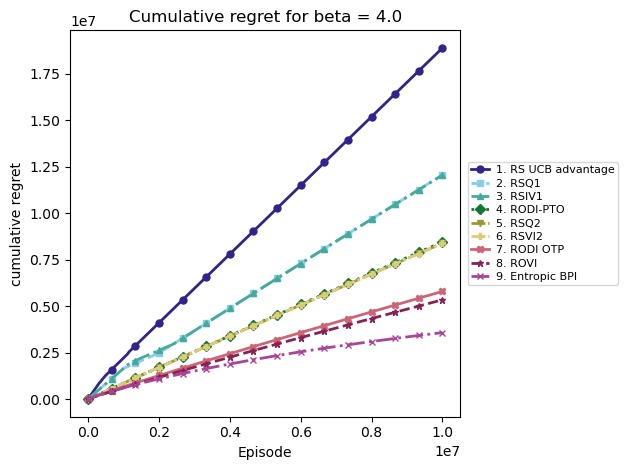}
        \caption{Regret for different algorithms for $\beta = 4.0$}
        \label{fig:beta4}
    \end{minipage}
\end{figure}
Figures ~\ref{fig:beta1},~\ref{fig:beta2},~\ref{fig:beta3},~\ref{fig:beta4} shows that our algorithm consistently achieves the lowest cumulative-regret trajectory over all values of $\beta$. Since cumulative regret is computed by evaluating the current policy from the initial state each episode and summing the resulting value gap to optimality, the consistently flatter Entropic BPI curve indicates stronger learning efficiency and performance. 
\section{Concentration inequalities}
\subsection{Sanov's theorem}\label{Sanov}
First we introduce the K divergence concentration inequality derived via Sanov's theorem
\begin{lemma}[High-probability KL bound via Sanov]
Let $\Sigma_m = \{q\in\mathbb{R}^m_{\ge 0}:\sum_{i=1}^m q_i=1\}$ and let $p\in\Sigma_m$.
Let $X_1,\dots,X_n$ be i.i.d.\ with law $p$, and let the empirical distribution $\widehat{p}_n$ be
\[
\widehat{p}_n(i)=\frac{1}{n}\sum_{k=1}^n \mathbf{1}\{X_k=i\},\qquad i\in[m].
\]
Then for any $\delta\in(0,1)$, with probability at least $1-\delta$,
\[
D(\widehat{p}_n\|p)\leq\frac{(m-1)\log(n+1)+\log(1/\delta)}{n}.
\]
\end{lemma}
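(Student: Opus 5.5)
The plan is the classical method of types. Let $\mathcal{T}_n$ be the set of empirical distributions (types) achievable from $n$ samples on $[m]$, i.e. vectors in $\Sigma_m$ with coordinates in $\{0,1/n,\dots,1\}$. A stars-and-bars count gives $|\mathcal{T}_n| = \binom{n+m-1}{m-1} \le (n+1)^{m-1}$, since each of the first $m-1$ coordinates takes at most $n+1$ values and the last one is determined.

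The second ingredient is the type-class probability bound. For a fixed type $q\in\mathcal{T}_n$ with type class $T(q)$, every sequence $x\in T(q)$ has probability $p^n(x)=\prod_i p_i^{nq_i} = \exp\bigl(-n(H(q)+D(q\|p))\bigr)$. Combined with $|T(q)|\le \exp(nH(q))$, this gives
\[
\mathbb{P}(\widehat{p}_n = q) \le \exp\bigl(-n D(q\|p)\bigr).
\]
The bound $|T(q)|\le e^{nH(q)}$ comes from applying the same identity under $q^n$: $1\ge q^n(T(q)) = |T(q)|\,e^{-nH(q)}$. Coordinates with $q_i=0$ or $p_i=0$ are handled by the usual conventions; if $p_i=0$ while $q_i>0$, the type has probability zero.

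The final step is a union bound. Set $r = \frac{(m-1)\log(n+1)+\log(1/\delta)}{n}$. Then
\[
\mathbb{P}\bigl(D(\widehat{p}_n\|p) > r\bigr) = \sum_{q\in\mathcal{T}_n:\, D(q\|p)>r} \mathbb{P}(\widehat{p}_n=q) \le |\mathcal{T}_n|\, e^{-nr} \le (n+1)^{m-1}\,\frac{\delta}{(n+1)^{m-1}} = \delta.
\]
Taking complements yields the stated high-probability bound.

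The only delicate point is the per-type estimate $\mathbb{P}(\widehat p_n=q)\le e^{-nD(q\|p)}$ without polynomial prefactors; the entropy identity above handles it exactly, so no Stirling estimates are needed. I do not expect further obstacles.
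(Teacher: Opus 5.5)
Your proof is correct. It rests on the same idea as the paper's proof, which invokes Sanov's theorem (Cover--Thomas, Thm.~11.4.1) on the set $\{q : D(q\|p)\ge\varepsilon\}$ and solves the resulting tail bound for $\varepsilon$. You instead unpack Sanov into its method-of-types ingredients: the type count, the per-type bound $\mathbb{P}(\widehat{p}_n=q)\le e^{-nD(q\|p)}$ via the entropy identity, and a union bound.

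Here the unpacking matters for the constant. The cited Cover--Thomas statement has prefactor $(n+1)^{m}$. The paper quotes it with $(n+1)^{m-1}$, but its next line computes with $(n+1)^{m}e^{-n\varepsilon}$ and obtains $\varepsilon=\frac{m\log(n+1)+\log(1/\delta)}{n}$. The factor $(m-1)$ reappears only in the final display, without justification. The paper also writes the deviation set with the KL arguments reversed, $D(p\|q)$ instead of $D(q\|p)$.

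Your injective-projection count $|\mathcal{T}_n|=\binom{n+m-1}{m-1}\le(n+1)^{m-1}$ is exactly what justifies the $(m-1)\log(n+1)$ term in the statement. So your argument is both self-contained and sharper. The black-box route buys only brevity and generality over arbitrary sets $E$, which this lemma does not need.

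Finally, the union bound over $(h,s,a)$ and time steps at the end of the paper's proof belongs to the construction of $\mathcal{E}_{KL}$, not to this fixed-$n$ lemma. You were right to leave it out.
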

\begin{proof}
Let $\Sigma_m = \Big\{ q \in \mathbb{R}^m_{\ge 0} : \sum_{i=1}^m q_i = 1 \Big\}$ be the $(m-1)$-simplex and let $p \in \Sigma_m$. 
Let $X_1,\ldots,X_n$ be i.i.d.\ with law $p$, and let $\widehat{p}_n$ denote the empirical distribution by
$$
\widehat{p}_n(i) := \frac{1}{n}\sum_{k=1}^n \mathbf{1}\{X_k=i\}, \qquad i\in[m]
$$
Sanov's theorem (Theorem 11.4.1 \cite{CoverThomas2006}) states that for any set $E \subseteq \Sigma_m$ 
$$
\Pr \big(\widehat{p}_n \in E\big)\leq(n+1)^{(m-1)}\exp\Big(-n\inf_{q\in E} D(q\|p)\Big)
$$
Fix $\varepsilon > 0 $ and take the set $E = \{ q \in \Sigma_m| D(p||q) \geq \varepsilon\}$ then $\inf_{q\in E} D(q\|p) = \varepsilon$ and by Sanov's theorem: 
$$
\Pr \big(D(\widehat{p}_n||p) \geq \varepsilon\big) \leq (n+1)^m\exp\Big(-n\varepsilon\Big)
$$
We turn it to high probability bound by setting the r.h.s to $\delta$ which yield:
$$\varepsilon = \frac{m \log(n+1) + \log(1/\delta)}{n}$$
Doing a union bound on all state action pairs and all time steps we get that with probability $1 - \delta$ we have:
$$
D(\widehat{p}_n||p) \leq \frac{(m-1) \log(n+1) + \log(SAH/\delta)}{n} 
$$
\end{proof}
\subsection{Concentration inequality for Bernoulli random variables}
    We state a deviation inequality for Bernoulli random variables from \cite{DannLB17}(Lemma F.4):
\begin{lemma}\label{Bernoulli}
Let $\mathcal{F}_i$ for $i = 1 \dots$ be a filtration and $X_1, \dots X_n$ be a sequence of Bernoulli random variables with $\mathbb{P}(X_i = 1|\mathcal{F}_{i-1}) = P_i$ with $P_i$ being $\mathcal{F}_{i-1}$-measurable and $X_i$ being $\mathcal{F}_i$ measurable. It holds that
$$
\mathbb{P} \left( \exists n : \sum_{t=1}^{n} X_t < \sum_{t=1}^{n} P_t/2 - \log(\frac{1}{\delta}) \right) \leq \delta
$$
\end{lemma}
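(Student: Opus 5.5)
This is the anytime Bernoulli deviation bound of \citet{DannLB17} (Lemma F.4). I would prove it with an exponential supermartingale combined with Ville's maximal inequality.

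\textbf{Step 1: build the process.} Fix $\lambda>0$ and set
\[
M_n=\exp\Big(\sum_{t=1}^n\big(-\lambda X_t - \log(1-P_t+P_te^{-\lambda})\big)\Big), \qquad M_0=1 .
\]
Since $\mathbb{E}[e^{-\lambda X_t}\mid\mathcal F_{t-1}]=1-P_t+P_te^{-\lambda}$ and $P_t$ is $\mathcal F_{t-1}$-measurable, $(M_n)$ is a nonnegative martingale. Next, use $\log(1-x)\le -x$ to get
\[
-\log(1-P_t(1-e^{-\lambda}))\ge P_t(1-e^{-\lambda}).
\]
Therefore
\[
M_n\ge \exp\Big(-\lambda\sum_t X_t+(1-e^{-\lambda})\sum_t P_t\Big).
\]

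\textbf{Step 2: choose $\lambda$.} I want the coefficient ratio $(1-e^{-\lambda})/\lambda$ to be at least $1/2$, with $\lambda\le 1$ so that the $\log(1/\delta)$ term keeps coefficient $1$. Take $\lambda=1$: then $1-e^{-1}\approx0.632\ge 1/2$. The bad event $\sum_t X_t<\sum_t P_t/2-\log(1/\delta)$ gives
\[
-\sum_t X_t>\log(1/\delta)-\sum_t P_t/2 .
\]
Hence
\[
\log M_n> \log(1/\delta)+(1-e^{-1}-1/2)\sum_t P_t\ge\log(1/\delta),
\]
that is, $M_n>1/\delta$.

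\textbf{Step 3: uniformity in $n$.} Ville's inequality for nonnegative supermartingales gives $\mathbb P(\exists n: M_n\ge 1/\delta)\le \delta\,\mathbb E[M_0]=\delta$. One can equivalently get this by applying optional stopping at the first time $M_n\ge 1/\delta$ and using Fatou's lemma, which handles the infinite horizon. This yields the claim.

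The only delicate point is the constant bookkeeping in Step 2. The choice $\lambda=1$ works without slack issues, so no real obstacle remains.
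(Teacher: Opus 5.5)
Your proof is correct. The paper gives no proof of its own and simply imports the lemma from \citet{DannLB17} (Lemma F.4); your argument is essentially the standard one behind that result. It takes the supermartingale $\exp(\sum_t (P_t/2 - X_t))$, obtained at $\lambda=1$ (which your exact Bernoulli computation together with $1-e^{-1}\ge 1/2$ dominates), and applies a maximal inequality for nonnegative supermartingales. One cosmetic slip: the motivation in Step 2 is reversed. With your bookkeeping the requirement is $\lambda\ge 1$, so that $\lambda\log(1/\delta)\ge\log(1/\delta)$, not $\lambda\le 1$; this is immaterial because you fix $\lambda=1$.
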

\subsection{Self-normalized Bernstein inequality}
We state  the self-normalized Bernstein-type inequality by \citep{Menard2021Fast}.
\begin{lemma}\label{Bernstein}
Let $(Y_t)_{t \in \mathbb{N}^\star}, (w_t)_{t \in \mathbb{N}^\star}$ be two sequences of random variables adapted to a filtration $(\mathcal{F}_t)_{t \in \mathbb{N}}$. We assume that the weights are in the unit interval $w_t \in [0, 1]$ and predictable, i.e. $\mathcal{F}_{t-1}$ measurable. We also assume that the random variables $Y_t$ are bounded $|Y_t| \leq b$ and centered $\mathbb{E}[Y_t | \mathcal{F}_{t-1}] = 0$. Consider the following quantities

$$
S_t \triangleq \sum_{s=1}^t w_s Y_s, \quad V_t \triangleq \sum_{s=1}^t w_s^2 \cdot \mathbb{E}[Y_s^2 | \mathcal{F}_{s-1}], \quad \text{and} \quad W_t \triangleq \sum_{s=1}^t w_s
$$

and let $h(x) \triangleq (x+1)\log(x+1) - x$ be the Cramér transform of a Poisson distribution of parameter 1. Then For all $\delta > 0$:

$$
\mathbb{P}\left( \exists t \geq 1, (V_t/b^2 + 1) h \left( \frac{b|S_t|}{V_t + b^2} \right) \geq \log(1/\delta) + \log(4e(2t+1)) \right) \leq \delta.
$$

\textit{The previous inequality can be weakened to obtain a more explicit bound: with probability at least $1 - \delta$, for all $t \geq 1$,}

$$
|S_t| \leq \sqrt{2 V_t \log(4e(2t+1)/\delta)} + 3b \log(4e(2t+1)/\delta).
$$
\end{lemma}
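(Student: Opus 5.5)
The statement is the self-normalized Bernstein inequality of \citet{Menard2021Fast}. I would prove it by the method of mixtures applied to an exponential supermartingale, and then weaken the Cramér-transform bound into an explicit form.

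\textbf{Step 1: the supermartingale.} For $\lambda\in[0,1/b)$ set
\[
M_t(\lambda)=\exp\bigl(\lambda S_t-\psi(\lambda)V_t\bigr),\qquad \psi(\lambda)=\frac{e^{\lambda b}-1-\lambda b}{b^2}.
\]
Since $|w_sY_s|\le b$, $\mathbb{E}[w_sY_s\mid\mathcal F_{s-1}]=0$, and $w_s$ is predictable, the Bennett inequality $e^{x}\le 1+x+x^2\,\psi(\lambda)\lambda^{-2}$ for $x=\lambda w_sY_s\le\lambda b$ gives
\[
\mathbb{E}[e^{\lambda w_sY_s}\mid\mathcal F_{s-1}]\le \exp\bigl(\psi(\lambda)w_s^2\,\mathbb{E}[Y_s^2\mid\mathcal F_{s-1}]\bigr).
\]
Hence $M_t(\lambda)$ is a nonnegative supermartingale with $M_0=1$. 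The same holds with $-Y_s$ in place of $Y_s$, which handles $|S_t|$.

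\textbf{Step 2: uniformity in $\lambda$ and in $t$.} This is the main obstacle, because the optimal $\lambda$ depends on the random $V_t$. I would first try a peeling argument. Partition the possible values of $V_t/b^2\in[0,t]$ into geometric slices $[(1+\eta)^k-1,(1+\eta)^{k+1}-1)$, of which there are about $\log(2t+1)$ up to time $t$. On each slice, fix the $\lambda$ that optimizes the Cramér bound at the slice endpoint. Ville's maximal inequality then controls each slice with probability $\delta_k$. The Poisson Cramér transform $h$ is exactly the Legendre transform of $\psi$, which gives
\[
\sup_\lambda\bigl(\lambda S-\psi(\lambda)(V+b^2)\bigr)=(V/b^2+1)\,h\bigl(b|S|/(V+b^2)\bigr).
\]
The $+b^2$ is what makes the slicing start at $0$ and lets us absorb the approximation error from replacing $V$ by the slice endpoint. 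Allocating $\delta_k$ and union-bounding over slices and over the sign produces the term $\log(4e(2t+1))$.

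An alternative I would keep in reserve is a mixture over $\lambda$ with a Gamma-type prior. That route can also yield a $\log(t)$-type penalty, but matching the exact constant $4e(2t+1)$ is easiest by following \citet{Menard2021Fast}'s own peeling, which I would reproduce.

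\textbf{Step 3: explicit form.} Write $L=\log(4e(2t+1)/\delta)$ and $x=b|S_t|/(V_t+b^2)$. Use the standard inequality $h(x)\ge \dfrac{x^2}{2(1+x/3)}$. The event complementary to the one in the statement gives
\[
\frac{|S_t|^2}{2\bigl(V_t+b^2+b|S_t|/3\bigr)}\le L.
\]
Solving this quadratic in $|S_t|$ yields
\[
|S_t|\le \sqrt{2(V_t+b^2)L}+\tfrac{2}{3}bL\le\sqrt{2V_tL}+\sqrt2\,bL+\tfrac23 bL\le \sqrt{2V_tL}+3bL,
\]
where the middle step uses $\sqrt{a+c}\le\sqrt a+\sqrt c$ and the last uses $L\ge1$, so that $\sqrt{L}\le L$.
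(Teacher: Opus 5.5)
The paper does not prove this lemma; it imports it verbatim from \citet{Menard2021Fast}, so your reconstruction has to stand on its own. Steps~1 and~3 are sound. One caveat on Step~1: drop the restriction $\lambda<1/b$. Bennett's bound holds for every $\lambda\ge 0$, and the Legendre identity you invoke needs $\lambda^\star=b^{-1}\log\bigl(1+b|S_t|/(V_t+b^2)\bigr)$, which exceeds $1/b$ as soon as $b|S_t|>(e-1)(V_t+b^2)$.

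The gap is Step~2, which is the entire content of the lemma. The peeling you sketch does not deliver the stated form. Freezing one $\lambda_k$ on a slice where $V_t+b^2$ varies by a factor $1+\eta$ costs a constant \emph{fraction} of the threshold, not an additive constant. In the sub-Gaussian regime, on the boundary $\{(V_t/b^2+1)h(\cdot)=L\}$, the frozen exponent $\lambda_kS_t-\psi(\lambda_k)(V_t+b^2)$ is only $L\bigl(1-\Theta(\eta^2)\bigr)$, whatever $\lambda_k$ you pick. The extra $\psi(\lambda_k)b^2$ you gain from the supermartingale containing $V_t$ rather than $V_t+b^2$ is of order $Lb^2/(V_t+b^2)$. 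So it absorbs the loss only while $V_t=O(b^2/\eta^2)$. Beyond that, Ville's inequality bounds each slice only by $e^{-(1-c\eta^2)L}$, and these do not sum to $O(\delta)$ as $\delta\to0$. If you instead let $\eta$ shrink like $L^{-1/2}$, the number of slices grows by a factor $\sqrt{L}$ and the threshold picks up an extra $\tfrac12\log\log(1/\delta)$. Either way you certify something weaker than $\log(1/\delta)+\log(4e(2t+1))$, whose penalty is independent of $\delta$.

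The bookkeeping is also off:
\begin{itemize}
\item Fixed-ratio slices number $O(\log t)$ up to time $t$, so a union over them costs $\log\log t$, not $\log(2t+1)$.
\item Because the event ranges over all $t$, you need a summable allocation over infinitely many slices.
\item You need $t\ge V_t/b^2$ to convert the $t$-dependent threshold into a slice-dependent one, since $V_t$ can stall while $t$ grows.
\end{itemize}

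The mixture route you kept in reserve is the one that works, and it produces exactly this shape of bound.
\begin{itemize}
\item Take $b=1$ by homogeneity and put $\mu=e^{\lambda}\ge 1$, so $M_t(\lambda)=\mu^{S_t+V_t}e^{-(\mu-1)V_t}$.
\item Mix against the probability density $e^{-(\mu-1)}$ on $[1,\infty)$. This gives a nonnegative supermartingale $\bar M_t=\int_1^\infty \mu^{-1}e^{G_t(\mu)}\,d\mu$ with $\bar M_0=1$, where $v=V_t+1$ and $G_t(\mu)=(S_t+v)\log\mu-(\mu-1)v$. The prior is exactly where the $+1$ in $V_t/b^2+1$ comes from.
\item When $S_t\ge 0$, $G_t$ is maximized at $\mu_0=1+S_t/v$, with $G_t(\mu_0)=v\,h(S_t/v)$.
\item On $[\mu_0,\infty)$ we have $G_t''\ge -v^2/(S_t+v)$. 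Integrating only over $[\mu_0,\mu_0+\sqrt{S_t+v}/v]$ therefore gives $\bar M_t\ge e^{v h(S_t/v)-1/2}/\bigl(1+\sqrt{S_t+v}\bigr)$.
\item Apply Ville's inequality at level $2/\delta$, use $S_t+v\le 2t+1$, repeat the argument for $-Y_s$, and note $2e^{1/2}(1+\sqrt{2t+1})\le 4e(2t+1)$. This yields the first claim.
\end{itemize}
Your Step~3 then gives the explicit form.
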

\subsection{KL-Bernstein inequality}
We state a KL-Bernstein inequality from \citep{talebi18a}:
\begin{lemma}\label{KLBernstein}
 Let $p, q \in \Sigma_S$, where $\Sigma_S$ denotes the probability simplex of dimension $S-1$. For all $\alpha > 0$, for all functions $f$ defined on $\mathcal{S}$ with $0 \le f(s) \le b$, for all $s \in \mathcal{S}$, if $\mathrm{KL}(p, q) \le \alpha$ then
$$
|pf - qf| \le \sqrt{2\mathrm{Var}_q(f)\alpha} + \frac{2}{3}b\alpha
$$
\end{lemma}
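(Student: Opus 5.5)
This is the KL-Bernstein transportation inequality: for $0\le f\le b$ and $\mathrm{KL}(p,q)\le\alpha$, we must show $|pf-qf|\le\sqrt{2\mathrm{Var}_q(f)\alpha}+\tfrac{2}{3}b\alpha$. The plan goes through the Donsker--Varadhan variational formula and then optimizes a Bernstein-type bound on the log-moment generating function. By the variational representation of the KL divergence, for every $\lambda\in\mathbb{R}$,
\begin{equation*}
\lambda\,(pf-qf) \le \mathrm{KL}(p,q) + \log \mathbb{E}_{q}\big[e^{\lambda (f-qf)}\big] \le \alpha + \log \mathbb{E}_{q}\big[e^{\lambda (f-qf)}\big].
\end{equation*}
Running the argument with $\lambda>0$ and with $\lambda<0$ (equivalently, replacing $f$ by $b-f$, which has the same range and the same variance) handles both signs of $pf-qf$. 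It therefore suffices to bound $pf-qf$ from above.

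Next I control the centered MGF under $q$. The variable $X=f-qf$ satisfies $X\le b$ (in fact $X\le b-qf\le b$) and $\mathbb{E}_q X=0$, with variance $\sigma^2=\mathrm{Var}_q(f)$. The standard Bernstein MGF bound for variables bounded above by $b$ gives, for $0\le\lambda<3/b$,
\begin{equation*}
\log\mathbb{E}_q e^{\lambda X}\le \frac{\lambda^2\sigma^2}{2(1-\lambda b/3)}.
\end{equation*}
This follows from $e^{x}\le 1+x+\tfrac{x^2}{2}\sum_{k\ge0}(x_+/3)^k$, which holds for $x\le 3$, combined with $\log(1+y)\le y$. Dividing through by $\lambda$ then yields
\begin{equation*}
pf-qf\le \inf_{0<\lambda<3/b}\Big\{\frac{\alpha}{\lambda}+\frac{\lambda\sigma^2}{2(1-\lambda b/3)}\Big\}.
\end{equation*}

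Finally I optimize over $\lambda$. This is the same computation that turns Bernstein's tail into its explicit form: the infimum equals $\sqrt{2\sigma^2\alpha}+\tfrac{b}{3}\alpha$. One way to see this is the substitution $\lambda=\big(\tfrac{b}{3}+\sigma/\sqrt{2\alpha}\big)^{-1}$, which gives exactly $\sigma\sqrt{2\alpha}+\tfrac{b\alpha}{3}$. That is already stronger than the claimed constant $\tfrac{2}{3}b\alpha$, so there is slack and the stated lemma follows.

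The main obstacle is the lower tail. There one needs an MGF bound for $-X$, and $-X\le qf\le b$ holds as well, so the same Bernstein bound applies with range $b$ and the variance unchanged. The only point requiring care is confirming that the one-sided range condition (rather than $|X|\le b$) suffices for the MGF inequality, which it does by the elementary exponential bound above.
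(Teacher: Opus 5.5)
Your proof is correct. The paper gives no proof of this lemma; it is stated as a black box cited from the literature, so your argument supplies a self-contained justification. Each step checks out:
\begin{itemize}
\item the change-of-measure inequality $\lambda(pf-qf)\le \mathrm{KL}(p,q)+\log q\,e^{\lambda(f-qf)}$, valid since $\mathrm{KL}(p,q)<\infty$ forces $p\ll q$;
\item the reduction of the lower tail to $b-f$, which has the same range and the same $q$-variance;
\item the one-sided bound $e^{x}\le 1+x+\frac{x^2}{2(1-x_+/3)}$ for $x<3$, obtained via $k!\ge 2\cdot 3^{k-2}$;
\item the minimization over $\lambda$: with the change of variable $u=\lambda/(1-\lambda b/3)\in(0,\infty)$ the objective becomes $\frac{b\alpha}{3}+\frac{\alpha}{u}+\frac{u\sigma^2}{2}$, whose minimum is exactly $\frac{b\alpha}{3}+\sqrt{2\sigma^2\alpha}$.
\end{itemize}

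What your route buys is the constant. The $\frac{2}{3}b\alpha$ in the statement is what one obtains by instead plugging in the Bernstein choice $\lambda=x/(\sigma^2+bx/3)$ with $x=pf-qf$. That yields the quadratic inequality $x^2\le 2\sigma^2\alpha+\frac{2}{3}b\alpha x$, and solving it crudely gives $\frac{2}{3}b\alpha$. Your exact sub-gamma inversion gives $\frac{1}{3}b\alpha$ instead, so the lemma holds with room to spare.

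The only loose end is the degenerate case $\mathrm{Var}_q(f)=0$. There your substitution lands on $\lambda=3/b$, outside the admissible interval $(0,3/b)$. You can either let $\lambda\uparrow 3/b$ in the infimum, or note directly that $f$ is then constant on the support of $q$, which contains that of $p$, so $pf=qf$.
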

where we use the expectation operator defined as $pf \triangleq \mathbb{E}_{s \sim p}f(s)$ and the variance operator defined as $\mathrm{Var}_p(f) \triangleq \mathbb{E}_{s \sim p}(f(s) - \mathbb{E}_{s' \sim p}f(s'))^2 = p(f - pf)^2$.
\section{Technical results}\label{Technical_results}
The next lemma introduces the entropic variance under a fixed policy $\pi$, defined as the conditional variance of the exponentiated return $e^{\beta G_h}$ around its entropic value $e^{\beta Q_h^\pi}$. It shows that $\sigma Q_h^\pi$ satisfies a Bellman-style recursion: for each step $h$ and state--action pair $(s,a)$, $\sigma Q_h^\pi(s,a)$ decomposes into (i) the variance under the next-state transition $p_h(\cdot\mid s,a)$ of $e^{\beta(r_h(s,a)+V_{h+1}^\pi(s'))}$ and (ii) the expected next-step entropic variance scaled by $e^{2\beta r_h(s,a)}$ with terminal condition $\sigma_{H+1}^\pi V_h^\pi=0$.
\begin{lemma}[Entropic variance recursion]\label{entropicvariance}
Fix a (deterministic) policy $\pi$.$$\sigma Q_h^\pi(s,a) = \mathbb{E}^\pi \Bigg[ \Bigg(e^{\beta R_h^\pi(s,a)} - e^{\beta Q_h^\pi(s,a)}\Bigg)^2\quad \Bigg| S_h = s,A_h=a\Bigg] $$
    with terminal condition $\sigma_{H+1}^\pi(s,a) = 0$ and:
    $$\sigma V_h^\pi(s) = \sigma Q_h^\pi(s,\pi(s))$$

Then, for every $h\in\{1,...,H\}$ and $(s,a)\in\mathcal S\times\mathcal A$,
\begin{equation}\label{entropicvarianceequation}
\sigma Q_h^\pi(s,a)
=
e^{2\beta r_h(s,a)} \,\operatorname{Var}_{S'\sim p_h(\cdot\mid s,a)}\Big(Z_{h+1}^\pi(S')\Big)
+
e^{2\beta r_h(s,a)} \,(p_h\,\sigma V_{h+1}^\pi)(s,a)
\end{equation}
\end{lemma}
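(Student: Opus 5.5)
This is the law of total variance, applied in the exponential space after peeling off the deterministic first-step reward. Fix $(h,s,a)$ and condition on $S_h=s$, $A_h=a$. Since $r_h(s,a)$ is deterministic,
\[
e^{\beta R_h^\pi}=e^{\beta r_h(s,a)}\,e^{\beta R_{h+1}^\pi},
\]
where $R_{h+1}^\pi$ is the return from step $h+1$ onward, starting at $S'=S_{h+1}\sim p_h(\cdot\mid s,a)$ and following $\pi$. By definition of the entropic value and the exponential Bellman equation,
\[
e^{\beta Q_h^\pi(s,a)}=e^{\beta r_h(s,a)}\,(p_h Z_{h+1}^\pi)(s,a)=\mathbb{E}\big[e^{\beta R_h^\pi}\,\big|\,S_h=s,A_h=a\big].
\]
Hence $\sigma Q_h^\pi(s,a)$ is exactly the conditional variance of $e^{\beta R_h^\pi}$, and it equals $e^{2\beta r_h(s,a)}$ times the conditional variance of $X\triangleq e^{\beta R_{h+1}^\pi}$.

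Next, decompose by conditioning on $S'$ via the tower property. Writing $X-\mathbb{E}X=(X-\mathbb{E}[X\mid S'])+(\mathbb{E}[X\mid S']-\mathbb{E}X)$, the cross term vanishes after conditioning on $S'$. By the Markov property, $\mathbb{E}[X\mid S']=Z_{h+1}^\pi(S')$, and the return after step $h+1$ depends on the past only through $S'$ (the policy is deterministic and Markov, so $A_{h+1}=\pi_{h+1}(S')$). This gives:
\begin{itemize}
\item the within-$S'$ term, $\mathbb{E}\big[(X-Z_{h+1}^\pi(S'))^2\mid S'\big]=\sigma V_{h+1}^\pi(S')=\sigma Q_{h+1}^\pi\big(S',\pi_{h+1}(S')\big)$, whose average over $S'$ is $(p_h\,\sigma V_{h+1}^\pi)(s,a)$;
\item the between term, $\mathbb{E}\big[(Z_{h+1}^\pi(S')-p_hZ_{h+1}^\pi)^2\big]=\operatorname{Var}_{p_h}(Z_{h+1}^\pi)(s,a)$.
\end{itemize}
Multiplying both terms by $e^{2\beta r_h(s,a)}$ yields the claimed identity~\eqref{entropicvarianceequation}.

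For the base case at $h=H$, $R_{H+1}^\pi=0$, so $Z_{H+1}^\pi\equiv1$ and $\sigma V_{H+1}^\pi=0$. The formula then gives $\sigma Q_H^\pi=0$, which is consistent, since the return is deterministic given $(s,a)$.

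The only delicate point is identifying the within-$S'$ conditional variance with $\sigma V_{h+1}^\pi(S')$, that is, checking that the centering constant matches. This follows from $Z_{h+1}^\pi(s')=e^{\beta V_{h+1}^\pi(s')}=\mathbb{E}\big[e^{\beta R_{h+1}^\pi}\mid S_{h+1}=s'\big]$, which is just the definition of $V_{h+1}^\pi$. Everything else is routine algebra.
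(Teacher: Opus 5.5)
Your proposal is correct and follows essentially the same route as the paper. Like the paper, you factor out $e^{2\beta r_h(s,a)}$ using the deterministic reward, apply the law of total variance conditioning on $S_{h+1}$, and use the Markov property together with $Z_{h+1}^\pi(s')=\mathbb{E}\big[e^{\beta R_{h+1}^\pi}\mid S_{h+1}=s'\big]$ to identify the two pieces as $(p_h\,\sigma V_{h+1}^\pi)(s,a)$ and $\operatorname{Var}_{p_h}(Z_{h+1}^\pi)(s,a)$.
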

\begin{proof}
Fix $h\in\{1,...,H\}$ and $(s,a)$.
By definition of $U_h^\pi=e^{\beta Q_h^\pi}$ (exponential entropic $Q$), we have
\[
U_h^\pi(s,a)
=
\mathbb{E}^\pi\left[e^{\beta R_h^\pi}\mid S_h=s,\ A_h=a\right],
\]
hence $\sigma Q_h^\pi(s,a)=\operatorname{Var}\left(e^{\beta R_h^\pi}\mid S_h=s, A_h=a\right)$.

Since $r_h(s,a)$ is deterministic given $(s_h,a_h)=(s,a)$,
\[
e^{\beta R_h^\pi}=e^{\beta r_h(s,a)}\,e^{\beta R_{h+1}^\pi}
\quad\Rightarrow\quad
\sigma Q_h^\pi(s,a)
=
e^{2\beta r_h(s,a)}\operatorname{Var}\left(e^{\beta R_{h+1}^\pi}\mid s_h=s,\ a_h=a\right)
\]

Apply the law of total variance w.r.t. $S_{h+1}$:
\begin{align*}
\operatorname{Var}\left(e^{\beta R^\pi_{h+1}}\mid s_h=s,\ a_h=a\right)
&=
\mathbb{E}\left[\operatorname{Var}\left(e^{\beta R^\pi_{h+1}}\mid S_{h+1} = s_{h+1}\right)\mid S_h=s,\ A_h=a\right]\\
&\quad+
\operatorname{Var}\left(\mathbb{E}\left[e^{\beta R^\pi_{h+1}}\mid S_{h+1} = s_{h+1}\right]\mid S_h=s,\ A_h=a\right)
\end{align*}

For the first term, conditioning on $S_{h+1}=s'$ and following $\pi$ thereafter gives
$$
\operatorname{Var}\left(e^{\beta R^\pi_{h+1}}\mid S_{h+1}=s'\right)=\sigma V_{h+1}^\pi(s')
$$
so
$$
\mathbb{E}\left[\operatorname{Var}\left(e^{\beta R^\pi_{h+1}}\mid S_{h+1}\right)\mid S_h=s,\ A_h=a\right]
=
\mathbb{E}_{S'\sim p_h(\cdot\mid s,a)}\left[\sigma V_{h+1}^\pi(S')\right]
=
(p_h\,\sigma V_{h+1}^\pi)(s,a)
$$

For the second term, by definition of $Z_{h+1}^\pi(s')=e^{\beta V_{h+1}^\pi(s')}
=\mathbb{E}^\pi[e^{\beta G_{h+1}}\mid s_{h+1}=s']$,
$$
\operatorname{Var}\left(\mathbb{E}\left[e^{\beta G_{h+1}}\mid S_{h+1}\right]\mid S_h=s,\ A_h=a\right)
=
\operatorname{Var}_{S'\sim p_h(\cdot\mid s,a)}\Big(Z_{h+1}^\pi(S')\Big)
$$

Multiplying by $e^{2\beta r_h(s,a)}$ yields
$$
\sigma Q_h^\pi(s,a)
=
e^{2\beta r_h(s,a)} \operatorname{Var}_{S'\sim p_h(\cdot\mid s,a)}\Big(Z_{h+1}^\pi(S')\Big)
+
e^{2\beta r_h(s,a)} (p_h\,\sigma V_{h+1}^\pi)(s,a)
$$
as claimed.
\end{proof}
\begin{lemma}\label{bound}[A normalized variance bound for an exponential transform]
Let $f:\mathcal X\to[0,R]$ and let $\beta\in\mathbb R$. Define
$$ Y = e^{\beta f(X)}$$
Set
$$
m = e^{\min\{0,\beta R\}}=\min\{1,e^{\beta R}\},
\qquad
M = e^{\max\{0,\beta R\}}=\max\{1,e^{\beta R}\}.
$$
Then $Y\in[m,M]$ a.s., $\mu\in[m,M]$, and
\begin{align*}
\frac{\operatorname{Var}(Y)}{\mu^2}
 \leq \frac{(e^{|\beta|R}-1)^2}{4e^{|\beta|R}} 
\end{align*}
\end{lemma}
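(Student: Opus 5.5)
The proof has two parts. First we locate $Y$, then we bound the variance by a Bhatia--Davis argument and optimize over the mean. Since $f(X)\in[0,R]$, the map $x\mapsto e^{\beta x}$ is monotone on $[0,R]$ (increasing if $\beta>0$, decreasing if $\beta<0$). So $Y=e^{\beta f(X)}$ lies between $e^{0}=1$ and $e^{\beta R}$, which gives $Y\in[m,M]$ almost surely. Taking expectations gives $\mu=\mathbb{E}[Y]\in[m,M]$. In both signs of $\beta$ the ratio $M/m$ equals $e^{|\beta|R}$, which is what makes the final bound symmetric in $\beta$.

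For the variance we use that $(M-Y)(Y-m)\ge 0$ almost surely. Expanding and taking expectations gives $\mathbb{E}[Y^2]\le (M+m)\mu-Mm$, hence
\[
\operatorname{Var}(Y)\le (M-\mu)(\mu-m).
\]
Dividing by $\mu^2$, it remains to maximize
\[
\phi(\mu)=\frac{(M-\mu)(\mu-m)}{\mu^2}=-1+\frac{M+m}{\mu}-\frac{Mm}{\mu^2}
\]
over $\mu\in[m,M]$. In the variable $x=1/\mu$ this is a concave quadratic, maximized at $x=\frac{M+m}{2Mm}$, i.e. $\mu^\star=\frac{2Mm}{M+m}$, the harmonic mean, which indeed lies in $[m,M]$. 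Plugging in gives
\[
\phi(\mu^\star)=\frac{(M+m)^2}{4Mm}-1=\frac{(M-m)^2}{4Mm}.
\]
Writing $M/m=e^{|\beta|R}$, this becomes $\frac{(e^{|\beta|R}-1)^2}{4e^{|\beta|R}}$, the claimed bound.

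The only delicate step is the optimization. The natural worst case in the numerator is the midpoint $\mu=(M+m)/2$, but normalizing by $\mu^2$ moves the maximizer to the harmonic mean, so one must optimize in $1/\mu$ rather than in $\mu$. Everything else is routine.

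Note that this gives the factor $\tfrac14$, which is sharper than what the main text uses. When the lemma is applied with $R=G_{\max}(\mathcal{M})$ and $f=R_1^\pi$ (a return in $[0,G_{\max}]$), the bound stated earlier without the $\tfrac14$ follows a fortiori.
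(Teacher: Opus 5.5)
Your proposal is correct and follows the paper's proof: the Bhatia--Davis bound $\operatorname{Var}(Y)\le (M-\mu)(\mu-m)$, which you derive directly from $(M-Y)(Y-m)\ge 0$, followed by maximizing $(M-\mu)(\mu-m)/\mu^2$ at the harmonic mean $\mu^\star=2Mm/(M+m)$ to obtain $(M-m)^2/(4Mm)=(e^{|\beta|R}-1)^2/(4e^{|\beta|R})$. The only difference is cosmetic: you optimize via the concave quadratic in $1/\mu$, whereas the paper computes the derivative in $\mu$.
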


\begin{proof}
If $\beta=0$ then $Y\equiv 1$ and $\operatorname{Var}=0$, so assume $\beta\neq 0$.
Since $f(X)\in[0,R]$, we have $\beta f(X)\in[\min\{0,\beta R\},\max\{0,\beta R\}]$, hence
$$
m \le Y=e^{\beta f(X)} \le M \quad\text{a.s.}
$$
and therefore $\mu=\mathbb E[Y]\in[m,M]$.

By the Bhatia--Davis inequality: 
$$
\operatorname{Var}(Y) \leq (M-\mu)(\mu-m)
$$
Dividing by $\mu^2>0$ yields 
$$
\frac{\operatorname{Var}(Y)}{\mu^2} \leq \frac{(M-\mu)(\mu-m)}{\mu^2}
$$
Now consider
$$
g(\mu)=\frac{(M-\mu)(\mu-m)}{\mu^2}, \qquad \mu\in[m,M]
$$
Rewrite
$$
g(\mu)=\frac{M+m}{\mu}-\frac{Mm}{\mu^2}-1,
\qquad
g'(\mu)= -\frac{M+m}{\mu^2}+\frac{2Mm}{\mu^3}
=\frac{2Mm-(M+m)\mu}{\mu^3}
$$
Thus $g'(\mu)=0$ iff $\mu^\star=\frac{2Mm}{M+m}\in[m,M]$, and $g$ increases on
$[m,\mu^\star]$ and decreases on $[\mu^\star,M]$. Hence the maximum is attained
at $\mu^\star$. Substituting gives
$$
g(\mu^\star)
=\frac{\bigl(M-\frac{2Mm}{M+m}\bigr)\bigl(\frac{2Mm}{M+m}-m\bigr)}
{\bigl(\frac{2Mm}{M+m}\bigr)^2}
=\frac{(M-m)^2}{4Mm}
$$
so for all $\mu\in[m,M]$
$$
\frac{(M-\mu)(\mu-m)}{\mu^2}\le \frac{(M-m)^2}{4Mm}
$$
Finally, since $M/m=e^{|\beta|R}\ge 1$, we have
$$
\frac{(M-m)^2}{4Mm}=\frac{\left(\frac{M}{m}-1\right)^2}{4\frac{M}{m}}
=\frac{(e^{|\beta|R}-1)^2}{4e^{|\beta|R}}
$$
\end{proof}
We state lemma 11 and 12 from \citep{Menard2021Fast} that are used for the variance transportation:
\begin{lemma}\label{KL transportation}
Let $p, q \in \Sigma_S$ and $f$ is a function defined on $\mathcal{S}$ such that $0 \le f(s) \le b$ for all $s \in \mathcal{S}$. If $\mathrm{KL}(p, q) \le \alpha$ then
\begin{align*}
\operatorname{Var}_q(f) &\le 2\mathrm{Var}_p(f) + 4b^2\alpha \quad \text{and} \\
\operatorname{Var}_p(f) &\le 2\mathrm{Var}_q(f) + 4b^2\alpha
\end{align*}
\end{lemma}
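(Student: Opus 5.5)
The plan is to reduce both inequalities to the KL-Bernstein inequality (Lemma~\ref{KLBernstein}), applied not to $f$ itself but to a squared, centered version of $f$. The starting point is that a variance is the smallest mean squared deviation from a constant. Hence for any constant $c$,
\[
\operatorname{Var}_q(f)\le q(f-c)^2 .
\]
I will pick $c$ to be the mean under the \emph{other} distribution. The resulting square is a nonnegative bounded function whose expectations under $p$ and $q$ can be compared via Lemma~\ref{KLBernstein}.

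\textbf{First inequality.} Set $g=(f-pf)^2$. Since $f$ and $pf$ both lie in $[0,b]$, we have $0\le g\le b^2$. Then $\operatorname{Var}_q(f)\le qg$ and $pg=\operatorname{Var}_p(f)$. Lemma~\ref{KLBernstein} applied to $g$ with range $b^2$ gives
\[
qg\le pg+\sqrt{2\operatorname{Var}_q(g)\alpha}+\tfrac23 b^2\alpha .
\]
The key step is to bound the variance of $g$ by its mean:
\[
\operatorname{Var}_q(g)\le q g^2\le b^2\, qg ,
\]
which holds because $0\le g\le b^2$. This turns the inequality into a self-bounding one,
\[
qg\le pg+\sqrt{2b^2\alpha\, qg}+\tfrac23 b^2\alpha .
\]
Using $\sqrt{2b^2\alpha\,x}\le \tfrac{x}{2}+b^2\alpha$ and absorbing $x/2$ with $x=qg$ into the left side yields
\[
qg\le 2pg+\tfrac{10}{3}b^2\alpha\le 2\operatorname{Var}_p(f)+4b^2\alpha .
\]

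\textbf{Second inequality.} Here I use $g'=(f-qf)^2\in[0,b^2]$. Then $\operatorname{Var}_p(f)\le pg'$ and $qg'=\operatorname{Var}_q(f)$. Lemma~\ref{KLBernstein} again involves the variance under $q$, and the same bound $\operatorname{Var}_q(g')\le b^2 qg'$ applies. This gives
\[
pg'\le qg'+\sqrt{2b^2\alpha\, qg'}+\tfrac23 b^2\alpha .
\]
Applying $\sqrt{2b^2\alpha\,x}\le x+\tfrac12 b^2\alpha$ with $x=qg'$, I get
\[
pg'\le 2qg'+\tfrac76 b^2\alpha\le 2\operatorname{Var}_q(f)+4b^2\alpha .
\]

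\textbf{Main obstacle.} The only delicate point is that Lemma~\ref{KLBernstein} is stated with the variance under $q$, the second argument of the KL divergence. In both directions I need to bound $\operatorname{Var}_q$ of the squared function by its $q$-mean, which is exactly the quantity being controlled or the target quantity. This is what makes the self-bounding (AM--GM absorption) work, and it is why the centering constant must be chosen as above in each direction. The remaining work is bookkeeping of constants, which leaves slack below $4b^2\alpha$.
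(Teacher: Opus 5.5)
Your proof is correct. Each step checks out: centering at the other distribution's mean, applying Lemma~\ref{KLBernstein} to the squared deviation with range $b^2$, bounding $\operatorname{Var}_q(g)\le b^2\,qg$, and absorbing via AM--GM yields the constants $\tfrac{10}{3}$ and $\tfrac{7}{6}$, both below $4$. The paper gives no proof of its own; it imports the result as Lemma 11 of \citet{Menard2021Fast}, and your argument is essentially the standard proof from that source, which makes the lemma self-contained given Lemma~\ref{KLBernstein}.
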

\begin{lemma}\label{transportation} For $p, q \in \Sigma_S$, for $f, g$ two functions defined on $\mathcal{S}$ such that $0 \le g(s), f(s) \le b$ for all $s \in \mathcal{S}$, we have that
\begin{align*}
\mathrm{Var}_p(f) &\le 2\mathrm{Var}_p(g) + 2bp|f - g| \quad \text{and} \\
\mathrm{Var}_q(f) &\le \mathrm{Var}_p(f) + 3b^2\|p - q\|_1,
\end{align*}
where we denote the absolute operator by $|f|(s) = |f(s)|$ for all $s \in \mathcal{S}$.
\end{lemma}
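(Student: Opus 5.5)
The statement to prove is the last one: Lemma~\ref{transportation}. It has two parts: $\mathrm{Var}_p(f) \le 2\mathrm{Var}_p(g) + 2b\,p|f-g|$, and $\mathrm{Var}_q(f) \le \mathrm{Var}_p(f) + 3b^2\|p-q\|_1$. Both are elementary inequalities for bounded functions on a finite set, and I would prove them directly, without any concentration argument.

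For the first inequality, I would use that the variance is a minimum of mean squared deviations: $\mathrm{Var}_p(f) = \min_c p(f-c)^2$. Taking $c = pg$ and writing $f - pg = (f-g) + (g - pg)$, the inequality $(x+y)^2 \le 2x^2 + 2y^2$ gives
\[
\mathrm{Var}_p(f) \le p(f-pg)^2 \le 2\,p(f-g)^2 + 2\,\mathrm{Var}_p(g).
\]
Since $0 \le f, g \le b$, we have $|f-g| \le b$ pointwise, so $(f-g)^2 \le b|f-g|$. Hence $2p(f-g)^2 \le 2b\,p|f-g|$, which is the claim.

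For the second inequality, I would write $\mathrm{Var}_q(f) = q f^2 - (qf)^2$ and compare it term by term with $p f^2 - (pf)^2$.
\begin{itemize}
\item For the first term, $|qf^2 - pf^2| \le \|f^2\|_\infty \|p-q\|_1 \le b^2\|p-q\|_1$.
\item For the second term, $|(qf)^2 - (pf)^2| = |qf - pf|\,(qf + pf) \le b\|p-q\|_1 \cdot 2b = 2b^2\|p-q\|_1$, using $|qf - pf| \le \|f\|_\infty\|p-q\|_1$ and $0 \le pf, qf \le b$.
\end{itemize}
Adding the two bounds gives $\mathrm{Var}_q(f) \le \mathrm{Var}_p(f) + 3b^2\|p-q\|_1$.

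I expect no real obstacle. The only points needing care are nonnegativity, so that $qf + pf \le 2b$ and $(f-g)^2 \le b|f-g|$ hold, and the choice of centering constant $pg$ in the first part. A sharper bound $|qh - ph| \le \tfrac{1}{2}\|p-q\|_1 \cdot \mathrm{range}(h)$ is available but unnecessary for the stated constants.
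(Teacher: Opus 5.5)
Your proof is correct. Centering at $pg$ and using $(x+y)^2\le 2x^2+2y^2$ together with $(f-g)^2\le b|f-g|$ gives the first inequality, and comparing $qf^2-(qf)^2$ with $pf^2-(pf)^2$ term by term gives exactly the constant $3b^2$ in the second. The paper gives no proof of its own here (it imports the statement as a lemma from \citet{Menard2021Fast}), and your argument is the standard elementary one behind that result.
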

We state the pseudo-counts lemma 7 that allows to go from counts to their mean the pseudo-counts and lemma 8 a standard inequality from \citep{Menard2021Fast}
\begin{lemma}\label{pseudo-counts}
On event $\mathcal{E}^{\mathrm{cnt}}$, for any $\alpha(\cdot,\delta)$ such that $x \mapsto \beta(\delta, x)/x$ is non-increasing for $x \ge 1$, $x \mapsto \beta(x,\delta)$ is non-decreasing $\forall h \in \{1,...,H\}, (s, a) \in \mathcal{S} \times \mathcal{A}$.
$$
\forall t \in \mathbb{N}^*, \quad \frac{\alpha(n_h^t(s,a),\delta)}{n_h^t(s,a)} \wedge 1 \le 4 \frac{\alpha(\bar{n}_h^t(s,a)),\delta}{\bar{n}_h^t(s,a) \vee 1}
$$
\end{lemma}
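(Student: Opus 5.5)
We would prove the inequality pointwise, for fixed $h$, $(s,a)$ and $t$, on $\mathcal{E}^{\mathrm{cnt}}$. Throughout we read the hypotheses as applying to $\alpha$; the appearance of $\beta$ in the statement appears to be a typo. So we assume $x\mapsto \alpha(x,\delta)/x$ is non-increasing on $x\ge 1$ and $x\mapsto\alpha(x,\delta)$ is non-decreasing. We also use that the concrete rates of \eqref{eq:bonuses} dominate the count rate: $\alpha(x,\delta)\ge \alpha^{\star}(x,\delta)\ge \alpha^{\mathrm{cnt}}(\delta)$ and $\alpha(x,\delta)\ge \log(8e)\ge 1$, because both contain the term $\log(3SAH/\delta)$. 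Write $n=n_h^t(s,a)$ and $\bar n=\bar n_h^t(s,a)$. The proof splits on whether the pseudo-count is small or large relative to $4\alpha^{\mathrm{cnt}}(\delta)$.

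\textbf{Case 1: $\bar n\vee 1\le 4\max\{\alpha^{\mathrm{cnt}}(\delta),1\}$.} Here the left-hand side is at most $1$, so it suffices to show the right-hand side is at least $1$. If $\bar n\le 1$, the right-hand side equals $4\alpha(\bar n,\delta)\ge 4$. Otherwise, using $\alpha(\bar n,\delta)\ge\max\{\alpha^{\mathrm{cnt}}(\delta),1\}$, we get
$$4\frac{\alpha(\bar n,\delta)}{\bar n\vee 1}\ \ge\ \frac{4\max\{\alpha^{\mathrm{cnt}}(\delta),1\}}{4\max\{\alpha^{\mathrm{cnt}}(\delta),1\}}=1 .$$

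\textbf{Case 2: $\bar n> 4\max\{\alpha^{\mathrm{cnt}}(\delta),1\}$.} On $\mathcal{E}^{\mathrm{cnt}}$ we have $n\ge \tfrac12\bar n-\alpha^{\mathrm{cnt}}(\delta)> \tfrac12\bar n-\tfrac14\bar n=\tfrac14\bar n\ge 1$. Since $n$ and $\bar n/4$ both lie in $[1,\infty)$, monotonicity of $x\mapsto\alpha(x,\delta)/x$ gives $\alpha(n,\delta)/n\le \alpha(\bar n/4,\delta)/(\bar n/4)$. Monotonicity of $\alpha$ then gives $\alpha(\bar n/4,\delta)\le \alpha(\bar n,\delta)$. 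Combining,
$$\frac{\alpha(n,\delta)}{n}\wedge 1\ \le\ 4\,\frac{\alpha(\bar n,\delta)}{\bar n}=4\,\frac{\alpha(\bar n,\delta)}{\bar n\vee 1}.$$

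The main obstacle is bookkeeping rather than substance. We must check that the thresholds line up: $\alpha\ge\alpha^{\mathrm{cnt}}$, and $n\ge 1$ so that $n$ lies in the domain where $\alpha(x)/x$ is monotone. For the explicit rates in \eqref{eq:bonuses}, the map $x\mapsto(c+S\log(8e(x+1)))/x$ has derivative proportional to $S x/(x+1)-c-S\log(8e(x+1))<0$, so it is non-increasing on $x\ge1$; the same computation with $S$ replaced by $1$ covers $\alpha^{\star}$.
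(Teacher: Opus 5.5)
Your proof is correct and follows the standard argument. The paper gives no proof of its own: it imports this result as Lemma~7 of \citet{Menard2021Fast}, whose proof uses the same two-case split on whether $\bar n_h^t(s,a)$ exceeds $4\alpha^{\mathrm{cnt}}(\delta)$. In the large case it combines $n_h^t(s,a)\ge \bar n_h^t(s,a)/4$ with the two monotonicity properties, and in the small case it uses the trivial bound by $1$ together with $\alpha\ge\alpha^{\mathrm{cnt}}$. You explicitly verify that $\alpha,\alpha^\star\ge\alpha^{\mathrm{cnt}}(\delta)$ and $\ge 1$, a hypothesis that the lemma's ``for any $\alpha$'' phrasing leaves implicit but genuinely needs; that check and your $\max\{\alpha^{\mathrm{cnt}}(\delta),1\}$ threshold are exactly the bookkeeping that makes the argument airtight.
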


\begin{lemma}
For $T \in \mathbb{N}^*$ and $(u_t)_{t \in \mathbb{N}^*}$ for a sequence where $u_t \in [0, 1]$ and $U_t \triangleq \sum_{i=1}^t u_i$, we get
$$
\sum_{t=0}^T \frac{u_{t+1}}{U_t \vee 1} \le 4 \log(U_{T+1} + 1).
$$
\end{lemma}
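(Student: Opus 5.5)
The statement is the counting lemma: for $u_t\in[0,1]$ and $U_t=\sum_{i\le t}u_i$ (so $U_0=0$), we want $\sum_{t=0}^T u_{t+1}/(U_t\vee 1)\le 4\log(U_{T+1}+1)$. I would compare each summand to an increment of $\log(1+U)$ along the partial sums. The key observation is that $u_{t+1}\le 1$, which gives $U_{t+1}+1 = U_t+u_{t+1}+1\le U_t+2$. Since $U_t\vee 1\ge (U_t+2)/3$ whenever $U_t\ge 0$, we get the pointwise comparison
\begin{equation*}
\frac{u_{t+1}}{U_t\vee 1}\le \frac{3u_{t+1}}{U_t+2}\le \frac{3u_{t+1}}{U_{t+1}+1}.
\end{equation*}
The constant $3$ here is loose. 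A sharper split (first step $U_t<1$ versus $U_t\ge1$) yields the constant $2$, but anything up to $4$ suffices for the stated bound.

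The second step turns the right-hand side into a telescoping sum. For $x=U_{t+1}+1$ and $y=U_t+1$ we have $y\le x$, and the elementary inequality $(x-y)/x\le \log(x/y)$ (that is, $1-1/z\le \log z$ for $z\ge1$) gives
\begin{equation*}
\frac{u_{t+1}}{U_{t+1}+1}\le \log(U_{t+1}+1)-\log(U_t+1).
\end{equation*}
Summing over $t=0,\dots,T$ telescopes to $\log(U_{T+1}+1)-\log(1)=\log(U_{T+1}+1)$. Multiplying by the constant from the first step (at most $3$, hence at most $4$) gives the claim.

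The only delicate point is the first comparison, specifically the regime $U_t<1$. There the denominator is clipped at $1$, so one must check that $u_{t+1}\le 3u_{t+1}/(U_{t+1}+1)$. This holds because $U_{t+1}+1\le U_t+2< 3$. When $U_t\ge1$, the bound follows from $U_t\ge (U_t+2)/3$, which is equivalent to $U_t\ge1$. No probabilistic input is needed: the lemma is purely deterministic, and it is applied in the sample-complexity proof with $u_t$ equal to visitation probabilities and $U_t$ equal to the pseudo-counts $\bar n_h^t(s,a)$.
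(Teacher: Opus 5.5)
Your proof is correct. The bound $U_t\vee 1\ge (U_t+2)/3\ge (U_{t+1}+1)/3$, followed by $1-1/z\le\log z$ and telescoping, gives the claim with constant $3\le 4$. The paper does not prove this lemma itself but imports it from \citet{Menard2021Fast}, where the standard argument takes the same route: lower-bound the clipped denominator by a constant multiple of $U_{t+1}+1$ using $u_{t+1}\le 1$, then compare with $\int_{U_t}^{U_{t+1}}\frac{dx}{1+x}$.
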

Finally we state lemma 13 from \citep{Menard2021Fast}
\begin{lemma}\label{inequality}
Let $A, B, C, D, E$, and $\alpha$ be positive scalars such that $1 \le B \le E$ and $\alpha \ge e$. If $\tau \ge 0$ satisfies
\begin{equation}
\tau \le C\sqrt{\tau \left(A \log(\alpha\tau) + B \log(\alpha\tau)^2\right)} + D\left(A \log(\alpha\tau) + E \log(\alpha\tau)^2\right) 
\end{equation}
then
$$
\tau \le C^2(A+B)C_1^2 + \left(D + 2\sqrt{D}C\right)(A+E)C_1^2 + 1
$$
where
$$
C_1 = \frac{8}{5}\log\left(11\alpha^2(A+E)(C+D)\right)
$$
\end{lemma}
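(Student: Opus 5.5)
The statement at the end is Lemma~\ref{inequality}: if $\tau \le C\sqrt{\tau(A\log(\alpha\tau)+B\log(\alpha\tau)^2)}+D(A\log(\alpha\tau)+E\log(\alpha\tau)^2)$, then $\tau$ is bounded by the displayed expression with $C_1$. I would prove it by converting the implicit self-referential inequality into an explicit one via a self-bounding argument on the logarithm.

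\textbf{Step 1: a crude bound on $\tau$.} If $\tau\le 1$ there is nothing to prove, so assume $\tau>1$ and write $L=\log(\alpha\tau)\ge 1$, using $\alpha\ge e$. Since $B\le E$ and $L\ge1$, both brackets are at most $(A+E)L^2$. Solving the resulting quadratic in $\sqrt{\tau}$ gives
\[
\sqrt{\tau}\le C\sqrt{(A+E)}\,L+\sqrt{D(A+E)}\,L ,
\]
and hence $\tau\le 2(C^2+D)(A+E)L^2$.

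\textbf{Step 2: a bound on $L$.} Set $x=\alpha\tau$. Step~1 gives $x\le K\log(x)^2$ with $K=2\alpha(C^2+D)(A+E)$. A standard fact says $x\le K\log(x)^2$ implies $\log x\le \tfrac{8}{5}\log(cK)$ for a suitable absolute constant; I would prove it by using $\log y\le \sqrt{y}$-type bounds, applied twice (first $\log x\lesssim \log K+2\log\log x$, then absorb the $\log\log$ term). Since $(C^2+D)\le (C+D)^2$ up to the case split $C,D\gtrless 1$, this yields
\[
L\le C_1=\tfrac{8}{5}\log\!\big(11\alpha^2(A+E)(C+D)\big).
\]
This is the main obstacle. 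The constants $8/5$ and $11$ must come out exactly, and I expect to check them by tracking the inequality $\log x\le \tfrac{4}{5}\log(\ldots)$-style slack carefully.

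\textbf{Step 3: plug back.} With $L\le C_1$, the hypothesis becomes
\[
\tau\le C\sqrt{\tau (A+B)}\,C_1+D(A+E)C_1^2 .
\]
Solving $\tau\le a\sqrt{\tau}+b$ gives $\tau\le a^2+b+a\sqrt b$ (indeed $\sqrt\tau\le (a+\sqrt{a^2+4b})/2$). Bounding $a\sqrt b= C C_1\sqrt{(A+B)}\sqrt{D(A+E)}\,C_1\le 2\sqrt D\,C(A+E)C_1^2$, using $A+B\le A+E$, gives $\tau\le C^2(A+B)C_1^2+(D+2\sqrt D C)(A+E)C_1^2$. Adding $1$ covers the case $\tau\le1$ excluded earlier.
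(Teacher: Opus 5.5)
\textbf{There is a genuine gap in Step 2, and it cannot be closed, because the statement as written is false when $C$ is large.}

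Steps 1 and 3 are correct. The ``standard fact'' in Step 2 is also true: with $y=\sqrt{\alpha\tau}$ and $M=2\sqrt{K}$ you get $y\le M\log y$, hence $\log y\le \log M+\log\log y\le \log M+\tfrac1e\log y$. This gives $\log(\alpha\tau)\le \tfrac{e}{e-1}\log(4K)\le \tfrac85\log\bigl(8\alpha(A+E)(C^2+D)\bigr)$, which is where the constant $8/5\ge e/(e-1)$ comes from.

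The gap is the last move of Step 2, and it is not a matter of tracking constants. Your $K$ contains $C^2$. The case split $C^2+D\le (C+D)^2$ only puts $(C+D)^2$ inside the logarithm, i.e.\ $2\log(C+D)$, while $C_1$ contains $\log(C+D)$ with coefficient one. The surplus $\log(C+D)$ can be absorbed into $\log(11\alpha^2(A+E))$ only when $C$ is at most of order $\alpha$; for instance $8(C^2+D)\le 11\alpha(C+D)$ holds as soon as $C\le 11\alpha/8$. The loss is real, not an artifact of your crude Step 1. When $C$ dominates, the hypothesis admits $\tau\approx C^2\log(\alpha\tau)^2$, so $\log(\alpha\tau)\approx 2\log C$, whereas $C_1\approx \tfrac85\log C$.

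Here is a concrete counterexample. Take $\alpha=e$, $B=E=1$, $A=D=10^{-20}$, $C=10^{100}$ and $\tau=3C^2(\log C)^2\approx 1.59\cdot 10^{205}$.
\begin{itemize}
\item Then $\log(\alpha\tau)\approx 473.5$ and $C\sqrt{\tau}\,\log(\alpha\tau)\approx 1.89\cdot 10^{205}\ge \tau$, so the hypothesis holds.
\item However, $C_1\approx \tfrac85\bigl(\log(11e^2)+100\log 10\bigr)\approx 375.5$.
\item The claimed bound is therefore $C^2(A+B)C_1^2\approx 1.41\cdot 10^{205}$, plus terms of order $10^{95}$, which is less than $\tau$.
\end{itemize}

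What your argument does establish is the same conclusion with $C_1$ replaced by $\tfrac85\log\bigl(8\alpha(A+E)(C^2+D)\bigr)$. In particular it gives the stated $C_1$ under the extra assumption $C\le 11\alpha/8$. In the paper's application (where $2C+D\ge 1$) this only changes the logarithmic factor by a constant, at most a factor $2$. You should state and prove that corrected version rather than the one given. The paper itself offers no proof here; it imports the lemma from M\'enard et al.
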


\end{document}